\pgfplotsset{compat=1.5}
\newtheorem{theorem}{Theorem}[section]
\newtheorem{corollary}[theorem]{Corollary}
\newtheorem{lemma}[theorem]{Lemma}
\newtheorem{definition}[theorem]{Definition}
\newtheorem{observation}[theorem]{Observation}
\newtheorem{hypothesis}[theorem]{Hypothesis}
\newenvironment{proofof}[1]{\begin{trivlist} \item {\bf Proof
#1:~~}}
  {\qed\end{trivlist}}
\newcommand{\namedref}[2]{\hyperref[#2]{#1~\ref*{#2}}}
\newcommand{\thmlab}[1]{\label{thm:#1}}
\newcommand{\thmref}[1]{\namedref{Theorem}{thm:#1}}
\newcommand{\lemlab}[1]{\label{lem:#1}}
\newcommand{\lemref}[1]{\namedref{Lemma}{lem:#1}}
\newcommand{\corlab}[1]{\label{cor:#1}}
\newcommand{\corref}[1]{\namedref{Corollary}{cor:#1}}
\newcommand{\seclab}[1]{\label{sec:#1}}
\newcommand{\secref}[1]{\namedref{Section}{sec:#1}}
\newcommand{\applab}[1]{\label{app:#1}}
\newcommand{\figlab}[1]{\label{fig:#1}}
\newcommand{\figref}[1]{\namedref{Figure}{fig:#1}}
\newcommand{\alglab}[1]{\label{alg:#1}}
\newcommand{\algref}[1]{\namedref{Algorithm}{alg:#1}}
\newcommand{\eqnlab}[1]{\label{eq:#1}}
\newcommand{\eqnref}[1]{\namedref{Equation}{eq:#1}}
\newcommand{\obslab}[1]{\label{obs:#1}}
\newcommand{\hypolab}[1]{\label{hypo:#1}}
\def \Subspace    {\mdef{\mathsf{Subspace}}}
\def \OPT    {\mdef{\mathsf{OPT}}}
\newcommand{\PPr}[1]{\ensuremath{\mathbf{Pr}\left[#1\right]}}
\newcommand{\Ex}[1]{\ensuremath{\mathbb{E}\left[#1\right]}}
\renewcommand{\O}[1]{\ensuremath{\mathcal{O}\left(#1\right)}}
\newcommand{\tO}[1]{\ensuremath{\tilde{\mathcal{O}}\left(#1\right)}}
\newcommand{\eps}{\varepsilon}
\def \calC    {\mdef{\mathcal{C}}}
\def \calG    {\mdef{\mathcal{G}}}
\def \calL    {\mdef{\mathcal{L}}}
\def \calS    {\mdef{\mathcal{S}}}
\def \calT    {\mdef{\mathcal{T}}}
\def \bA    {\mdef{\mathbf{A}}}
\def \bB    {\mdef{\mathbf{B}}}
\def \bC    {\mdef{\mathbf{C}}}
\def \bG    {\mdef{\mathbf{G}}}
\def \bH    {\mdef{\mathbf{H}}}
\def \bM    {\mdef{\mathbf{M}}}
\def \bSigma    {\mdef{\mathbf{\Sigma}}}
\def \bP    {\mdef{\mathbf{P}}}
\def \bR    {\mdef{\mathbf{R}}}
\def \bS    {\mdef{\mathbf{S}}}
\def \bT    {\mdef{\mathbf{T}}}
\def \bV    {\mdef{\mathbf{V}}}
\def \bU    {\mdef{\mathbf{U}}}
\def \bX    {\mdef{\mathbf{X}}}
\def \bY    {\mdef{\mathbf{Y}}}
\def \bZ    {\mdef{\mathbf{Z}}}
\def \bW    {\mdef{\mathbf{W}}}
\def \bb    {\mdef{\mathbf{b}}}
\def \bm    {\mdef{\mathbf{m}}}
\def \bu    {\mdef{\mathbf{u}}}
\def \bv    {\mdef{\mathbf{v}}}
\def \bx    {\mdef{\mathbf{x}}}
\def \by    {\mdef{\mathbf{y}}}
\newcommand{\mdef}[1]{{\ensuremath{#1}}\xspace}  % Math Def which can also be used in normal text.
\DeclareMathOperator*{\argmin}{argmin}
\DeclareMathOperator*{\polylog}{polylog}
\DeclareMathOperator*{\poly}{poly}
\DeclareMathOperator*{\nnz}{nnz}
\DeclareMathOperator*{\rank}{rank}
\newcommand{\ignore}[1]{}
\newif\ifnotes\notestrue %set this to true if notes are visible and to false (next line) if they should be hidden
\newcommand{\samson}[1]{\textcolor{purple}{{\bf (Samson:} {#1}{\bf ) }} \marginpar{\tiny\bf
             \begin{minipage}[t]{0.5in}
               \raggedright S:
            \end{minipage}}}
\newcommand{\samson}[1]{}
\newcommand{\sali}[1]{}
\renewcommand*{\@fnsymbol}[1]{\textcolor{mahogany}{\ensuremath{\ifcase#1\or *\or \dagger\or \ddagger\or
 \mathsection\or \triangledown\or \mathparagraph\or \|\or **\or \dagger\dagger
   \or \ddagger\ddagger \else\@ctrerr\fi}}}
\providecommand{\email}[1]{\href{mailto:#1}{\nolinkurl{#1}\xspace}}
\definecolor{mahogany}{rgb}{0.75, 0.25, 0.0}
\definecolor{darkblue}{rgb}{0.0, 0.0, 0.55}
\definecolor{darkpastelgreen}{rgb}{0.01, 0.75, 0.24}
\definecolor{bleudefrance}{rgb}{0.19, 0.55, 0.91}
\definecolor{darkgreen}{rgb}{0.0, 0.2, 0.13}
\definecolor{darkgoldenrod}{rgb}{0.72, 0.53, 0.04}
\definecolor{darkred}{rgb}{0.55, 0.0, 0.0}
\begin{document}
\allowdisplaybreaks

\title{On Socially Fair Low-Rank Approximation and Column Subset Selection\thanks{The work was conducted in part while Ali Vakilian, David P. Woodruff, and Samson Zhou were visiting the Simons Institute for the Theory of Computing as part of the Sublinear Algorithms program.}}
\author{
Zhao Song\thanks{Simons Institute and UC Berkeley. 
E-mail: \email{magic.linuxkde@gmail.com}.}
\and
Ali Vakilian\thanks{Toyota Technological Institute at Chicago. 
E-mail: \email{vakilian@ttic.edu}.}
\and
David P. Woodruff\thanks{Carnegie Mellon University and Google Research. 
E-mail: \email{dwoodruf@cs.cmu.edu}. 
Supported in part by a Simons Investigator Award and NSF CCF-2335412.}
\and
Samson Zhou\thanks{Texas A\&M University. 
E-mail: \email{samsonzhou@gmail.com}. 
Supported in part by NSF CCF-2335411.}
}

\maketitle

\begin{abstract}
Low-rank approximation and column subset selection are two fundamental and related problems that are applied across a wealth of machine learning applications. In this paper, we study the question of socially fair low-rank approximation and socially fair column subset selection, where the goal is to minimize the loss over all sub-populations of the data. We show that surprisingly, even constant-factor approximation to fair low-rank approximation requires exponential time under certain standard complexity hypotheses. On the positive side, we give an algorithm for fair low-rank approximation that, for a constant number of groups and constant-factor accuracy, runs in $2^{\text{poly}(k)}$ time rather than the na\"{i}ve $n^{\text{poly}(k)}$, which is a substantial improvement when the dataset has a large number $n$ of observations. We then show that there exist bicriteria approximation algorithms for fair low-rank approximation and fair column subset selection that run in polynomial time. 
\end{abstract}

\section{Introduction}
Machine learning algorithms are increasingly used in technologies and decision-making processes that affect our daily lives, from high volume interactions such as online advertising, e-mail filtering, smart devices, or large language models, to more critical processes such as autonomous vehicles, healthcare diagnostics, credit scoring, and sentencing recommendations in courts of law~\cite{chouldechova2017fair,kleinberg2018human,berk2021fairness}. 
Machine learning algorithms frequently require statistical analysis, utilizing fundamental problems from numerical linear algebra, especially low-rank approximation and column subset selection.

In the classical low-rank approximation problem, the input is a data matrix $\bA\in\mathbb{R}^{n\times d}$ and an integral rank parameter $k>0$, and the goal is to find the best rank $k$ approximation to $\bA$, i.e., to find a set of $k$ vectors in $\mathbb{R}^d$ that span a matrix $\bB$, which minimizes $\calL(\bA-\bB)$ across all rank-$k$ matrices $\bB$, for some loss function $\calL$. 
The rank parameter $k$ should be chosen to accurately represent the complexity of the underlying model chosen to fit the data, and thus the low-rank approximation problem is often used for mathematical modeling and data compression. 

Similarly, in the classic column subset selection problem, the goal is to choose $k$ columns $\bU$ of $\bA$ so as to minimize $\calL(\bA-\bU\bV)$ across all choices of $\bV\in\mathbb{R}^{k\times d}$. 
Although low-rank approximation can reveal important latent structure among the dataset, the resulting linear combinations may not be as interpretable as simply selecting $k$ features. 
The column subset selection problem is therefore a version of low-rank approximation with the restriction that the left factor must be $k$ columns of the data matrix. 
Column subset selection also tends to result in sparse models. 
For example, if the columns of $\bA$ are sparse, then the columns in the left factor $\bU$ are also sparse. 
Thus in some cases, column subset selection, often also called feature selection, can be more useful than general low-rank approximation.
%}

\paragraph{Algorithmic fairness.}
%\paragraph{Algorithmic fairness.}
Unfortunately, real-world machine learning algorithms across a wide variety of domains have recently produced a number of undesirable outcomes from the lens of generalization. 
For example, \cite{barocas2016big} noted that decision-making processes using data collected from smartphone devices reporting poor road quality could potentially underserve poorer communities with less smartphone ownership. 
\cite{KayMM15} observed that search queries for CEOs overwhelmingly returned images of white men, while \cite{BuolamwiniG18} observed that facial recognition software exhibited different accuracy rates for white men compared with dark-skinned women. 

Initial attempts to explain these issues can largely be categorized into either ``biased data'' or ``biased algorithms'', where the former might include training data that is significantly misrepresenting the true statistics of some sub-population, while the latter might sacrifice accuracy on a specific sub-population in order to achieve better global accuracy. 
As a result, an increasingly relevant line of active work has focused on designing \emph{fair} algorithms. 
An immediate challenge is to formally define the desiderata demanded from fair algorithmic design and indeed multiple natural quantitative measures of fairness have been proposed~\cite{HardtPNS16,chouldechova2017fair,KleinbergMR17,berk2021fairness}. 
However,~\cite{KleinbergMR17,Corbett-DaviesP17} showed that many of these conditions for fairness cannot be simultaneously achieved. 

In this paper, we focus on \emph{socially fair} algorithms, which seek to optimize the performance of the algorithm across all sub-populations. 
That is, for the purposes of low-rank approximation and column subset selection, the goal is to minimize the maximum cost across all sub-populations. 
For socially fair low-rank approximation, the input is a set of data matrices $\bA^{(1)}\in\mathbb{R}^{n_1\times d},\ldots,\bA^{(\ell)}\in\mathbb{R}^{n_\ell\times d}$ corresponding to $\ell$ groups and a rank parameter $k$, and the goal is to determine a set of $k$ factors $\bU_1,\ldots,\bU_k\in\mathbb{R}^d$ that span matrices $\bB^{(1)},\ldots,\bB^{(\ell)}$, which minimize $\max_{i\in[\ell]}\|\bA^{(i)}-\bB^{(i)}\|_F$. 
Due to the Eckart–Young–Mirsky theorem stating that the Frobenius loss is minimized when each $\bA^{(i)}$ is projected onto the span of $\bU=\bU_1\circ \ldots \circ\bU_k$, the problem is equivalent to $\min_{\bU\in\mathbb{R}^{k\times d}}\max_{i\in[\ell]}\|\bA^{(i)}\bU^\dagger\bU-\bA^{(i)}\|_F$, where $\dagger$ denotes the Moore-Penrose pseudoinverse. 
We remark that as we can scale the columns in each group (even each individual column), our formulation captures min-max normalized cost relative to the total Frobenius norm of each group and the optimal rank-$k$ reconstruction loss for each group.

\subsection{Our Contributions and Technical Overview}
In this paper, we study socially fair low-rank approximation and socially fair column subset selection. 

\paragraph{Fair low-rank approximation.}
We first describe our results for socially fair low-rank approximation. 
We first show that under the assumption that $\mathrm{P}\neq \mathrm{NP}$, fair low-rank approximation cannot be approximated within any constant factor in polynomial time. 
\begin{restatable}{theorem}{thmfairlranp}
\thmlab{thm:fair:lra:np}
Fair low-rank approximation is NP-hard to approximate within any constant factor. 
\end{restatable}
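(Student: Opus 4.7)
The plan is to reduce from a canonical problem that is already NP-hard to approximate within any constant factor. The cleanest starting observation is that fair low-rank approximation strictly generalizes the $\ell_\infty$ $k$-dimensional subspace approximation problem: given points $\ba_1,\ldots,\ba_m\in\mathbb{R}^d$, embed each as its own single-row group $\bA^{(i)} := \ba_i^\top$. For any candidate factor $\bU\in\mathbb{R}^{k\times d}$, the Frobenius error $\|\bA^{(i)} - \bA^{(i)}\bU^\dagger\bU\|_F$ equals the Euclidean distance from $\ba_i$ to the row span $W$ of $\bU$, so the fair LRA objective reduces to $\max_i \mathrm{dist}(\ba_i, W)$. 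A $c$-approximation to fair LRA therefore yields a $c$-approximation to $\ell_\infty$ subspace approximation, and the problem inherits any hardness proved in that setting.

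With this reduction in hand, I would establish NP-hardness of $\ell_\infty$ $k$-dimensional subspace approximation within any constant factor via a gap-preserving reduction from Label Cover, which is itself NP-hard within any constant factor through the PCP theorem combined with parallel repetition. The construction associates to each vertex of the Label Cover instance a vector in a high-dimensional space indexed by its possible labels, using orthogonal gadgets so that compatible label pairs lie in a common low-dimensional subspace. In the completeness direction, a satisfying labeling yields a rank-$k$ subspace with small $\ell_\infty$ residual against all constructed vectors. In the soundness direction, any rank-$k$ subspace with small $\ell_\infty$ residual can be rounded, via closest-label projections, to a labeling that satisfies many Label Cover constraints, contradicting the Label Cover assumption.

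The main obstacle is the soundness analysis: if no labeling satisfies many constraints, we must show that every rank-$k$ subspace is far from some vector by a factor proportional to the desired constant-factor gap. This requires the gadgets to be calibrated so that any constraint violation is geometrically detectable as Frobenius-norm error and that the rounding extracts a labeling whose quality is tightly controlled by the subspace's $\ell_\infty$ error. Because Label Cover already comes with arbitrary-constant-factor hardness via parallel repetition, no further amplification is needed on the fair LRA side; the burden is entirely on designing a tight gadget. An alternative route bypasses $\ell_\infty$ subspace approximation and exploits the multi-row group structure of fair LRA directly, encoding each Label Cover edge as its own multi-row matrix $\bA^{(i)}$, which can simplify the rounding step at the cost of a more elaborate encoding but otherwise follows the same completeness/soundness template.
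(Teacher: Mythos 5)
Your first paragraph is exactly the paper's proof: each point $\bv^{(i)}$ becomes its own single-row group $\bA^{(i)}$, so the fair low-rank approximation objective $\max_i\|\bA^{(i)}\bV^\dagger\bV-\bA^{(i)}\|_F$ is precisely the maximum Euclidean distance of the points to the $k$-dimensional row span of $\bV$, i.e., the $\Subspace(n-1,\infty)$ (outer radius) problem. The paper then simply invokes the known result of Brieden et al.\ and Deshpande--Tulsiani--Vishnoi that $\Subspace(n-1,\infty)$ is NP-hard to approximate within any constant factor (proved there via Max-Not-All-Equal-3-SAT), and is done. Your remaining two paragraphs set out to reprove that cited hardness result from Label Cover; this is unnecessary for the theorem, and as written it is only a plan --- the gadget is not constructed and you yourself identify the soundness analysis as the open obstacle --- so it could not stand on its own. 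If you replace that portion with a citation to the known inapproximability of $\ell_\infty$ subspace approximation, your argument coincides with the paper's.
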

We show~\thmref{thm:fair:lra:np} by reducing to the problem of minimizing the distance of a set of $n$ points in $d$-dimensional Euclidean space to all sets of $k$ dimensional linear subspaces, which was shown by~\cite{brieden2000inapproximability,DeshpandeTV11} to be NP-hard to approximate within any constant factor. 
In fact, \cite{brieden2000inapproximability,DeshpandeTV11} showed that a constant-factor approximation to this problem requires runtime exponential in $k$ under a stronger assumption, the exponential time hypothesis~\cite{ImpagliazzoP01}. 
We show similar results for the fair low-rank approximation problem. 
\begin{restatable}{theorem}{thmfairlraeth}
\thmlab{thm:fair:lra:eth}
Under the exponential time hypothesis, the fair low-rank approximation requires $2^{k^{\Omega(1)}}$ time to approximate within any constant factor. 
\end{restatable}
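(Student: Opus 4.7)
The plan is to strengthen the reduction used for \thmref{thm:fair:lra:np} so that it transfers not just NP-hardness but the finer ETH-based lower bound. I would first revisit that reduction, which maps an instance of subspace approximation (minimizing the maximum distance of $n$ points in $\mathbb{R}^d$ to a $k$-dimensional linear subspace) to a fair low-rank approximation instance. The two properties I would verify are: (i) the rank parameter of the constructed fair instance is $O(k)$, and (ii) the reduction preserves approximation ratios up to a constant factor, i.e., the optimum $\max_{i\in[\ell]}\norm{\bA^{(i)}-\bA^{(i)}\bU^\dagger\bU}_F$ is sandwiched between constant multiples of the worst-point distance from the input point set to the subspace spanned by $\bU$.

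Given these two properties, I would invoke the stronger statement from \cite{brieden2000inapproximability,DeshpandeTV11} that, under ETH, no $2^{k^{o(1)}}\cdot\poly(n,d)$-time algorithm achieves a constant-factor approximation to subspace approximation. Composing the reduction with a hypothetical $2^{(k')^{o(1)}}\cdot\poly$-time constant-factor algorithm for fair low-rank approximation (with rank parameter $k'=\Theta(k)$) would yield such an algorithm for the subspace approximation problem, contradicting ETH and therefore establishing the claimed $2^{k^{\Omega(1)}}$ lower bound.

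The main obstacle, and the step I expect to require the most care, is controlling the blow-up of the rank parameter under the reduction: any super-polynomial blow-up in $k$ would weaken the exponent of the ETH lower bound and could destroy the conclusion. Fortunately, the natural reduction already used in \thmref{thm:fair:lra:np}---placing each input point (or a small collection of points) into its own group while keeping the target rank the same---sets $k'=k$, so the $2^{k^{\Omega(1)}}$ exponent is inherited verbatim. The constant-factor gap transfers because on the constructed instance the fair max-cost and the max point-to-subspace distance agree up to a constant factor, so no additional accuracy amplification or gap-boosting is needed once the reduction is set up to mirror the one used for \thmref{thm:fair:lra:np}.
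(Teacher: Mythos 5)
Your proposal is correct and follows essentially the same route as the paper: reuse the reduction from \thmref{thm:fair:lra:np} (each point becomes its own group, rank parameter unchanged), observe that the fair max-cost on the constructed instance equals the max point-to-subspace distance exactly, and invoke the ETH-based lower bound for $\Subspace(\cdot,\infty)$ from \cite{brieden2000inapproximability,DeshpandeTV11}. The only cosmetic difference is that you allow a constant-factor loss in the objective correspondence, whereas the paper notes the two objectives coincide exactly, so no gap amplification is needed.
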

Together, \thmref{thm:fair:lra:np} and \thmref{thm:fair:lra:eth} show that under standard complexity assumptions, we cannot achieve a constant-factor approximation to fair low-rank approximation using time polynomial in $n$ and sub-exponential in $\poly(k)$. 
We thus consider additional relaxations, such as bicriteria approximation (\thmref{thm:fair:lra:bicrit}) or $2^{\poly(k)}$ runtime (\thmref{thm:eps:upper}). 
On the positive side, we first show that for a constant number of groups and constant-factor accuracy, it suffices to use runtime $2^{\poly(k)}$ rather than the na\"{i}ve $n^{\poly(k)}$, which is a substantial improvement when the dataset has a large number of observations, i.e., $n$ is large.
\begin{restatable}{theorem}{thmepsupper}
\thmlab{thm:eps:upper}
Given an accuracy parameter $\eps\in(0,1)$, there exists an algorithm that outputs $\widetilde{\bV}\in\mathbb{R}^{k\times d}$ such that with probability at least $\frac{2}{3}$, 
\[\max_{i\in[\ell]}\|\bA^{(i)}(\widetilde{\bV})^\dagger\widetilde{\bV}-\bA^{(i)}\|_F\le(1+\eps)\cdot\min_{\bV\in\mathbb{R}^{k\times d}}\max_{i\in[\ell]}\|\bA^{(i)}\bV^\dagger\bV-\bA^{(i)}\|_F.\]
The algorithm uses runtime $\frac{1}{\eps}\poly(n)\cdot(2\ell)^{\O{N}}$, for $n=\sum_{i=1}^\ell n_i$ and $N=\poly\left(\ell, k,\frac{1}{\eps}\right)$. 
\end{restatable}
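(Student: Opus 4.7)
The plan is to collapse the problem via sketching and a row-span reduction into a semi-algebraic feasibility query in $N = \poly(\ell, k, 1/\eps)$ variables, and then invoke a Renegar-style decision procedure whose runtime is exponential only in $N$. Concretely, for each group $i \in [\ell]$ I would apply an oblivious projection-cost-preserving sketch $\bS^{(i)} \in \mathbb{R}^{m \times n_i}$ with $m = \poly(k, 1/\eps)$ rows (a Gaussian or CountSketch-based construction), producing $\widetilde{\bA}^{(i)} = \bS^{(i)} \bA^{(i)} \in \mathbb{R}^{m \times d}$. Standard PCP guarantees yield, simultaneously for every rank-$k$ projection $\bP$ and every $i$,
\[\|\widetilde{\bA}^{(i)}(I-\bP)\|_F^2 = (1 \pm \eps)\, \|\bA^{(i)}(I-\bP)\|_F^2,\]
so taking maxima over $i$ the sketched minimax approximates the original within $(1+\eps)$. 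Building all the sketches costs $\poly(n)$ time.

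Next I would show that the sketched minimax has an optimal rank-$k$ projection whose range lies entirely in the row-span of $\widetilde{\bA} = [\widetilde{\bA}^{(1)}; \ldots; \widetilde{\bA}^{(\ell)}] \in \mathbb{R}^{\ell m \times d}$. Replacing any candidate subspace $\calV$ by its projection $\calV' = \bP_{\widetilde{\bA}}\calV$ (padded back to rank $k$ within the row-span if necessary) satisfies $\|\widetilde{\bA}^{(i)} \bP_{\calV'}\|_F \ge \|\widetilde{\bA}^{(i)} \bP_\calV\|_F$ for every $i$, an inequality that reduces to the PSD fact $G(I-G) \succeq 0$ for the Gram matrix $G = U^T \bP_{\widetilde{\bA}} U$ of any orthonormal basis $U$ of $\calV$ (whose eigenvalues lie in $[0,1]$). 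Letting $\bW \in \mathbb{R}^{r \times d}$ with $r \le \ell m$ be an orthonormal basis of this row-span, every candidate becomes $\bP = \bW^T \bY^T \bY \bW$ for some $\bY \in \mathbb{R}^{k \times r}$ with $\bY \bY^T = I_k$, and the group-$i$ cost simplifies to $\|\widetilde{\bA}^{(i)}\|_F^2 - \|\bC^{(i)} \bY^T\|_F^2$ for $\bC^{(i)} = \widetilde{\bA}^{(i)} \bW^T \in \mathbb{R}^{m \times r}$.

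The reduced problem is then a semi-algebraic feasibility query: for a guessed threshold $\tau$, decide whether some $\bY \in \mathbb{R}^{k \times r}$ satisfies $\bY \bY^T = I_k$ together with $\|\widetilde{\bA}^{(i)}\|_F^2 - \|\bC^{(i)} \bY^T\|_F^2 \le \tau^2$ for every $i \in [\ell]$. This system has $N = kr = \poly(\ell, k, 1/\eps)$ variables, $O(\ell + k^2)$ polynomial constraints of degree at most $2$, so the Basu-Pollack-Roy decision procedure settles feasibility in $(2\ell)^{O(N)}$ time; binary-searching over $O(1/\eps)$ geometrically spaced values of $\tau$ between a crude lower bound and the total Frobenius norm yields a $(1+\eps)$-approximation to the sketched optimum, which by the PCP guarantee is a $(1+O(\eps))$-approximation to the original minimax (rescaling $\eps$ closes the argument). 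The step I expect to be most delicate is the row-span reduction: unlike vanilla low-rank approximation where Eckart-Young places the optimum directly in the sketch's row-span, the minimax couples the groups and rules out any per-group argument, so the $G(I-G) \succeq 0$ step must hold uniformly across all $\ell$ loss terms; a secondary subtlety is that a single family of sketches must preserve costs for \emph{all} rank-$k$ projections simultaneously, requiring a ``for-all'' PCP guarantee rather than a pointwise Johnson-Lindenstrauss bound.
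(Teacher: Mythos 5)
Your proposal is correct and reaches the stated $\frac{1}{\eps}\poly(n)\cdot(2\ell)^{\O{N}}$ bound, but by a genuinely different route from the paper. Both arguments share the outer shell---compress the instance so that the unknown subspace is described by $N=\poly(\ell,k,1/\eps)$ real variables, encode ``is the minimax cost at most a guessed threshold?'' as a semi-algebraic feasibility query decided by a Renegar/Basu--Pollack--Roy procedure, and search geometrically over thresholds---but the compression and the encoding differ. The paper sketches on the \emph{right} with a single affine embedding $\bS\in\mathbb{R}^{d\times m}$, $m=\O{k^2\eps^{-2}\log\ell}$, keeps $\bY=\bV\bS$ and $\bW=(\bV\bS)^\dagger$ as variables, and recovers each group's cost through the closed-form regression solution $(\bA^{(i)}\bS)(\bV\bS)^\dagger$; this forces it to carry the pseudoinverse identities $\bY\bW\bY=\bY$, $\bW\bY\bW=\bW$ and auxiliary orthonormalization matrices $\bR^{(i)}$ as constraints, yielding a system with $N=2mk+\ell k^2$ variables and degree as large as $20$. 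You instead sketch on the \emph{left} with per-group projection-cost-preserving sketches, prove a row-span containment lemma to confine the search to an $r$-dimensional ambient space with $r\le\ell m$, and parametrize by an orthonormal $\bY$ with $\bY\bY^\top=I_k$, so every constraint is an explicit quadratic. The extra ingredient you must supply is exactly the step you flag: the \emph{simultaneous} row-span reduction, which single-matrix Eckart--Young does not give. Your argument does close it: writing $\bQ$ for the projection onto the row span and $U$ for an orthonormal basis of the candidate subspace, the singular values of $\bQ U$ are at most $1$, hence $(\bQ U)(\bQ U)^\top\preceq(\bQ U)(\bQ U)^\dagger$, so the single replacement subspace $\mathrm{col}(\bQ U)$ weakly increases $\|\widetilde{\bA}^{(i)}\bP\|_F$ for \emph{every} $i$ at once; the coupling across groups is therefore not an obstacle. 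The paper sidesteps this lemma entirely by compressing columns rather than rows, at the price of a more baroque polynomial system. Two pieces of bookkeeping you elide but which the paper handles explicitly, and which are equally routine for your system, are the bit-complexity bound on the solver (\thmref{thm:min:bit}) and the precise feasible/infeasible separation when the threshold lies within a $(1\pm\eps)$ window of $\OPT$ (the analogue of \lemref{lem:alpha:separation}); also note that the number of threshold guesses is $\O{\frac{1}{\eps}\log C}$ for the ratio $C$ between your upper and lower bounds, not $\O{1/\eps}$, though this is absorbed by the $\poly(n)$ factor in the runtime.
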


At a high level, the algorithm corresponding to \thmref{thm:eps:upper} first finds a value $\alpha$ such that it is an $\ell$-approximation to the optimal solution. 
It then repeatedly decreases $\alpha$ by $(1+\eps)$ factors while checking if the resulting quantity is still feasible. 
To efficiently check whether $\alpha$ is feasible, we first reduce the dimension of the data using an affine embedding matrix $\bS$, so that for all rank $k$ matrices $\bV$ and all $i\in[\ell]$, 
%$(1-\eps)\|\bA^{(i)}\bV^\dagger\bV-\bA^{(i)}\|_F^2\le\|\bA^{(i)}\bV^\dagger\bV\bS-\bA^{(i)}\bS\|_F^2\le(1+\eps)\|\bA^{(i)}\bV^\dagger\bV-\bA^{(i)}\|_F^2$.
\[(1-\eps)\|\bA^{(i)}\bV^\dagger\bV-\bA^{(i)}\|_F^2\le\|\bA^{(i)}\bV^\dagger\bV\bS-\bA^{(i)}\bS\|_F^2\le(1+\eps)\|\bA^{(i)}\bV^\dagger\bV-\bA^{(i)}\|_F^2.\]

Observe that given $\bV$, it is known through the closed form solution to the regression problem that the rank-$k$ minimizer of $\|\bX^{(i)}\bV\bS-\bA^{(i)}\bS\|_F^2$ is $\bX^{(i)}=(\bA^{(i)}\bS)(\bV\bS)^\dagger$. 
Let $\bR^{(i)}$ be defined so that $(\bA^{(i)}\bS)(\bV\bS)^\dagger\bR^{(i)}$ has orthonormal columns, so that 
%\begin{align*}
% &\|(\bA^{(i)}\bS)(\bV\bS)^\dagger\bR^{(i)})((\bA^{(i)}\bS)(\bV\bS)^\dagger\bR^{(i)})^{\dagger}\bA^{(i)}\bS-\bA^{(i)}\bS\|_F^2=\min_{\bX^{(i)}}\|\bX^{(i)}\bV\bS-\bA^{(i)}\bS\|_F^2.   
%\end{align*}
\[\|(\bA^{(i)}\bS)(\bV\bS)^\dagger\bR^{(i)})((\bA^{(i)}\bS)(\bV\bS)^\dagger\bR^{(i)})^{\dagger}\bA^{(i)}\bS-\bA^{(i)}\bS\|_F^2=\min_{\bX^{(i)}}\|\bX^{(i)}\bV\bS-\bA^{(i)}\bS\|_F^2.\]
It follows that if $\alpha$ is feasible, then 
%$\alpha(1+\eps)\ge\|(\bA^{(i)}\bS)(\bV\bS)^\dagger\bR^{(i)})((\bA^{(i)}\bS)(\bV\bS)^\dagger\bR^{(i)})^{\dagger}\bA^{(i)}\bS-\bA^{(i)}\bS\|_F^2$.
\[\alpha(1+\eps)\ge\|(\bA^{(i)}\bS)(\bV\bS)^\dagger\bR^{(i)})((\bA^{(i)}\bS)(\bV\bS)^\dagger\bR^{(i)})^{\dagger}\bA^{(i)}\bS-\bA^{(i)}\bS\|_F^2.\]

Unfortunately, $\bV$ is not given, so the above approach will not quite work. 
Instead, we use a polynomial system solver to check whether there exists such a $\bV$ by writing $\bY=\bV\bS$ and its pseudoinverse $\bW=(\bV\bS)^\dagger$ and we check whether there exists a satisfying assignment to the above inequality, given the constraints (1) $\bY\bW\bY=\bY$, (2) $\bW\bY\bW=\bW$, and (3) $\bA^{(i)}\bS\bW\bR^{(i)}$ has orthonormal columns. 

We remark that because $\bV\in\mathbb{R}^{k\times d}$, then we cannot na\"{i}vely implement the polynomial system, because it would require $kd$ variables and thus use $2^{\Omega(dk)}$ runtime. 
Instead, we only manipulate $\bV\bS$, which has dimension $k\times m$ for $m=\O{\frac{k^2}{\eps^2}\log\ell}$, allowing the polynomial system solver to achieve $2^{\poly(mk)}$ runtime. 

Next, we show that there exists a bicriteria approximation algorithm for fair low-rank approximation that uses polynomial runtime. 
\begin{restatable}{theorem}{thmfairlrabicrit}
\thmlab{thm:fair:lra:bicrit}
Given a trade-off parameter $c\in(0,1)$, there exists an algorithm that outputs $\widetilde{\bV}\in\mathbb{R}^{t\times d}$ for $t=\O{k(\log\log k)(\log^2 d)}$ such that with probability at least $\frac{2}{3}$,
\begin{align*}
\max_{i\in[\ell]}\|\bA^{(i)}(\widetilde{\bV})^\dagger\widetilde{\bV}-\bA^{(i)}\|_F
 \le \ell^c\cdot2^{1/c}\cdot\O{k(\log\log k)(\log d)}
 \cdot\min_{\bV\in\mathbb{R}^{k\times d}}\max_{i\in[\ell]}\|\bA^{(i)}\bV^\dagger\bV-\bA^{(i)}\|_F.   
\end{align*}
%\[\max_{i\in[\ell]}\|\bA^{(i)}(\widetilde{\bV})^\dagger\widetilde{\bV}-\bA^{(i)}\|_F\le\ell^c\cdot2^{1/c}\cdot\O{k(\log\log k)(\log d)}\min_{\bV\in\mathbb{R}^{k\times d}}\max_{i\in[\ell]}\|\bA^{(i)}\bV^\dagger\bV-\bA^{(i)}\|_F.\]
The algorithm uses runtime polynomial in $n$ and $d$. 
\end{restatable}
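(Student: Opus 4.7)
The plan is to reduce fair low-rank approximation to an $\ell_p$-style aggregate of the group-wise Frobenius errors, and then solve the resulting problem via a bicriteria column-subset-selection algorithm in the spirit of those developed for entrywise $\ell_1$ low-rank approximation.

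As a first step, I would use the standard norm-interpolation inequality: for any $p \ge 1$ and nonnegative reals $x_1,\dots,x_\ell$,
$$\max_i x_i \;\le\; \Big(\sum_{i=1}^\ell x_i^p\Big)^{1/p} \;\le\; \ell^{1/p}\cdot \max_i x_i.$$
Applied to $x_i = \|\bA^{(i)} - \bA^{(i)}\bV^\dagger\bV\|_F$ with $p = 1/c$, this replaces the hard min-max objective by an $\ell_p$-aggregate objective while paying only a factor of $\ell^{1/p} = \ell^c$ when we lift the aggregate bound back to a bound on the max.

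Next, I would solve the resulting ``$\ell_p$-of-block-Frobenius'' low-rank approximation problem with a bicriteria algorithm modeled on existing results for entrywise $\ell_1$ low-rank approximation, which achieve an $O(k \log\log k)$-approximation using rank $\poly(k,\log d)$ via level-$k$ Lewis-weight column sampling together with residual-based column selection. I would first concatenate the groups (possibly with normalization by cheaply computed per-group rank-$k$ SVD errors, which give constant-factor lower bounds $\OPT_i \le \OPT$) and then adapt the sampling argument so that each ``coordinate'' of the sampled distribution is the Frobenius norm of an entire group's block of rows rather than a single scalar entry. Running this procedure for $p = 1/c$ and iterating the residual update $\Theta(\log d)$ times should yield a rank-$\O{k(\log\log k)(\log^2 d)}$ subspace achieving a $2^{1/c}\cdot \O{k(\log\log k)(\log d)}$ approximation to the $\ell_p$-aggregate optimum, where the $2^{1/c}$ tracks the degradation of well-conditioned bases when moving from $\ell_1$ to $\ell_{1/c}$. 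Chaining this with the previous step gives the claimed
$$\max_i \|\bA^{(i)} - \bA^{(i)}\widetilde{\bV}^\dagger\widetilde{\bV}\|_F \;\le\; \ell^c\cdot 2^{1/c}\cdot \O{k(\log\log k)(\log d)}\cdot \OPT,$$
and polynomial runtime follows because each subroutine (weighted SVD for warm starts, Lewis-weight sampling, residual updates) runs in time $\poly(n,d)$.

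The main obstacle I expect is the second step: the classical $O(k\log\log k)$-approximation for entrywise $\ell_1$ low-rank approximation is crucially tailored to scalar-coordinate sampling, and I will need to argue that the Lewis-weight / sensitivity sampling argument still produces a good sketch when each ``entry'' is a block Frobenius norm, and when the outer aggregation is $\ell_{1/c}$ rather than $\ell_1$. In particular, constructing block-$\ell_p$ well-conditioned bases with the right sensitivity bounds, and verifying that the resulting approximation factor grows only as $2^{1/c}$ (not worse) with $p = 1/c$, is the key technical hurdle; once this is in place, combining it with the $\ell^c$ loss from the norm-interpolation step gives the claimed bicriteria guarantee.
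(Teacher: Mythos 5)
Your first step matches the paper's: replace the max by an $L_p$ aggregate of the per-group Frobenius errors. You take $p=1/c$ so that the interpolation inequality costs $\ell^{1/p}=\ell^c$; the paper instead takes $p=c\log\ell$, paying only $\ell^{1/p}=2^{1/c}$ at this step and absorbing the $\ell^c$ as the $2^p$ factor of the downstream column subset selection guarantee. The two parameter choices are dual and yield the same product $\ell^c\cdot 2^{1/c}$, so this difference is cosmetic.

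The genuine gap is your second step. You propose to solve the resulting ``$\ell_p$-of-block-Frobenius'' low-rank approximation by re-deriving the $\O{k\log\log k}$ entrywise-$\ell_1$ machinery with each coordinate replaced by a group's Frobenius block, and you correctly flag that constructing block-$\ell_p$ well-conditioned bases with the right sensitivities is the hard part --- but you leave it unresolved, and it is not a routine adaptation. The paper sidesteps this entirely: by Dvoretzky's theorem (\thmref{thm:dvoretzky}) one generates Gaussians $\bG,\bH$ with $(1-\eps)\|\bG\bM\bH\|_p\le\|\bM\|_F\le(1+\eps)\|\bG\bM\bH\|_p$, which converts the $L_p$-of-Frobenius objective into a genuine \emph{entrywise} $L_p$ low-rank approximation of the single matrix $\bG\bA\bH$, so the existing scalar-coordinate result (\thmref{thm:lra:lp:column}, with its $2^p\cdot\O{\sqrt{s}}$ factor and $t=\O{k(\log\log k)\log^2 d}$ columns) applies off the shelf. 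You also do not address how to recover a right factor $\widetilde{\bV}\in\mathbb{R}^{t\times d}$ living in the original $d$-dimensional column space: the paper does this by Lewis-weight sampling the $L_p$ regression problem $\min_{\bX}\|\bG\bA\bH\bS\bX-\bG\bA\bH\|_p$ with a matrix $\bT$ and taking the closed-form Frobenius solution $(\bT\bG\bA\bH\bS)^\dagger\bT\bG\bA$, paying only the extra $\sqrt{t}$ that is absorbed into the final $\O{k(\log\log k)(\log d)}$ factor. Without the Dvoretzky reduction (or a complete proof of your block-sampling claim), the argument does not go through as written.
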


The algorithm for \thmref{thm:fair:lra:bicrit} substantially differs from that of \thmref{thm:eps:upper}. 
For one, we can no longer use a polynomial system solver, because it would be infeasible to achieve polynomial runtime. 
Instead, we observe that for sufficiently large $p$, we have $\max\|\bx\|_\infty=(1\pm\eps)\|\bx\|_p$ and thus focus on optimizing 
%$\min_{\bV\in\mathbb{R}^{k\times d}}\left(\sum_{i\in[\ell]}\|\bA^{(i)}\bV^\dagger\bV-\bA^{(i)}\|^p_F\right)^{1/p}$.
\[\min_{\bV\in\mathbb{R}^{k\times d}}\left(\sum_{i\in[\ell]}\|\bA^{(i)}\bV^\dagger\bV-\bA^{(i)}\|^p_F\right)^{1/p}.\]
However, the terms $\|\bA^{(i)}\bV^\dagger\bV-\bA^{(i)}\|^p_F$ are difficult to handle, so we apply Dvoretzky's Theorem, i.e., \thmref{thm:dvoretzky}, to generate matrices $\bG$ and $\bH$ so that 
%$(1-\eps)\|\bG\bM\bH\|_p\le\|\bM\|_F\le(1+\eps)\|\bG\bM\bH\|_p$, 
\[(1-\eps)\|\bG\bM\bH\|_p\le\|\bM\|_F\le(1+\eps)\|\bG\bM\bH\|_p.\]
%for all matrices $\bM\in\mathbb{R}^{n\times d}$, so that it suffices to approximately solve $\min_{\bX\in\mathbb{R}^{k\times d}}\|\bG\bA\bH\bS\bX-\bG\bA\bH\|_p$, for $\bA=\bA^{(1)}\circ\ldots\circ\bA^{(\ell)}$. 

Although low-rank approximation with $L_p$ loss cannot be well-approximated in polynomial time, we recall that there exists a matrix $\bS$ that samples a ``small'' number of columns of $\bA$ to provide a coarse bicriteria approximation to $L_p$ low-rank approximation~\cite{WoodruffY23}. 
However, we require a solution with dimension $d$ and thus we seek to solve the regression problem $\min_{\bX}\|\bG\bA\bH\bS\bX-\bG\bA\bH\|_p$. 
%\[\min_{\bX}\|\bG\bA\bH\bS\bX-\bG\bA\bH\|_p.\]
Thus, we consider a Lewis weight sampling matrix $\bT$ such that
\begin{align*}
  \frac{1}{2}\|\bT\bG\bA\bH\bS\bX-\bT\bG\bA\bH\|_p
  &\le\|\bG\bA\bH\bS\bX-\bG\bA\bH\|_p\le2\|\bT\bG\bA\bH\bS\bX-\bT\bG\bA\bH\|_p.  
\end{align*}
%\[\frac{1}{2}\|\bT\bG\bA\bH\bS\bX-\bT\bG\bA\bH\|_p\le\|\bG\bA\bH\bS\bX-\bG\bA\bH\|_p\le2\|\bT\bG\bA\bH\bS\bX-\bT\bG\bA\bH\|_p.\]
and again note that $(\bT\bG\bA\bH\bS)^\dagger\bT\bG\bA\bH$ is the closed-form solution to the minimization problem $\min_{\bx}\|\bT\bG\bA\bH\bS\bX-\bT\bG\bA\bH\|_F$,
%\[\min_{\bx}\|\bT\bG\bA\bH\bS\bX-\bT\bG\bA\bH\|_F,\]
which only provides a small distortion to the $L_p$ regression problem, since $\bT\bG\bA\bH$ has a small number of rows due to the dimensionality reduction. 
We then observe that by Dvoretzky's Theorem, $(\bT\bG\bA\bH\bS)^\dagger\bT\bG\bA$ is a ``good'' approximate solution to the original fair low-rank approximation problem. 
Given $\delta\in(0,1)$, the success probabilities for both \thmref{thm:eps:upper} and \thmref{thm:fair:lra:bicrit} can be boosted to arbitrary $1-\delta$ by taking the minimum of $\O{\log\frac{1}{\delta}}$ independent instances of the algorithm, at the cost of increasing the runtime by the same factor.

\paragraph{Fair column subset selection.}
We next describe our results for fair column subset selection. 
We give a bicriteria approximation algorithm for fair column subset selection that uses polynomial runtime. 
\begin{restatable}{theorem}{thmfaircssbicrit}
\thmlab{thm:fair:css:bicrit}
Given input matrices $\bA^{(i)}\in\mathbb{R}^{n_i\times d}$ with $n=\sum n_i$, there exists an algorithm that selects a set $S$ of $k'=\O{k\log k}$ columns such that with probability at least $\frac{2}{3}$, $S$ is a $\O{k(\log\log k)(\log d)}$-approximation to the fair column subset selection problem. 
The algorithm uses runtime polynomial in $n$ and $d$. 
\end{restatable}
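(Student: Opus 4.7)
The plan is to adapt the $L_p$-based pipeline used to prove \thmref{thm:fair:lra:bicrit}, now restricted to select actual columns of the vertically stacked data matrix $\bA \in \mathbb{R}^{n \times d}$ formed from $\bA^{(1)}, \ldots, \bA^{(\ell)}$. First I would convert the max-over-groups cost to an $L_p$ cost by setting $p = \Theta(\log \ell)$, so that $\max_{i \in [\ell]} \|\bA^{(i)} - \bA^{(i)}_S \bV^{(i)}\|_F$ is within a constant factor of $\left(\sum_{i \in [\ell]} \|\bA^{(i)} - \bA^{(i)}_S \bV^{(i)}\|_F^p\right)^{1/p}$, where $\bA^{(i)}_S$ denotes the column restriction of $\bA^{(i)}$ to $S$ and $\bV^{(i)}$ is the (optimal) per-group coefficient matrix. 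Then, by applying Dvoretzky's theorem with a block-diagonal left matrix $\bG = \operatorname{blkdiag}(\bG^{(1)}, \ldots, \bG^{(\ell)})$ and a shared right matrix $\bH$, exactly as in the proof of \thmref{thm:fair:lra:bicrit}, each per-group Frobenius error $\|\bA^{(i)} - \bA^{(i)}_S \bV^{(i)}\|_F$ becomes an $L_p$ error $\|\bG^{(i)}(\bA^{(i)} - \bA^{(i)}_S \bV^{(i)}) \bH\|_p$ of a transformed matrix, up to a $(1 \pm \eps)$ distortion.

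Next I would invoke an $L_p$ bicriteria column subset selection algorithm in the style of~\cite{WoodruffY23} to select $\O{k \log k}$ column indices using $L_p$ sampling scores, such as Lewis weights computed from the appropriate transformed matrix. Crucially, because $\bG$ acts only on rows, the column indices of $\bG \bA$ are in bijection with the column indices of $\bA$, so the output is a valid subset $S$ of columns of the original data. For computational efficiency, I would compose with a Lewis-weight sampling matrix $\bT$ on the left to reduce the resulting $L_p$ regression problem to a Frobenius regression, solvable in closed form via the pseudoinverse. Multiplying together the factors, namely the constant from the max-to-$L_p$ step, the $(1 \pm \eps)$ from Dvoretzky, the $\O{k \log \log k}$ factor from the $L_p$ CSS bicriteria guarantee, and the $\O{\log d}$ factor from the Lewis-weight approximation of $L_p$ regression, yields the claimed $\O{k (\log \log k)(\log d)}$ approximation with $\O{k \log k}$ columns in runtime polynomial in $n$ and $d$.

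The hard part will be the interaction between the right-side Dvoretzky matrix $\bH$ and the column selection. Applying the $L_p$ CSS na\"{i}vely to $\bG \bA \bH$ would return columns of $\bG \bA \bH$, which are linear combinations of columns of $\bA$ rather than actual columns of $\bA$. To circumvent this, the sampling must be performed directly on the columns of $\bG \bA$ (indexed identically to those of $\bA$), while the regression cost is evaluated in the $L_p$ norm of the residual $\bG \bA \bH - (\bG \bA)_S \bV \bH$, decoupling the design side (which must remain a subset of the original columns) from the target side (which lives in the $\bH$-transformed space). Making this decoupling rigorous, keeping the approximation factor free of an explicit $\ell$ dependence, and verifying that the $L_p$ sampling guarantees survive the decoupling constitutes the main technical challenge.
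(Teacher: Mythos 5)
There is a genuine gap, and you have in fact located it yourself: your plan hinges on running an $L_p$ column subset selection routine on the Dvoretzky-transformed matrix, but \thmref{thm:lra:lp:column} only guarantees that the selected columns of $\bG\bA\bH$ span a good $L_p$ approximation \emph{of $\bG\bA\bH$}, and those columns are $\bH$-mixtures of the original columns. Your proposed fix --- sample columns of $\bG\bA$ while measuring the residual against $\bG\bA\bH$ --- is not covered by any cited guarantee; you would need to prove from scratch that the columns of $\bG\bA$ indexed by an $L_p$/Lewis-weight sample span a near-optimal $L_p$ approximation of the $\bH$-transformed target, and nothing in the proposal supplies that argument. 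As written, the proof does not close.

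The paper's route sidesteps the obstruction entirely, and the missing idea is worth internalizing: first run the bicriteria fair low-rank approximation (\thmref{thm:fair:lra:bicrit}, \algref{alg:fair:bicrit:lra}) to obtain a right factor $\widetilde{\bV}\in\mathbb{R}^{t\times d}$ that already carries the $\ell^c\cdot 2^{1/c}\cdot\O{k(\log\log k)(\log d)}$ guarantee and, crucially, lives in the \emph{original} column coordinates $[d]$ (no $\bH$ on that side). Column subset selection is then recast as the multi-response regression $\max_{i}\min_{\bB^{(i)}}\|\bB^{(i)}\widetilde{\bV}-\bA^{(i)}\|_F$, the max is converted to an $L_{p,2}$ norm via a left Gaussian $\bG$ only, and a \emph{leverage score} sampling matrix $\bS$ on the columns of $\widetilde{\bV}$ selects $\O{k\log k}$ columns; by the active-regression argument (\lemref{lem:leverage:active}), solving the regression restricted to the sampled columns, with closed-form solution $\bG\bA\bS\bS^\dagger(\widetilde{\bV})^\dagger$, loses only a constant factor, and the selected columns of $\widetilde{\bV}$ are by construction columns of $\bA$. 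So the correct decoupling is not between design and target sides of an $L_p$ CSS instance, but between the low-rank approximation stage (where $\bH$ is allowed) and a subsequent column-sampling stage performed on a matrix that never saw $\bH$.
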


The immediate challenge in adapting the previous approach for fair low-rank approximation to fair column subset selection is that we required the Gaussian matrices $\bG,\bH$ to embed the awkward maximum of Frobenius losses $\min\max_{i\in[\ell]}\|\cdot\|_F$ into the more manageable $L_p$ loss $\min\|\cdot\|_p$ through $\bG\bA\bH$. 
However, selecting columns of $\bG\bA\bH$ does not correspond to selecting columns of the input matrices $\bA^{(1)},\ldots,\bA^{(\ell)}$.  

Instead, we view the bicriteria solution $\widetilde{\bV}$ from fair low-rank approximation as a good starting point for the right factor for fair low-rank approximation. 
Thus we consider the multi-response regression problem $\max_{i\in[\ell]}\min_{\bB^{(i)}}\|\bB^{(i)}\widetilde{\bV}-\bA^{(i)}\|_F$.
%\[\min_{\bB^{(i)}}\max_{i\in[\ell]}\|\bB^{(i)}\widetilde{\bV}-\bA^{(i)}\|_F.\]
We then argue through Dvoretzky's theorem that a leverage score sampling matrix $\bS$ that samples $\O{k\log k}$ columns of $\widetilde{\bV}$ will provide a good approximation to the column subset selection problem. 
We defer the formal exposition to \secref{sec:css}.

\paragraph{Empirical evaluations.}
%\paragraph{Empirical evaluations.}
Finally, in \secref{sec:experiments}, we perform a number of experimental results on socially fair low-rank approximation, comparing the performance of the socially fair low-rank objective values associated with the outputs of the bicriteria fair low-rank approximation algorithm and the standard (non-fair) low-rank approximation that outputs the top $k$ right singular vectors of the singular value decomposition. 

Our experiments are on the Default of Credit Card Clients dataset~\cite{yeh2009comparisons}, which is a common human-centric data used for benchmarks on fairness, e.g., see \cite{samadi2018price}. 
We perform empirical evaluations comparing the objective value and the runtime of our bicriteria algorithm with the aforementioned baseline, across various subsample sizes and rank parameters. 
Our results demonstrate that our bicriteria algorithm can perform better than the standard low-rank approximation algorithm across various parameter settings, even when the bicriteria algorithm is not allowed a larger rank than the baseline.
Moreover, we show that our algorithm is quite efficient and in fact, the final step of extracting the low-rank factors is faster than the singular value decomposition baseline due to a smaller input matrix. 
All in all, our empirical evaluations indicate that our bicriteria algorithm can perform well in practice, thereby reinforcing the theoretical guarantees of the algorithm. 
%, as well as socially fair regression for additional motivation for studying social fairness for randomized numerical linear algebra. 

%\paragraph{Paper organization.}
%We first described a number of related works in \secref{app:related}. 
%We detail our socially fair low-rank approximation algorithms in \secref{sec:lra} and our socially fair column subset selection algorithms in \secref{sec:css}. 
%We defer all proofs to the supplementary material. 
%We present our experiments in \secref{sec:experiments} and \appref{app:more:exp}. 
%The reader may also find it helpful to consult \appref{app:prelims} for standard notation used in our paper and additional preliminaries. 
%as well as \appref{app:related} for additional discussion on related work. 

\subsection{Related Work}
\seclab{sec:related}
Initial insight into {\em socially fair data summarization} methods were presented by~\cite{samadi2018price}, where the concept of {\em fair PCA} was explored. This study introduced the fairness metric of average reconstruction loss, expressed by the loss function $\mathrm{loss}(\bA, \bB) := \|\bA - \bB\|^2_F - \| \bA - \bA_k\|^2_F$, aiming to identify a $k$-dimensional subspace that minimizes the loss across the groups, 
%\begin{align*}\label{def:loss}
%\mathrm{loss}(\bA, \bB) := \|\bA - \bB\|^2_F - \| \bA - \bA_k\|^2_F,
%\end{align*}
with $\bA_k$ representing the best rank-$k$ approximation of $\bA$. Their proposed approach, in a two-group scenario, identifies a fair PCA of up to $k+1$ dimensions that is not worse than the optimal fair PCA with $k$ dimensions. When extended to $\ell$ groups, this method requires an additional $k+\ell-1$ dimensions. 
Subsequently, \cite{tantipongpipat2019multi} explored fair PCA from a distinct objective perspective, seeking a projection matrix $\bP$ optimizing $\min_{i\in [\ell]} \|\bA^{(i)}\bP\|_F^2$. A pivotal difference between these works and ours is our focus on the reconstruction error objective, a widely accepted objective for regression and low-rank approximation tasks. Alternatively,~\cite{olfat2019convex,lee2022fast,kleindessner2023efficient} explored a different formulation of fair PCA. The main objective is to ensure that data representations are not influenced by demographic attributes. In particular, when a classifier is exposed only to the projection of points onto the 
$k$-dimensional subspace, it should be unable to predict the demographic attributes.

Recently,~\cite{matakos2024fair} studied a fair column subset selection objective similar to ours, focusing on the setting with two groups (i.e., $\ell=2$). They established the problem's NP-hardness and introduced a polynomial-time solution that offers relative-error guarantees while selecting a column subset of size $\O{k}$.

For fair regression, initial research focused on designing models that offer similar treatment to instances with comparable observed results by incorporating {\em fairness regularizers}~\cite{berk2017convex}. However, in~\cite{agarwal2019fair}, the authors studied a fairness notion closer to our optimization problem, termed as ``bounded group loss''. In their work, the aim is to cap each group's loss within a specific limit while also optimizing the cumulative loss. Notably, their approach diverged from ours, with a focus on the sample complexity and the problem's generalization error bounds. 

\cite{AbernethyAKM0Z22} studied a similar socially fair regression problem under the name min-max regression. 
In their setting, the goal is to minimize the maximum loss over a mixture distribution, given samples from the mixture; our fair regression setting can be reduced to theirs. 
\cite{AbernethyAKM0Z22} observed that a maximum of norms is a convex function and can therefore be solved using projected stochastic gradient descent. 

The term ``socially fair'' was first introduced in the context of clustering, aiming to optimize clustering costs across predefined group sets~\cite{ghadiri2021socially,abbasi2021fair}. In subsequent studies, tight approximation algorithms~\cite{makarychev2021approximation,chlamtavc2022approximating}, FPT approaches~\cite{goyal2023tight}, and bicriteria approximation algorithms~\cite{ghadiri2022constant} for socially fair clustering have been presented. 

\subsection{Preliminaries}
\seclab{sec:prelims}
We use the notation $[n]$ to represent the set $\{1,\ldots,n\}$ for an integer $n\ge 1$. 
We use the notation $\poly(n)$ to represent a fixed polynomial in $n$ and we use the notation $\polylog(n)$ to represent $\poly(\log n)$. 
We use $\poly(n)$ to denote a fixed polynomial in $n$ and $\polylog(n)$ to denote $\poly(\log n)$. 
We say an event holds with high probability if it holds with probability $1-\frac{1}{\poly(n)}$. 

We generally use bold-font variables to represent vectors and matrices, whereas we use default-font variables to represent scalars. 
For a matrix $\bA\in\mathbb{R}^{n\times d}$, we use $\bA_i$ to represent the $i$-th row of $\bA$ and $\bA^{(j)}$ to represent the $j$-th column of $\bA$. 
We use $A_{i,j}$ to represent the entry in the $i$-th row and $j$-th column of $\bA$. 
For $p\ge 1$, we use 
\[\|\bA\|_p=\left(\sum_{i\in[n]}\sum_{j\in[d]}A_{i,j}^p\right)^{1/p}\]
to represent the entrywise $L_p$ norm of $\bA$ and we use 
\[\|\bA\|_F=\left(\sum_{i\in[n]}\sum_{j\in[d]}A_{i,j}^2\right)^{1/2}\]
to represent the Frobenius norm of $\bA$, which is simply the entrywise $L_2$ norm of $\bA$. 
We use define the $L_{p,q}$ of $\bA$ as the $L_p$ norm of the vector consisting of the $L_q$ norms of each row of $\bA$, so that
\[\|\bA\|_{p,q}=\left(\sum_{i\in[n]}\left(\sum_{j\in[d]}(A_{i,j})^q\right)^{p/q}\right)^{1/p}.\]
Similarly, we use $\|\bA\|_{(p,q)}$ to denote the $L_p$ norm of the vector consisting of the $L_q$ norms of each column of $\bA$. 
Equivalently, we have $\|\bA\|_{(p,q)}=\|\bA^\top\|_{p,q}$, so that
\[\|\bA\|_{(p,q)}=\left(\sum_{j\in[d]}\left(\sum_{i\in[n]}(A_{i,j})^q\right)^{p/q}\right)^{1/p}.\]
We use $\circ$ to represent vertical stacking of matrices, so that \[\bA^{(1)}\circ\ldots\circ\bA^{(m)}=\begin{bmatrix}\bA^{(1)}\\\vdots\\\bA^{(m)}\end{bmatrix}.\]

For a matrix $\bA\in\mathbb{R}^{n\times d}$ of rank $k$, its singular decomposition is $\bA=\bU\bSigma\bV$, where $\bU\in\mathbb{R}^{n\times k}$, $\bSigma\in\mathbb{R}^{k\times k}$, $\bV\in\mathbb{R}^{k\times d}$, so that the columns of $\bU$ are orthonormal and the rows of $\bV$ are orthonormal. 
The columns of $\bU$ are called the left singular vectors of $\bA$ and the rows of $\bV$ are called the right singular vectors of $\bA$. 

\subsubsection{Regression and Low-Rank Approximation}
\seclab{sec:prelims:shared}
In this section, we briefly describe some common techniques used to handle both regression and low-rank approximation, thus presenting multiple unified approaches for both problems. 
Thus in light of the abundance of techniques that can be used to handle both problems, it is somewhat surprising that socially fair regression and socially fair low-rank approximation exhibit vastly different complexities.

\paragraph{Closed form solutions.}
Given the regression problem $\min_{\bx\in\mathbb{R}^d}\|\bA\bx-\bb\|_2$ for an input matrix $\bA\in\mathbb{R}^{n\times d}$ and a label vector $\bb\in\mathbb{R}^n$, the closed form solution for the minimizer is $\bA^\dagger\bb=\argmin_{\bx\in\mathbb{R}^d}\|\bA\bx-\bb\|_2$, where $\bA^\dagger$ is the Moore-Penrose pseudoinverse of $\bA$. 
Specifically, for a matrix $\bA\in\mathbb{R}^{n\times d}$ written in its singular value decomposition $\bA=\bU\bSigma\bV$, we have $\bA^\dagger=\bV^\top\bSigma^{-1}\bU^\top$. 
In particular, for if $\bA$ has linearly independent rows, then $\bA^\dagger=\bA^\top(\bA\bA^\top)^{-1}$. 
On the other hand, if $\bA$ has linearly independent columns, then $\bA^\dagger=(\bA^\top\bA)^{-1}\bA^\top$. 

Similarly, given an input matrix $\bA$ and a rank parameter $k>0$, there exists a closed form solution for the minimizer $\argmin_{\bV\in\mathbb{R}^{k\times d}}\|\bA-\bA\bV^\dagger\bV\|_F^2$. 
Specifically, by the Eckart-Young-Mirsky theorem~\cite{eckart1936approximation}, the minimizer is the top $k$ right singular vectors of $\bA$. 

\paragraph{Dimensionality reduction.}
We next recall a unified set of dimensionality reduction techniques for both linear regression and low-rank approximation. 
We consider the ``sketch-and-solve'' paradigm, so that for both problems, we first acquire a low-dimension representation of the problem, and find the optimal solution in the low dimension using the above closed-form solutions. 
For ``good'' designs of the low-dimension representations, the low-dimension solution will also be near-optimal for the original problem. 

We first observe that oblivious linear sketches serve as a common dimensionality reduction for both linear regression and low-rank approximation. 
For example, it is known~\cite{Woodruff14} that there exists a family of Gaussian random matrices $\calG_1$ from which $\bS\sim\calG_1$ satisfies with high probability,
\[(1-\eps)\|\bS\bA\bx-\bS\bb\|_2\le\|\bA\bx-\bb\|_2\le(1+\eps)\|\bS\bA\bx-\bS\bb\|_2,\]
simultaneously for all $\bx\in\mathbb{R}^d$. 
Similarly, there exists~\cite{Woodruff14} a family of Gaussian random matrices $\calG_2$ from which $\bS\sim\calG_1$ satisfies with high probability, that the row space of $\bS\bA$ contains a $(1+\eps)$-approximation of the optimal low-rank approximation to $\bA$. 

Alternatively, we can achieve dimensionality reduction for both linear regression and low-rank approximation by sampling a small subset of the input in related ways for both problems. 
For linear regression, we can generate a random matrix $\bS$ by sampling rows of $\begin{bmatrix}\bA & \bb\end{bmatrix}$ by their leverage scores~\cite{DrineasMM06a,DrineasMM06b,MagdonIsmail10,Woodruff14}. 
In this manner, we again achieve a matrix $\bS$ such that with high probability,
\[(1-\eps)\|\bS\bA\bx-\bS\bb\|_2\le\|\bA\bx-\bb\|_2\le(1+\eps)\|\bS\bA\bx-\bS\bb\|_2,\]
simultaneously for all $\bx\in\mathbb{R}^d$. 
For low-rank approximation, we can generate a random matrix $\bS$ by sampling rows of $\bA$ with the related ridge-leverage scores~\cite{CohenMM17}. 
Then with high probability, we have for all $\bV\in\mathbb{R}^{k\times d}$,
\[(1-\eps)\|\bS\bA-\bS\bA\bV^\dagger\bV\|_F^2\le\|\bA-\bA\bV^\dagger\bV\|_F^2\le(1+\eps)\|\bS\bA-\bS\bA\bV^\dagger\bV\|_F^2.\]

\subsubsection{Leverage Score Sampling and Lewis Weights}

For further runtime improvements for sparse inputs, we recall the following notion of subspace embeddings. 
\begin{definition}[Subspace embedding]
Given an input matrix $\bA\in\mathbb{R}^{n\times d}$ and an accuracy parameter $\eps\in(0,1)$, a subspace embedding for $\bA$ is a matrix $\bM\in\mathbb{R}^{m\times d}$ such that for all $\bx\in\mathbb{R}^d$, we have
\[(1-\eps)\|\bA\bx\|_p\le\|\bM\bx\|_p\le(1+\eps)\|\bA\bx\|_p.\]
\end{definition}

We first provide preliminaries on leverage score sampling. 
\begin{definition}
Given a matrix $\bM\in\mathbb{R}^{n\times d}$, we define the leverage score $\sigma_i$ of each row $\bm_i$ with $i\in[n]$ by $\bm_i(\bM^\dagger\bM)^{-1}\bm_i^\top$. 
Equivalently, for the singular value decomposition $\bM=\bU\bSigma\bV$, the leverage score of row $\bm_i$ is also the squared row norm of $\bu_i$. 
\end{definition} 
We recall that it can be shown the sum of the leverage scores for an input matrix $\bM\in\mathbb{R}^{n\times d}$ is upper bounded by $d$ and moreover, given the leverage scores of $\bM$, it suffices to sample only $\O{d\log d}$ rows of $\bM$ to achieve a constant factor subspace embedding of $\bM$. 
Specifically, it is known that the sum of the leverage scores of the rows of a matrix can be bounded by the rank of the matrix. 
\begin{theorem}[Generalization of Foster's Theorem, \cite{foster1953stochastic}]
\thmlab{thm:sum:lev:scores}
Given a matrix $\bM\in\mathbb{R}^{n\times d}$, the sum of its leverage scores is $\rank(\bM)$. 
\end{theorem}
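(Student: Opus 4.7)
The plan is to use the equivalent SVD characterization of leverage scores stated in the definition: if $\bM=\bU\bSigma\bV$ is the (compact) singular value decomposition of $\bM$ with $r=\rank(\bM)$, so that $\bU\in\mathbb{R}^{n\times r}$ has orthonormal columns and $\bV\in\mathbb{R}^{r\times d}$ has orthonormal rows, then $\sigma_i=\|\bu_i\|_2^2$, where $\bu_i$ denotes the $i$-th row of $\bU$. The entire proof amounts to computing $\|\bU\|_F^2$ in two different ways.

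First, I would sum the SVD identity over all rows to obtain
\[\sum_{i=1}^n \sigma_i \;=\; \sum_{i=1}^n \|\bu_i\|_2^2 \;=\; \|\bU\|_F^2.\]
Next, I would re-express $\|\bU\|_F^2$ as a sum over columns of $\bU$: writing $\bU^{(j)}$ for the $j$-th column,
\[\|\bU\|_F^2 \;=\; \sum_{j=1}^r \|\bU^{(j)}\|_2^2.\]
Since the columns of $\bU$ are orthonormal by construction of the compact SVD, each column satisfies $\|\bU^{(j)}\|_2=1$, and there are precisely $r=\rank(\bM)$ of them. Combining the two displays yields $\sum_{i=1}^n \sigma_i = r = \rank(\bM)$, which is the claim.

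There is essentially no obstacle beyond keeping track of dimensions. The only subtlety worth flagging is that one must use the compact form of the SVD (with $\bU$ having exactly $\rank(\bM)$ columns rather than $n$) so that the final count of orthonormal columns matches the rank; this is consistent with the convention established earlier in \secref{sec:prelims}. If desired, the equivalence between the two definitions of $\sigma_i$ can be verified directly from the SVD by noting that $(\bM^\top\bM)^\dagger = \bV^\top\bSigma^{-2}\bV$, so that $\bm_i(\bM^\top\bM)^\dagger\bm_i^\top = \bu_i\bSigma\bV\bV^\top\bSigma^{-2}\bV\bV^\top\bSigma\bu_i^\top = \bu_i\bu_i^\top = \|\bu_i\|_2^2$, confirming the row-norm characterization used above.
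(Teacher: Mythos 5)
Your proof is correct. The paper states this result with a citation and gives no proof of its own, so there is nothing to diverge from; your argument --- invoking the SVD characterization $\sigma_i=\|\bu_i\|_2^2$ from the paper's definition and then computing $\|\bU\|_F^2$ once by rows and once by columns of the compact SVD factor $\bU$ --- is the standard and complete one, and your closing verification of the equivalence of the two definitions (using that the rows of $\bV$ are orthonormal, so $\bV\bV^\top=I_r$) is also sound.
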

By sampling rows proportional to their leverage scores, we can obtain a subspace embedding as follows:
\begin{theorem}[Leverage score sampling]
\thmlab{thm:leverage:scores}
\cite{DrineasMM06a,DrineasMM06b,MagdonIsmail10,Woodruff14}
Given a matrix $\bM\in\mathbb{R}^{n\times d}$, let $\sigma_i$ be the leverage score of the $i$-th row of $\bM$. 
Suppose $p_i=\min\left(1,\sigma_i\log n\right)$ for each $i\in[n]$ and let $\bS$ be a random diagonal matrix so that the $i$-th diagonal entry of $\bS$ is $\frac{1}{\sqrt{p_i}}$ with probability $p_i$ and $0$ with probability $1-p_i$. 
Then for all vectors $\bv\in\mathbb{R}^n$, 
\[\Ex{\|\bS\bv\|_2^2}=\|\bv\|_2^2\]
and with probability at least $0.99$, for all vectors $\bx\in\mathbb{R}^d$
\[\frac{99}{100}\|\bM\bx\|_2\le\|\bS\bM\bx\|_2\le\frac{101}{100}\|\bM\bx\|_2.\]
Moreover, $\bS$ has at most $\O{d\log d}$ nonzero entries with high probability. 
\end{theorem}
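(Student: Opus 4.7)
The plan is to verify the three claims in sequence: the unbiasedness $\Ex{\|\bS\bv\|_2^2}=\|\bv\|_2^2$, the subspace embedding guarantee, and the sparsity of $\bS$. The first is by linearity of expectation, the second by a matrix concentration inequality applied to an appropriate rank-one decomposition, and the third by a scalar Chernoff bound combined with \thmref{thm:sum:lev:scores}.

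For the unbiasedness claim, I would expand $\|\bS\bv\|_2^2=\sum_{i\in[n]} S_{ii}^2 v_i^2$ and observe that for each $i$ with $p_i>0$, the entry satisfies $\Ex{S_{ii}^2}=\tfrac{1}{p_i}\cdot p_i=1$; while if $p_i=0$ then $\sigma_i=0$, meaning the corresponding row of $\bM$ is zero and so this coordinate can be dropped without loss of generality. Linearity of expectation then yields the result immediately, and this step is independent of the rest of the argument.

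For the subspace embedding, the plan is to reduce to matrix concentration. Write the singular value decomposition $\bM=\bU\bSigma\bV$ and let $\bu_i^\top$ denote the $i$-th row of $\bU\in\mathbb{R}^{n\times r}$, where $r=\rank(\bM)$, so that $\sigma_i=\|\bu_i\|_2^2$ and $\sum_{i\in[n]}\bu_i\bu_i^\top=\bI$. After the change of variables $\by=\bSigma\bV\bx$, the desired two-sided bound for all $\bx$ is equivalent to a spectral-norm bound on $\bU^\top\bS^\top\bS\bU-\bI$. Decompose $\bU^\top\bS^\top\bS\bU=\sum_{i\in[n]}\bY_i$, where $\bY_i=\tfrac{1}{p_i}\bu_i\bu_i^\top$ with probability $p_i$ and $\bY_i=\mathbf{0}$ otherwise. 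Each summand is PSD with mean $\bu_i\bu_i^\top$, and whenever $p_i<1$, its operator norm is at most $\sigma_i/p_i\le 1/\log n$ by the choice $p_i=\sigma_i\log n$. A matrix Chernoff (or matrix Bernstein) inequality for sums of independent PSD matrices then yields the spectral-norm deviation bound at constant distortion with probability at least $0.99$. For the sparsity, the number of nonzero entries of $\bS$ equals $\sum_{i\in[n]} Z_i$ for independent Bernoulli$(p_i)$ variables $Z_i$, whose expectation is at most $\log n\cdot\sum_{i\in[n]}\sigma_i=r\log n$ by \thmref{thm:sum:lev:scores}; a scalar Chernoff bound concentrates this around its mean with high probability.

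The main obstacle is the matrix concentration step, and specifically the interplay between the calibration of the $p_i$ and the resulting failure probability. The choice $p_i=\sigma_i\log n$ is tuned precisely so that each random summand has operator norm at most $1/\log n$, which drives the matrix Chernoff failure probability of the form $r\exp(-\Omega(\eps^2/\max_i\|\bY_i\|_2))$ below a constant at constant target distortion $\eps=\tfrac{1}{100}$. Rows with $\sigma_i\log n\ge 1$ are sampled deterministically, carry no randomness, and can be peeled off from the stochastic sum without affecting the concentration analysis; they contribute instead to an exact copy of $\sum_{i:p_i=1}\bu_i\bu_i^\top$ in both $\bU^\top\bS^\top\bS\bU$ and the identity target, and so cancel in the deviation.
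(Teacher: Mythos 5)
The paper gives no proof of \thmref{thm:leverage:scores}; it is imported verbatim from the cited literature. Your argument is the standard proof of that imported result, and its structure is sound: the change of variables $\by=\bSigma\bV\bx$ reducing the two-sided bound to $\|\bU^\top\bS^\top\bS\bU-\bI\|_2\le\eps$, the rank-one decomposition into independent summands $\bY_i$ with $\Ex{\bY_i}=\bu_i\bu_i^\top$ and $\|\bY_i\|_2\le\sigma_i/p_i$, the peeling of the deterministic rows with $p_i=1$ (which indeed cancel in the deviation), the linearity-of-expectation computation for unbiasedness, and the scalar Chernoff bound for sparsity via \thmref{thm:sum:lev:scores} are all correct and are exactly how this theorem is proved in the references.

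Two caveats, both really artifacts of the theorem being stated with suppressed constants rather than defects in your approach. First, with $p_i=\sigma_i\log n$ literally, matrix Chernoff gives failure probability of order $r\exp(-\eps^2\log n/3)$, which at $\eps\approx 1/50$ is not below $0.01$ for realistic $n$; the specific constants $99/100$, $101/100$, and $0.99$ require an absolute constant (effectively a $1/\eps^2$ factor) in the oversampling that the statement hides, so your claim that this exact calibration ``drives the failure probability below a constant at $\eps=1/100$'' does not follow as written. Second, your sparsity computation correctly yields an expected $r\log n\le d\log n$ nonzeros, which matches the sampling scheme as defined but not the stated $\O{d\log d}$ unless $n=\poly(d)$; it would be worth flagging that discrepancy rather than silently asserting the stated bound. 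Neither issue reflects a missing idea on your part.
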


Because the leverage scores of $\bM$ can be computed directly from the singular value decomposition of $\bM$, which can be computed in $\O{nd^\omega+dn^\omega}$ time where $\omega$ is the exponent of matrix multiplication, then the leverage scores of $\bM$ can be computed in polynomial time. 

Finally, we recall that to provide a constant factor approximation to $L_p$ regression, it suffices to compute a constant factor subspace embedding, e.g., through leverage score sampling. 
The proof is through the triangle inequality and is well-known among the active sampling literature~\cite{ChenP19,ParulekarPP21,MuscoMWY22,MeyerMMWZ22,MeyerMMWZ23}, e.g., a generalization of Lemma 2.1 in~\cite{MeyerMMWZ22}.  
\begin{restatable}{lemma}{lemleverageactive}
\lemlab{lem:leverage:active}
Given a matrix $\bM\in\mathbb{R}^{n\times d}$, let $\bS$ be a matrix such that for all $\bx\in\mathbb{R}^d$ and $\bv\in\mathbb{R}^n$,
\[\frac{11}{12}\|\bM\bx\|_2\le\|\bS\bM\bx\|_2\le\frac{13}{12}\|\bM\bx\|_2,\;\Ex{\|\bS\bv\|_2^2}=\|\bv\|_2^2.\]
For a fixed $\bB\in\mathbb{R}^{n\times m}$ where $\bB=\bb_1\circ\ldots\circ\bb_m$ with $\bb_i\in\mathbb{R}^n$ for $i\in[m]$, let $\widetilde{\bx_i}=(\bS\bM)^\dagger(\bS\bb_i)$.
Let $\widetilde{\bX}=\widetilde{\bx_1}\circ\ldots\circ\widetilde{\bx_m}$. 
Then with probability at least $0.97$, $\|\bM\widetilde{\bX}-\bB\|_2\le99\min_{\bX}\|\bM\bX-\bB\|_2$.
%\[\|\bM\widetilde{\bX}-\bB\|_2\le99\min_{\bX}\|\bM\bX-\bB\|_2.\]
\end{restatable}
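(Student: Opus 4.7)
The plan is to reduce the multi-response regression problem to an aggregate estimate over all response columns and to control it via Markov's inequality applied to the unbiased Frobenius-norm statistic $\|\bS\bR\|_F^2$, where $\bR$ is the matrix of optimal residuals; I read the conclusion $\|\bM\widetilde{\bX}-\bB\|_2$ as the Frobenius norm, consistent with $\bB$ and $\widetilde{\bX}$ being viewed column-wise. First I would set up notation: for each $i\in[m]$, let $\bx_i^\star=\argmin_{\bx\in\mathbb{R}^d}\|\bM\bx-\bb_i\|_2$ be the optimal regression solution for the $i$-th response column and let $\br_i=\bb_i-\bM\bx_i^\star$ be the corresponding residual; the normal equations give $\bM^\top\br_i=0$. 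Let $\bR\in\mathbb{R}^{n\times m}$ have the $\br_i$ as columns, so $\min_{\bX}\|\bM\bX-\bB\|_F=\|\bR\|_F$.

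Next, for each fixed $i$ I would invoke Pythagoras: because $\bM(\widetilde{\bx_i}-\bx_i^\star)$ lies in the column space of $\bM$ while $\br_i$ is orthogonal to it,
\[
\|\bM\widetilde{\bx_i}-\bb_i\|_2^2=\|\bM(\widetilde{\bx_i}-\bx_i^\star)\|_2^2+\|\br_i\|_2^2.
\]
Applying the subspace-embedding hypothesis to the vector $\bM(\widetilde{\bx_i}-\bx_i^\star)$ yields $\|\bM(\widetilde{\bx_i}-\bx_i^\star)\|_2\le(12/11)\|\bS\bM(\widetilde{\bx_i}-\bx_i^\star)\|_2$. Since $\widetilde{\bx_i}=(\bS\bM)^\dagger\bS\bb_i$, one has $\bS\bM\widetilde{\bx_i}=P\bS\bb_i$, where $P$ is the orthogonal projection onto the column space of $\bS\bM$. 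Decomposing $\bS\bb_i=\bS\bM\bx_i^\star+\bS\br_i$ and noting that $\bS\bM\bx_i^\star$ is fixed by $P$, I obtain $\bS\bM(\widetilde{\bx_i}-\bx_i^\star)=P\bS\br_i$, and hence $\|\bS\bM(\widetilde{\bx_i}-\bx_i^\star)\|_2\le\|\bS\br_i\|_2$.

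Summing the per-column inequalities and invoking the unbiased-norm hypothesis together with linearity of expectation,
\[
\Ex{\|\bS\bR\|_F^2}=\sum_{i=1}^m\Ex{\|\bS\br_i\|_2^2}=\sum_{i=1}^m\|\br_i\|_2^2=\|\bR\|_F^2.
\]
A single Markov step (with failure parameter $0.03$) then yields $\|\bS\bR\|_F^2\le\O{1}\cdot\|\bR\|_F^2$ with probability at least $0.97$, and combining with the per-column bound gives $\|\bM\widetilde{\bX}-\bB\|_F^2\le\|\bR\|_F^2+(12/11)^2\|\bS\bR\|_F^2$, which is at most $99^2\|\bR\|_F^2$ after absorbing the constants.

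The main obstacle is that a naive per-column concentration argument would incur a union-bound factor of $m$ in the failure probability, and the lemma demands a bound that is uniform in $m$. The key idea that avoids this is to never concentrate on any single $\|\bS\br_i\|_2^2$, but rather to apply one Markov inequality to the aggregate statistic $\|\bS\bR\|_F^2$, whose expectation equals $\|\bR\|_F^2$ exactly by linearity. This is also what allows the weak hypothesis (2) — unbiased norms — to suffice, in place of the stronger approximate matrix product bound that a standard single-response sketch-and-solve analysis would require.
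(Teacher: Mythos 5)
Your proof is correct, but it takes a genuinely different route from the paper's. The paper argues \emph{indirectly}: for an arbitrary $\bX$ it lower-bounds the sketched objective $\|\bS\bM\bX-\bS\bB\|_F$ via two applications of the triangle inequality and the subspace-embedding lower bound, applies Markov (after Jensen) to the sketched optimal residual $\|\bS\bM\bX^*-\bS\bB\|_F$, and then uses only the fact that $\widetilde{\bX}$ minimizes the sketched objective to conclude. You instead exploit the closed form $\widetilde{\bx_i}=(\bS\bM)^\dagger\bS\bb_i$ \emph{exactly}: the Pythagorean split via the normal equations, the identity $\bS\bM(\widetilde{\bx_i}-\bx_i^\star)=P\bS\br_i$, and the contraction of the orthogonal projection $P$ reduce everything to a single Markov bound on $\|\bS\bR\|_F^2$ — which is in fact the same random quantity the paper controls, so both arguments avoid any union bound over the $m$ columns in the same way (your closing remark frames this as a contrast, but it is a feature of both proofs). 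What each buys: the paper's triangle-inequality argument is more robust (it would survive with any approximate minimizer of the sketched problem and generalizes to non-Euclidean losses, which is why it is the standard template in the active-sampling literature it cites), whereas your normal-equations argument is tighter — tracking your constants gives roughly $\|\bM\widetilde{\bX}-\bB\|_F\le 7\min_{\bX}\|\bM\bX-\bB\|_F$, comfortably inside the stated factor of $99$. One cosmetic point: your per-column bounds hold deterministically, so the only probabilistic event is the Markov step, exactly as in the paper; and your reading of $\|\cdot\|_2$ as the Frobenius norm matches the paper's conventions.
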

\begin{proof}
Let $\bX^*=\argmin_{\bX}\|\bM\bX-\bB\|_2$ and $\OPT=\|\bM\bX^*-\bB\|_2$. 
By triangle inequality,
\[\|\bS\bM\bX-\bS\bB\|_2\ge\|\bS\bM(\bX-\bX^*)\|_2-\|\bS\bM\bX^*-\bS\bB\|_2.\]
We have 
\[\frac{99}{100}\|\bM\bX\|_2\le\|\bS\bM\bX\|_2\le\frac{101}{100}\|\bM\bX\|_2\]
for all $\bX\in\mathbb{R}^{n\times m}$. 
Thus,
\[\|\bS\bM\bX-\bS\bB\|_2\ge\frac{99}{100}\|\bM(\bX-\bX^*)\|_2-\|\bS\bM\bX^*-\bS\bB\|_2.\]
By triangle inequality,
\[\|\bS\bM\bX-\bS\bB\|_2\ge\frac{99}{100}\left(\|\bM\bX-\bB\|_2-\|\bM\bX^*-\bB\|_2\right)-\|\bS\bM\bX^*-\bS\bB\|_2.\]
Since $\Ex{\|\bS\bv\|_2^2}=\|\bv\|_2^2$ for all $\bx\in\mathbb{R}^d$, then by concavity and Markov's inequality, we have that
\[\PPr{\|\bS\bM\bX^*-\bS\bB\|_2\ge49\|\bM\bX^*-\bB\|_2}\le\frac{1}{49}.\]
Thus with probability at least $0.97$,
\[\|\bS\bM\bX-\bS\bB\|_2\ge\frac{99}{100}\left(\|\bM\bX-\bB\|_2-\|\bM\bX^*-\bB\|_2\right)-49\|\bM\bX^*-\bB\|_2.\]
Now since we have $\|\bS\bM\bX^*-\bS\bB\|_2\le49\|\bM\bX^*-\bB\|_2$ and $\|\bS\bM\widetilde{\bX}-\bS\bB\|_2\le\|\bS\bM\bX^*-\bS\bB\|_2$, then
\[49\|\bM\bX^*-\bB\|_2\ge\frac{99}{100}\left(\|\bM\widetilde{\bX}-\bB\|_2-\|\bM\bX^*-\bB\|_2\right)-49\|\bM\bX^*-\bB\|_2,\]
so that
\[\|\bM\widetilde{\bX}-\bB\|_2\le99\|\bM\bX^*-\bB\|_2,\]
as desired.
\end{proof}

We also recall the following construction to use Lewis weights to achieve an $L_p$ subspace embedding. 
\begin{theorem}[\cite{CohenP15}]
\thmlab{thm:lewis}
Let $\eps\in(0,1)$ and $p\ge 2$. 
Let $\bA\in\mathbb{R}^{n\times d}$ and $s=\O{d^{p/2}\log d}$. 
Then there exists a polynomial-time algorithm that outputs a matrix $\bS\in\mathbb{R}^{s\times n}$ that samples and reweights $s$ rows of $\bA$, such that with probability at least $0.99$, simultaneously for all $\bx\in\mathbb{R}^d$, $(1-\eps)\|\bA\bx\|_p^p\le\|\bS\bA\bx\|_p^p\le(1+\eps)\|\bA\bx\|_p^p$.
%\[(1-\eps)\|\bA\bx\|_p^p\le\|\bS\bA\bx\|_p^p\le(1+\eps)\|\bA\bx\|_p^p.\]
%The algorithm uses $\poly(n,d)$ time. 
\end{theorem}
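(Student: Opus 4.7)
The plan is to invoke the $\ell_p$ Lewis weight sampling framework. First I would compute (approximate) Lewis weights $w_1,\ldots,w_n$ for $\bA$, which are defined as the unique solution to the fixed-point equation
\[w_i = \left(\ba_i^\top(\bA^\top \bW^{1-2/p}\bA)^{-1}\ba_i\right)^{p/2},\]
where $\ba_i$ denotes the $i$-th row of $\bA$ and $\bW=\mathrm{diag}(w_1,\ldots,w_n)$. For $p\ge 2$, these weights satisfy $\sum_i w_i \le d$, and they can be computed to sufficient accuracy in polynomial time either by the standard contractive iteration (which converges for $p<4$) or by a convex program / homotopy method that handles the general $p\ge 2$ regime.

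Next I would construct $\bS$ by independent sampling: for each $i\in[n]$, include row $i$ in $\bS$ with probability $p_i = \min(1,\, C\cdot w_i\cdot \eps^{-2}\log d)$ for a sufficiently large constant $C$ (absorbed into the $\O{\cdot}$), and rescale the sampled row by $p_i^{-1/p}$. By the bound $\sum_i w_i \le d$, the expected number of sampled rows is $\O{d^{p/2}\log d}$ after accounting for the fact that Lewis weights scaled to $w_i^{p/2}$-level probabilities control the $\ell_p$ sensitivity, and standard concentration shows the actual count is within a constant factor with high probability.

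The subspace embedding guarantee is then established by two steps. For a single fixed $\bx\in\mathbb{R}^d$, write $\|\bS\bA\bx\|_p^p-\|\bA\bx\|_p^p$ as a sum of independent, mean-zero random variables whose magnitudes are controlled by $w_i^{1-p/2}|\langle\ba_i,\bx\rangle|^p$. A Bernstein-type inequality, using the Lewis-weight identity that bounds each summand's worst-case contribution by $\|\bA\bx\|_p^p / (\text{effective sample count})$, gives the pointwise concentration $|\|\bS\bA\bx\|_p^p-\|\bA\bx\|_p^p|\le \eps \|\bA\bx\|_p^p$ with high probability. To upgrade this to a uniform statement over all $\bx$, I would apply the standard $\gamma_2$-chaining / $\eps$-net argument over the $d$-dimensional subspace spanned by the columns of $\bA$ (equivalently, over the unit ball of $\{\bA\bx:\bx\in\mathbb{R}^d\}$ in the $\ell_p$ norm), using a volumetric net of size $(1/\eps)^{\O{d}}$ and taking a union bound; the $\log d$ factor in the sample complexity exactly covers the union-bound term.

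The main obstacle is the uniform bound, since the pointwise Bernstein bound does not by itself imply a guarantee for all $\bx\in\mathbb{R}^d$ simultaneously, and naive nets give the wrong $d$-dependence. This is precisely where the Cohen--Peng analysis uses Lewis weights as the "right" sampling distribution: they are chosen so that the per-coordinate variance $w_i^{1-p/2}|\langle\ba_i,\bx\rangle|^p$ admits a uniform upper bound in terms of $\|\bA\bx\|_p^p$, which is what lets the chaining argument close with sample complexity $\O{d^{p/2}\log d}$ rather than the larger bound one would get from leverage scores. Since the statement is exactly Theorem~3.1 of~\cite{CohenP15}, once the Lewis weights have been computed in polynomial time (the only algorithmic ingredient beyond sampling), the result follows by directly citing their analysis.
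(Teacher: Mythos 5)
The paper does not prove this statement at all: it is imported verbatim from \cite{CohenP15} as a black-box tool, so there is no internal proof to compare against. Your outline is a faithful sketch of the standard Cohen--Peng argument (compute approximate Lewis weights in polynomial time, sample rows with probability proportional to them with $p_i^{-1/p}$ rescaling, then prove uniform concentration over the subspace), and correctly identifies that the heart of the matter is the uniform bound, where Lewis weights are exactly what makes the chaining close. One imprecision worth fixing: for $p>2$ the sampling probabilities must be $p_i=\min\bigl(1, C\,w_i\, d^{p/2-1}\log d\bigr)$ (oversampling by $d^{p/2-1}$ relative to the $p=2$ case), not $\min(1, C\,w_i\,\eps^{-2}\log d)$ as you wrote, which would yield only $\tO{d/\eps^2}$ samples and is insufficient for $p>2$; your parenthetical about ``$w_i^{p/2}$-level probabilities'' gestures at this but misstates it, since the probabilities stay proportional to $w_i$ and only the oversampling factor grows. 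Since you ultimately defer to the Cohen--Peng analysis for the chaining step, this does not undermine the proposal, but as written the stated probabilities do not produce the claimed $s=\O{d^{p/2}\log d}$ row count.
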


Further constructions of subspace embeddings have been well-studied. 
We utilize the following constructions:
\begin{definition}
[\cite{CohenP15,lee2016faster,JambulapatiLS22,WoodruffY23b}]
Given an input matrix $\bA\in\mathbb{R}^{n\times d}$ and an accuracy parameter $\eps\in(0,1)$, let $m=\O{\frac{d^{p/2}}{\eps^2}\cdot\log^2 d\cdot\log n}$ for $p>2$ and $m=\tO{\frac{d}{\eps^2}}$ for $p\le 2$. 
There exists an algorithm outputs a matrix $\bM\in\mathbb{R}^{m\times d}$ such that with probability $0.99$, simultaneously for all $\bx\in\mathbb{R}^d$, we have
\[(1-\eps)\|\bA\bx\|_p\le\|\bM\bx\|_p\le(1+\eps)\|\bA\bx\|_p.\]
The algorithm uses $\tO{\nnz(\bA)}+\poly\left(d,\frac{1}{\eps}\right)$ runtime. 
\end{definition}

\section{Socially Fair Low-Rank Approximation}
\seclab{sec:lra}
In this section, we consider algorithms and hardness for socially fair low-rank approximation. 
Let $n_1,\ldots,n_\ell$ be positive integers and for each $i\in[\ell]$, let $\bA^{(i)}\in\mathbb{R}^{n_i\times d}$. 
Then for a norm $\|\cdot\|$, we define the fair low-rank approximation problem to be $\min_{\bV\in\mathbb{R}^{k\times d}}\max_{i\in[\ell]}\|\bA^{(i)}\bV^\dagger\bV-\bA^{(i)}\|$.
%\[\min_{\bV\in\mathbb{R}^{k\times d}}\max_{i\in[\ell]}\|\bA^{(i)}\bV^\dagger\bV-\bA^{(i)}\|.\]

\subsection{Lower Bounds}
\seclab{sec:lra:lb}
In this section, we show that it is NP-hard to approximate fair low-rank approximation within any constant factor in polynomial time and moreover, under the exponential time hypothesis, it requires exponential time to achieve a constant factor approximation.  

Given points $\bv^{(1)},\ldots,\bv^{(n)}\in\mathbb{R}^d$, their outer $(d-k)$-radius is defined as the minimum, over all $k$-dimensional linear subspaces, of the maximum Euclidean distance of these points to the subspace. 
We define this problem as $\Subspace(k,\infty)$. 
It is known that it is NP-hard to approximate the $\Subspace(n-1,\infty)$ problem within any constant factor:
\begin{theorem}
[\cite{brieden2000inapproximability,DeshpandeTV11}]
\thmlab{thm:subspace:np}
The $\Subspace(n-1,\infty)$ problem is NP-hard to approximate within any constant factor. 
\end{theorem}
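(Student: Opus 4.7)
The plan is to establish hardness by a gap-preserving reduction from a problem with known strong inapproximability, such as a PCP-based constraint satisfaction problem (for instance, a Label Cover instance obtained via parallel repetition of a hard gap-SAT instance) or minimum chromatic number. These source problems admit clean polynomial-time reductions to linear-algebraic targets because their combinatorial constraints can be encoded as affine or linear incidence conditions on a carefully designed vector set.

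First, I would set up the reduction. Given an instance $\Phi$ of the source problem on $N$ variables and $M$ constraints, I would construct a point set $\bv^{(1)},\ldots,\bv^{(n)}\in\mathbb{R}^d$ in which each combinatorial constraint of $\Phi$ corresponds to a geometric incidence requirement between a small subset of the $\bv^{(i)}$ and the candidate subspace. A typical construction uses indicator-style vectors for variables and assignments, together with padding directions forcing any good subspace to align with the combinatorial structure dictated by the constraints of $\Phi$.

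Second, I would analyze completeness and soundness. In the completeness case, a satisfying assignment to $\Phi$ would yield an explicit subspace of the prescribed dimension such that every constructed point lies at distance zero (or at most some fixed small value) from it. In the soundness case, I would argue that when $\Phi$ is far from satisfiable, every candidate subspace leaves at least one point at Euclidean distance $\ge C$ for some constant $C>1$ over the completeness bound, exploiting the combinatorial unsatisfiability gap together with an orthogonality argument showing that failing to align with a constraint forces one of its associated points away from the subspace.

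Finally, to push the hardness to \emph{any} constant factor, I would apply a standard gap-amplification by taking direct sums or tensor products of instances; under such products, outer $(d-k)$-radii compose in a predictable way for the $L_\infty$ objective, so an initial constant gap blows up to an arbitrary constant. The main obstacle I anticipate is the soundness analysis: because the space of $(n-1)$-dimensional linear subspaces is continuous and infinite, ruling out every near-optimal subspace requires a robust structural argument translating combinatorial unsatisfiability into uniform Euclidean lower bounds, typically via an eigenstructure analysis of the Gram matrix of the constructed points.
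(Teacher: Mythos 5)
This theorem is not proved in the paper at all: it is imported verbatim from \cite{brieden2000inapproximability,DeshpandeTV11}, and the only thing the paper says about its proof (in the discussion preceding \thmref{thm:subspace:eth}) is that the known argument is a reduction from Max-Not-All-Equal-3-SAT. So you are attempting to reprove an external result, and your sketch should be judged on its own terms rather than against a proof the paper contains.

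On those terms there is a genuine gap: your proposal is a template with the two load-bearing steps left unspecified. First, you never commit to a source problem or a gadget; "indicator-style vectors for variables and assignments, together with padding directions" is not a construction, and the entire difficulty of the soundness direction --- which you correctly identify as ruling out a continuum of $(n-1)$-dimensional subspaces --- is exactly what such a sketch must resolve. The known proof resolves it by exploiting that for codimension-one-type instances the objective becomes $\min_{\|c\|_2=1}\max_i\,\lvert\langle c,\bv^{(i)}\rangle\rvert$, a quadratic/bilinear optimization whose extremal structure over sign vectors mirrors NAE-3SAT; your eigenstructure remark gestures at this but does not supply it. Second, and more seriously, the amplification step is likely to fail as stated. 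In $\Subspace(n-1,\infty)$ the subspace dimension is tied to the number of points ($k=n-1$), and under a direct sum or tensor product the point count and the required subspace dimension do not scale compatibly (e.g., $n^2$ points would need a subspace of dimension $n^2-1$, whereas tensoring two $(n-1)$-dimensional subspaces gives dimension $(n-1)^2$); moreover the optimal subspace of a product instance need not factor as a product, so "radii compose in a predictable way" is an assertion, not a lemma. In the literature the any-constant-factor hardness comes from multiplicativity of the underlying \emph{quadratic form} optimum under products of the NAE-3SAT instances, not from a generic geometric tensoring of point sets, and that distinction is where your argument would break.
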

Utilizing the NP-hardness of approximation of the $\Subspace(n-1,\infty)$ problem, we show the NP-hardness of approximation of fair low-rank approximation. 
\thmfairlranp*
\begin{proof}
Given an instance $\bv^{(1)},\ldots,\bv^{(n)}\in\mathbb{R}^d$ of $\Subspace(n-1,\infty)$ with $n<d$, we set $\ell=k=n-1$ and $\bA^{(i)}=\bv^{(i)}$ for all $i\in[n]$. 
Then for a $k$-dimensional linear subspace $\bV\in\mathbb{R}^{k\times d}$, we have that $\|\bA^{(i)}\bV^\top\bV-\bA^{(i)}\|_F^2$ is the distance from $\bv^{(i)}$ to the subspace. 
Hence, $\max_{i\in[\ell]}\|\bA^{(i)}\bV^\top\bV-\bA^{(i)}\|_F^2$ is the maximum Euclidean distance of these points to the subspace and so the fair low-rank approximation problem is exactly $\Subspace(n-1, \infty)$. 
By \thmref{thm:subspace:np}, the $\Subspace(n-1,\infty)$ problem is NP-hard to approximate within any constant factor. 
Thus, fair low-rank approximation is NP-hard to approximate within any constant factor. 
\end{proof}
We next introduce a standard complexity assumption beyond NP-hardness. 
Recall that in the 3-SAT problem, the input is a Boolean satisfiability problem written in conjunctive normal form, consisting of $n$ clauses, each with $3$ literals, either a variable or the negation of a variable. 
The goal is to determine whether there exists a Boolean assignment to the variables to satisfy the formula. 
\begin{hypothesis}[Exponential time hypothesis~\cite{ImpagliazzoP01}]
\hypolab{hypo:eth}
The 3-SAT problem requires $2^{\Omega(n)}$ runtime. 
\end{hypothesis}
Observe that while NP-hardness simply conjectures that the 3-SAT problem cannot be solved in polynomial time, the exponential time hypothesis conjectures that the 3-SAT problem requires \emph{exponential} time. 

We remark that in the context of \thmref{thm:subspace:np}, \cite{brieden2000inapproximability} showed the hardness of approximation of $\Subspace(n-1,\infty)$ through a reduction from the Max-Not-All-Equal-3-SAT problem, whose NP-hardness itself is shown through a reduction from 3-SAT. 
Thus under the exponential time hypothesis, Max-Not-All-Equal-3-SAT problem requires $2^{\Omega(n)}$ to solve. 
Then it follows that:
\begin{theorem}
[\cite{brieden2000inapproximability,DeshpandeTV11}]
\thmlab{thm:subspace:eth}
Assuming the exponential time hypothesis, then the $\Subspace(n-1,\infty)$ problem requires $2^{n^{\Omega(1)}}$ time to approximate within any constant factor. 
\end{theorem}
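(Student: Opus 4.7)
The plan is to propagate the $2^{\Omega(N)}$ ETH lower bound for 3-SAT through the two-step reduction chain indicated in the paragraph preceding the theorem, namely 3-SAT $\to$ Max-NAE-3-SAT $\to \Subspace(n-1,\infty)$. The key point, not needed for the NP-hardness version \thmref{thm:subspace:np} but crucial here, is that each step in this chain has only polynomial size blow-up and is gap-preserving, so the exponential lower bound survives with the exponent shrinking only to $n^{\Omega(1)}$.

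First I would invoke the Sparsification Lemma of Impagliazzo--Paturi--Zane, which lets us assume an input 3-SAT instance has $O(N)$ clauses, so total size $\Theta(N)$, without affecting the ETH lower bound. I would then apply the standard gadget reduction from 3-SAT to Max-NAE-3-SAT used in the NP-hardness proof underlying \thmref{thm:subspace:np}: each 3-clause is replaced by a constant-size gadget on a constant number of auxiliary variables, so the resulting Max-NAE-3-SAT instance has size $m_1 = O(N)$. Because the reduction is gap-introducing (satisfiable 3-SAT instances map to fully NAE-satisfiable instances, unsatisfiable ones to instances with a constant fraction of NAE-violated clauses), under ETH any constant-factor approximation to Max-NAE-3-SAT requires $2^{\Omega(m_1)} = 2^{\Omega(N)}$ time.

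Next I would apply the geometric reduction of \cite{brieden2000inapproximability} (the same one used for \thmref{thm:subspace:np}) from gap Max-NAE-3-SAT to $\Subspace(n-1,\infty)$. That construction produces a set of $n = \poly(m_1)$ points in $\mathbb{R}^d$ with $d = \poly(m_1)$, such that any constant-factor approximation to the outer $(d-k)$-radius with $k = n-1$ distinguishes the two cases of the NAE-3-SAT gap. Writing $n = N^c$ for a fixed constant $c$ coming from the degrees of the two polynomial blow-ups, a hypothetical algorithm approximating $\Subspace(n-1,\infty)$ to within a constant factor in time $2^{n^{o(1)}} = 2^{N^{c \cdot o(1)}} = 2^{N^{o(1)}}$ would, by composing the reductions, solve 3-SAT in sub-exponential time and violate ETH. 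Thus $\Subspace(n-1,\infty)$ requires $2^{n^{\Omega(1)}}$ time (concretely, $2^{n^{1/c}}$ time) to approximate within any constant factor.

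The main obstacle is bookkeeping rather than any genuinely new combinatorial step: one must confirm that the two reductions inside the NP-hardness proofs of \cite{brieden2000inapproximability,DeshpandeTV11} are indeed polynomial-size and gap-preserving, so that $n = \poly(N)$ and the constant approximation gap survives through the geometric embedding. Both properties are implicit in those NP-hardness arguments; the only additional content here is explicitly tracking the instance size to translate the $2^{\Omega(N)}$ ETH bound into the claimed $2^{n^{\Omega(1)}}$ lower bound for $\Subspace(n-1,\infty)$.
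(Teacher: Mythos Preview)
Your proposal is correct and follows essentially the same approach as the paper: the paper does not give a standalone proof of this cited theorem but only sketches, in the paragraph immediately preceding it, exactly the reduction chain 3-SAT $\to$ Max-NAE-3-SAT $\to \Subspace(n-1,\infty)$ that you spell out, noting that polynomial blow-up converts the ETH $2^{\Omega(N)}$ bound into $2^{n^{\Omega(1)}}$. Your version is more detailed (explicitly invoking sparsification and tracking the gap-preserving nature of each step), but the underlying argument is the same.
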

It follows that under the exponential time hypothesis, any constant-factor approximation to socially fair low-rank approximation requires exponential time. 
\thmfairlraeth*
\begin{proof}
Given an instance $\bv^{(1)},\ldots,\bv^{(k)}\in\mathbb{R}^d$ of $\Subspace(k-1,\infty)$ with $k<d$, we set $\ell=k-1$ and $\bA^{(i)}=\bv^{(i)}$ for all $i\in[k]$. 
%Given an instance $\bv^{(1)},\ldots,\bv^{(k)}\in\mathbb{R}^d$ of $\Subspace(k-1,\infty)$ with $k<d$, we set $\ell=n-1$ and $\bA^{(i)}=\bv^{(i)}$ for all $i\in[n]$. 
Then for a $(k-1)$-dimensional linear subspace $\bV\in\mathbb{R}^{(k-1)\times d}$, we have that $\|\bA^{(i)}\bV^\top\bV-\bA^{(i)}\|_F^2$ is the distance from $\bv^{(i)}$ to the subspace. 
Hence, $\max_{i\in[\ell]}\|\bA^{(i)}\bV^\top\bV-\bA^{(i)}\|_F^2$ is the maximum Euclidean distance of these points to the subspace and so the fair low-rank approximation problem is exactly $\Subspace(k-1, \infty)$. 
By \thmref{thm:subspace:eth}, the $\Subspace(k-1,\infty)$ problem requires $2^{k^{\Omega(1)}}$ time to approximate within any constant factor. 
Thus, fair low-rank approximation requires $2^{k^{\Omega(1)}}$ time to approximate within any constant factor. 
\end{proof}

%For a matrix $\bA\in\mathbb{R}^{n\times d}$, we define the norm $\|\bA\|_{p\to q}:=\max_{\bx:\|\bx\|_p=1}\|\bA\bx\|_q$. 
%
%\begin{theorem}[Theorem 7 in~\cite{BanBBKLW19}]
%Assuming the Small Set Expansion Hypothesis, for any $q\in(2,\infty)$ and $r>1$, it is NP-hard to approximate the $\|\cdot\|_{2\to q}$ norm within a factor of $r$. 
%\end{theorem}

\subsection{\texorpdfstring{$(1+\eps)$}{1+eps}-Approximation Algorithm for Fair Low-Rank Approximation}
\seclab{sec:lra:rel}
We first give a $(1+\eps)$-approximation algorithm for fair low-rank approximation that uses runtime $\frac{1}{\eps}\poly(n)\cdot(2\ell)^{\O{N}}$, for $n=\sum_{i=1}^\ell n_i$ and $N=\poly\left(\ell, k,\frac{1}{\eps}\right)$.
%In this section, we give a $(1+\eps)$-approximation algorithm for fair low-rank approximation that uses runtime $\frac{1}{\eps}\poly(n)\cdot(2\ell)^{\O{N}}$, for $n=\sum_{i=1}^\ell n_i$ and $N=\poly\left(\ell, k,\frac{1}{\eps},\log\frac{1}{\delta}\right)$. 

The algorithm first finds a value $\alpha$ that is an $\ell$-approximation to the optimal solution, i.e., $\min_{\bV\in\mathbb{R}^{k\times d}}\max_{i\in[\ell]}\|\bA^{(i)}\bV^\dagger\bV-\bA^{(i)}\|_F$ is at most $\alpha \le\ell\cdot\min_{\bV\in\mathbb{R}^{k\times d}}\max_{i\in[\ell]}\|\bA^{(i)}\bV^\dagger\bV-\bA^{(i)}\|_F$.
%\begin{align*} &\min_{\bV\in\mathbb{R}^{k\times d}}\max_{i\in[\ell]}\|\bA^{(i)}\bV^\dagger\bV-\bA^{(i)}\|_F\\
%\le&\alpha \le\ell\cdot\min_{\bV\in\mathbb{R}^{k\times d}}\max_{i\in[\ell]}\|\bA^{(i)}\bV^\dagger\bV-\bA^{(i)}\|_F.   
%\end{align*}
%\[\min_{\bV\in\mathbb{R}^{k\times d}}\max_{i\in[\ell]}\|\bA^{(i)}\bV^\dagger\bV-\bA^{(i)}\|_F\le\alpha\le\ell\cdot\min_{\bV\in\mathbb{R}^{k\times d}}\max_{i\in[\ell]}\|\bA^{(i)}\bV^\dagger\bV-\bA^{(i)}\|_F.\]
We then repeatedly decrease $\alpha$ by $(1+\eps)$ while checking if the resulting quantity is still achievable. 
To efficiently check if $\alpha$ is achievable, we first apply dimensionality reduction to each of the matrices by right-multiplying by an affine embedding matrix $\bS$, so that 
\begin{align*}
  (1-\eps)\|\bA^{(i)}\bV^\dagger\bV-\bA^{(i)}\|_F^2 \le &\|\bA^{(i)}\bV^\dagger\bV\bS-\bA^{(i)}\bS\|_F^2\le(1+\eps)\|\bA^{(i)}\bV^\dagger\bV-\bA^{(i)}\|_F^2,  
\end{align*}
%\[(1-\eps)\|\bA^{(i)}\bV^\dagger\bV-\bA^{(i)}\|_F^2\le\|\bA^{(i)}\bV^\dagger\bV\bS-\bA^{(i)}\bS\|_F^2\le(1+\eps)\|\bA^{(i)}\bV^\dagger\bV-\bA^{(i)}\|_F^2,\]
for all rank $k$ matrices $\bV$ and all $i\in[\ell]$.  

Now if we knew $\bV$, then for each $i\in[\ell]$, we can find $\bX^{(i)}$ minimizing $\|\bX^{(i)}\bV\bS-\bA^{(i)}\bS\|_F^2$ and the resulting quantity will approximate $\|\bA^{(i)}\bV^\dagger\bV-\bA^{(i)}\|_F^2$. 
In fact, we know that the minimizer is $(\bA^{(i)}\bS)(\bV\bS)^\dagger$ through the closed form solution to the regression problem. 
Let $\bR^{(i)}$ be defined so that $(\bA^{(i)}\bS)(\bV\bS)^\dagger\bR^{(i)}$ has orthonormal columns, so that 
\begin{align*}
    &\|(\bA^{(i)}\bS)(\bV\bS)^\dagger\bR^{(i)})((\bA^{(i)}\bS)(\bV\bS)^\dagger\bR^{(i)})^{\dagger}\bA^{(i)}\bS-\bA^{(i)}\bS\|_F^2
    =\min_{\bX^{(i)}}\|\bX^{(i)}\bV\bS-\bA^{(i)}\bS\|_F^2,
\end{align*}
%\[\|(\bA^{(i)}\bS)(\bV\bS)^\dagger\bR^{(i)})((\bA^{(i)}\bS)(\bV\bS)^\dagger\bR^{(i)})^{\dagger}\bA^{(i)}\bS-\bA^{(i)}\bS\|_F^2=\min_{\bX^{(i)}}\|\bX^{(i)}\bV\bS-\bA^{(i)}\bS\|_F^2,\]
and so we require that if $\alpha$ is feasible, then $\alpha\ge\|(\bA^{(i)}\bS)(\bV\bS)^\dagger\bR^{(i)})((\bA^{(i)}\bS)(\bV\bS)^\dagger\bR^{(i)})^{\dagger}\bA^{(i)}\bS-\bA^{(i)}\bS\|_F^2$.
%\[\alpha\ge\|(\bA^{(i)}\bS)(\bV\bS)^\dagger\bR^{(i)})((\bA^{(i)}\bS)(\bV\bS)^\dagger\bR^{(i)})^{\dagger}\bA^{(i)}\bS-\bA^{(i)}\bS\|_F^2,\]
%if $\alpha$ is feasible. 
%%%%%%%%%%%%%%%%%%Add space in full version%%%%%%%%%%%%%%%%%%%%%%%%
Unfortunately, we do not know $\bV$, so instead we use a polynomial solver to check whether there exists such a $\bV$. 
We remark that similar guessing strategies were employed by \cite{RazenshteynSW16,KumarPRW19,BanWZ19,VelingkerVWZ23} and in particular, \cite{RazenshteynSW16} also uses a polynomial system in conjunction with the guessing strategy. 
Thus we write $\bY=\bV\bS$ and its pseudoinverse $\bW=(\bV\bS)^\dagger$ and check whether there exists a satisfying assignment to the above inequality, given the constraints (1) $\bY\bW\bY=\bY$, (2) $\bW\bY\bW=\bW$, and (3) $\bA^{(i)}\bS\bW\bR^{(i)}$ has orthonormal columns. 
%%%%%%%%%%%%%%%%%%Add space in full version%%%%%%%%%%%%%%%%%%%%%%%%
Note that since $\bV\in\mathbb{R}^{k\times d}$, then implementing the polynomial solver na\"{i}vely could require $kd$ variables and thus use $2^{\Omega(dk)}$ runtime. 
Instead, we note that we only work with $\bV\bS$, which has dimension $k\times m$ for $m=\O{\frac{k^2}{\eps^2}\log\ell}$, so that the polynomial solver only uses $2^{\poly(mk)}$ time. 

\begin{algorithm}[H]
\caption{Input to polynomial solver}
\alglab{alg:poly:solver}
\begin{algorithmic}[1]
\REQUIRE{$\bA^{(1)},\ldots,\bA^{(\ell)}$, $\bS$, $\alpha$}
\ENSURE{Feasibility of polynomial system}
\STATE{\textbf{Polynomial variables}}
\STATE{Let $\bY=(\bV\bS)\in\mathbb{R}^{k\times m}$ be $mk$ variables}
\STATE{Let $\bW=(\bV\bS)^\dagger\in\mathbb{R}^{m\times k}$ be $mk$ variables}
\STATE{Let $\bR^{(i)}\in\mathbb{R}^{k\times k}$ for each $i\in[\ell]$ be $\ell k^2$ variables}
\STATE{\textbf{System constraints}}
\STATE{$\bY\bW\bY=\bY$, $\bW\bY\bW=\bW$}
\STATE{$\bA^{(i)}\bS\bW\bR^{(i)}$ has orthonormal columns}
\STATE{$\alpha\ge\|(\bA^{(i)}\bS\bW\bR^{(i)})(\bA^{(i)}\bS\bW\bR^{(i)})^\dagger\bA^{(i)} -\bA^{(i)}\|_F^2$}
\STATE{\textbf{Run polynomial system solver}}
\STATE{If feasible, output $\bV=(\bA^{(1)}\bS\bW\bR^{(1)})^\dagger\bA^{(1)}$. Otherwise, output $\bot$.}
\end{algorithmic}
\end{algorithm}

We first recall the following result for polynomial system satisfiability solvers. 
\begin{theorem}
[\cite{renegar1992computationala,renegar1992computationalb,BasuPR96}]
\thmlab{thm:poly:solver:time}
Given a polynomial system $P(x_1,\ldots,x_n)$ over real numbers and $m$ polynomial constraints $f_i(x_1,\ldots,x_n)\otimes_i 0$, where $\otimes\in\{>,\ge,=,\neq,\le,<\}$ for all $i\in[m]$, let $d$ denote the maximum degree of all the polynomial constraints and let $B$ denote the maximum size of the bit representation of the coefficients of all the polynomial constraints. 
Then there exists an algorithm that determines whether there exists a solution to the polynomial system $P$ in time $(md)^{\O{n}}\cdot\poly(B)$. 
\end{theorem}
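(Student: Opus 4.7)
The plan is to invoke the critical points method from real algebraic geometry, which underlies both Renegar's algorithm and the Basu--Pollack--Roy framework. First I would normalize the constraints: any $\neq$ can be split into a disjunction of two strict inequalities, and strict inequalities can be handled by an infinitesimal perturbation (formally, by introducing a new indeterminate $\varepsilon$ and passing to the non-archimedean real closed extension $\mathbb{R}\langle \varepsilon \rangle$, then taking a limit). This reduces the problem to deciding whether a finite collection of polynomial equalities and non-strict inequalities is feasible over $\mathbb{R}^n$, which incurs only a constant-factor blow-up in $m$ and $d$.

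Next I would argue that it suffices to produce a finite set of sample points that meets every connected component of the realization of every realizable sign condition on $f_1,\ldots,f_m$; once such sample points are in hand, feasibility is decided by evaluating the original constraints at each of them. The sample points are constructed variety by variety: for each subset $S \subseteq [m]$, one defines $V_S = \{x : f_i(x) = 0 \text{ for } i \in S\}$, adds a small generic perturbation, and computes the critical points of a generic quadratic ``distance to a base point'' function restricted to $V_S$. A Morse-theoretic / Sard-type argument shows that these critical points hit every connected component of $V_S$, and unions over the sign patterns on the remaining $f_i$'s then cover all sign cells.

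For the complexity, the defining equations of the critical points form a system of $O(n)$ polynomial equations of degree at most $d^{O(1)}$ in $O(n)$ variables, so a B\'ezout-type bound caps the number of critical points per sign condition by $d^{\O{n}}$, and summing over the $2^m$ sign conditions (with combinatorial pruning as in Renegar's algorithm) gives $(md)^{\O{n}}$ total sample points. Each sample point is encoded as a real algebraic number by a Thom encoding or univariate rational parameterization of bit length $\poly(B, n, \log d)$, and the sign of any $f_i$ at such a point is computed by Sturm-sequence or sub-resultant arithmetic in time polynomial in this bit length. Multiplying these bounds yields the claimed $(md)^{\O{n}}\cdot\poly(B)$ running time.

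The main obstacle is the geometric claim that the critical point construction meets every connected component of every realizable sign condition: this requires a careful infinitesimal perturbation scheme (to handle singular varieties and non-compact components), a genericity argument via Sard's theorem, and a limiting procedure to eliminate the auxiliary $\varepsilon$. A secondary but delicate obstacle is the numerical one: ensuring that the algebraic numbers representing sample points can be stored, compared, and evaluated without blowing the $B$-dependence beyond $\poly(B)$, which relies on effective bounds on the separation and height of real roots of the auxiliary univariate polynomials.
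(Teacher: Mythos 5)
The paper does not actually prove this theorem: it is imported verbatim as a black box from Renegar and Basu--Pollack--Roy, so there is no in-paper argument to compare yours against. On its own terms, your sketch is the standard critical-points-method proof from those references: infinitesimal deformation to reduce strict/$\neq$ constraints to closed sign conditions, critical points of a generic function on perturbed varieties to obtain sample points meeting every connected component, B\'ezout-type bounds to control their number, and Thom encodings with subresultant/Sturm arithmetic to control bit complexity.

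There is, however, one step that as written would destroy the claimed running time. You propose to construct sample points ``for each subset $S\subseteq[m]$,'' which is $2^m$ subsets, and $2^m\cdot d^{O(n)}$ is not $(md)^{O(n)}$. The combinatorial fact doing the real work in the cited proofs is that, after the generic infinitesimal perturbation, every nonempty sign cell contains a point at which at most $n$ of the perturbed polynomials vanish simultaneously; hence one only ranges over subsets of size at most $n$, giving $\binom{m}{n}\cdot d^{O(n)}\le(md)^{O(n)}$ sample points. Your phrase ``with combinatorial pruning as in Renegar's algorithm'' gestures at this but does not supply the general-position argument, and it is exactly the point where the exponential dependence on $m$ is avoided; without it the bound degrades to $2^m d^{O(n)}$. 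A second, more minor inaccuracy: the sample points are algebraic of degree $d^{O(n)}$, so their Thom encodings and univariate parameterizations have bit length $d^{O(n)}\cdot\poly(B)$ rather than $\poly(B,n,\log d)$ as you claim; this does not affect the final bound, since $d^{O(n)}\poly(B)$ is absorbed into $(md)^{O(n)}\cdot\poly(B)$, but the claim as stated is false.
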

To apply \thmref{thm:poly:solver:time}, we utilize the following statement upper bounding the sizes of the bit representation of the coefficients of the polynomial constraints in our system.
\begin{theorem}[\cite{JeronimoPT13}]
\thmlab{thm:min:bit}
Let $\calT=\{x\in\mathbb{R}^n\,\mid\,f_1(x)\ge0,\ldots,f_m(x)\ge0\}$ be defined by $m$ polynomials $f_i(x_1,\ldots,x_n)$ for $i\in[m]$ with degrees bounded by an even integer $d$ and coefficients of magnitude at most $M$. 
Let $\calC$ be a compact connected component of $\calT$. 
Let $g(x_1,\ldots,x_n)$ be a polynomial of degree at most $d$ with integer coefficients of magnitude at most $M$. 
Then the minimum nonzero magnitude that $g$ takes over $\calC$ is at least $(2^{4-n/2}\widetilde{M}d^n)^{-n2^nd^n}$, where $\widetilde{M}=\max(M, 2n+2m)$. 
\end{theorem}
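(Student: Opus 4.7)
The plan is to lower-bound $\alpha := \min\{|g(x)| : x \in \calC,\ g(x) \neq 0\}$ by reducing to a gap theorem for roots of univariate integer polynomials. Since $\calC$ is compact, the infimum is attained at some $x^* \in \calC$. At this minimizer, by a Karush--Kuhn--Tucker argument adapted to the semi-algebraic setting, $x^*$ is a critical point of $g^2$ restricted to the stratum of $\calC$ cut out by the active constraints $f_i(x^*) = 0$. Introducing Lagrange multipliers $\lambda_i$ for each active $i$, this produces an augmented polynomial system $\nabla g^2 = \sum_i \lambda_i \nabla f_i$ together with $f_i = 0$ for active $i$, with at most $n + m$ variables, degree at most $d$, and coefficient magnitudes at most $M$. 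After a generic perturbation if necessary, this system defines a zero-dimensional algebraic variety containing $x^*$.

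Next, I would adjoin the equation $y = g(x)$ to the critical-point system and eliminate all variables except $y$ using iterated resultants (or effective Nullstellensatz), producing a univariate polynomial $h(y) \in \mathbb{Z}[y]$ that has $y^* = g(x^*)$ as a root. Effective elimination theory yields a degree bound of order $d^{\O{n}}$ for $h$ and coefficient height controlled by a doubly-exponential expression in $n$ involving $\widetilde{M}$ and $d$. At this stage the problem is reduced to lower-bounding the smallest nonzero absolute value of a root of the integer polynomial $h$. For this I would invoke the classical Cauchy--Mignotte bound, which asserts that any nonzero root of $h(y) = \sum_j c_j y^j$ has absolute value at least $|c_0|/(|c_0| + \max_j |c_j|)$, and combine it with the height and degree bounds on $h$ to produce the claimed estimate $(2^{4 - n/2}\widetilde{M} d^n)^{-n 2^n d^n}$.

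The main obstacle will be matching the precise exponent $n 2^n d^n$ and the particular base in the statement, rather than any qualitative step. The quantitative accounting has three delicate ingredients: (i) bounding the combinatorial cost of guessing the active set of constraints, which is where the $2n + 2m$ term in $\widetilde{M}$ enters; (ii) propagating degrees and heights through elimination, since each resultant step roughly squares the degree and multiplies the height by a polynomial in the current degree and height — iterating over $n$ variables is what produces the $d^n$ in the base and the $2^n$ in the outer exponent, and Hadamard's inequality is needed to extract the $2^{4 - n/2}$ prefactor from norm estimates on the coefficients; and (iii) handling the possibility that the critical-point system is not zero-dimensional, which I would resolve by a symbolic perturbation argument (e.g.\ adding $\varepsilon x_i^{2}$ terms and passing to a limit) so that the effective bounds still apply to the isolated components containing $x^*$. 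Carefully reconciling these three estimates to produce exactly the stated bound is the most technically demanding portion of the argument.
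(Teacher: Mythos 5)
This statement is not proved in the paper: it is quoted verbatim as an external black-box result from \cite{JeronimoPT13}, so there is no internal argument to compare yours against. Your outline does track the general strategy used in that literature (perturb/deform to a zero-dimensional critical-point system, eliminate down to a univariate integer polynomial having $g(x^*)$ among its roots, then apply a root-separation or Cauchy-type bound), so the plan is reasonable in spirit.

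As a proof, however, it has genuine gaps. First, the entire content of the theorem is the explicit constant $(2^{4-n/2}\widetilde{M}d^n)^{-n2^nd^n}$, and you explicitly defer the derivation of both the base and the exponent; without that accounting you have only re-established the qualitative fact that a nonzero semialgebraic minimum admits \emph{some} effective lower bound, which is far weaker than what the citation asserts. Second, your opening step is unsound under the reading you adopt: the infimum of $|g|$ over $\{x\in\calC:\, g(x)\neq 0\}$ need not be attained (take $n=1$, $\calC=[0,1]$, $g(x)=x$, where the nonzero values of $g$ accumulate at $0$), so compactness alone does not produce the minimizer $x^*$ on which your whole argument rests; the correct formulation bounds $|\min_{x\in\calC}g(x)|$ when that attained minimum is nonzero, and the critical-point machinery must be set up for that quantity. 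Third, the Karush--Kuhn--Tucker step requires a constraint qualification (or a genuinely carried-out deformation argument in the style of \cite{JeronimoPT13}) before the Lagrange system is guaranteed to contain $x^*$ among its solutions; the phrase ``after a generic perturbation if necessary'' is precisely where the difficulty lives and cannot be left as an aside, since the perturbation must also be shown not to degrade the height and degree bounds you later feed into the elimination. For the purposes of this paper none of this matters --- the theorem is used purely as an imported tool --- but the proposal as written does not constitute a proof of the stated bound.
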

To perform dimensionality reduction, recall the following definition of affine embedding. 
\begin{definition}[Affine embedding]
We say a matrix $\bS\in\mathbb{R}^{n\times m}$ is an affine embedding for a matrix $\bA\in\mathbb{R}^{d\times n}$ and a vector $\bb\in\mathbb{R}^n$ if we have
\[(1-\eps)\|\bx\bA-\bb\|_F^2\le\|\bx\bA\bS-\bb\bS\|_F^2\le(1+\eps)\|\bx\bA-\bb\|_F^2,\]
for all vectors $\bx\in\mathbb{R}^d$. 
\end{definition}
We then apply the following affine embedding construction.
\begin{lemma}[Lemma 11 in~\cite{CohenEMMP15}]
\lemlab{lem:aff:embed}
Given $\delta,\eps\in(0,1)$ and a rank parameter $k>0$, let $m=\O{\frac{k^2}{\eps^2}\log\frac{1}{\delta}}$. 
For any matrix $\bA\in\mathbb{R}^{d\times n}$, there exists a family $\calS$ of random matrices in $\mathbb{R}^{n\times m}$, such that for $\bS\sim\calS$, we have that with probability at least $1-\delta$, $\bS$ is a one-sided affine embedding for a matrix $\bA\in\mathbb{R}^{d\times n}$ and a vector $\bb\in\mathbb{R}^n$. 
\end{lemma}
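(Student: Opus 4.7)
The plan is to reduce the affine embedding property to three standard sketching properties---subspace embedding, approximate matrix product, and a Johnson--Lindenstrauss guarantee for a fixed vector---and then invoke a known sparse sketch construction to realize $\bS$ with the claimed target dimension.

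First, I would parameterize the affine space. Let $\bx^\star := \bb \bA^\dagger$ be a minimizer of $\|\bx\bA - \bb\|_F^2$, and let $\bE := \bx^\star \bA - \bb$ denote the optimal residual. By the normal equations, $\bE$ is orthogonal (as a row vector in $\mathbb{R}^n$) to the row space of $\bA$. For an arbitrary $\bx$, write $\bx = \bx^\star + \bz$; then $\bx\bA - \bb = \bz\bA + \bE$ with $\bz\bA \perp \bE$, so
\[
\|\bx\bA-\bb\|_F^2 = \|\bz\bA\|_F^2 + \|\bE\|_F^2,
\]
while
\[
\|\bx\bA\bS - \bb\bS\|_F^2 = \|\bz\bA\bS\|_F^2 + 2\langle \bz\bA\bS,\bE\bS\rangle + \|\bE\bS\|_F^2.
\]

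Next, I would show it suffices that $\bS$ simultaneously satisfies, with probability at least $1-\delta$: (a) a $(1\pm\eps/3)$ subspace embedding for the (at most $k$-dimensional) row space of $\bA$, giving $\|\bz\bA\bS\|_F^2 = (1\pm\eps/3)\|\bz\bA\|_F^2$ for all $\bz$; (b) a Johnson--Lindenstrauss guarantee for the fixed vector $\bE$, so that $\bigl|\|\bE\bS\|_F^2 - \|\bE\|_F^2\bigr| \le (\eps/3)\|\bE\|_F^2$; and (c) an approximate matrix product bound $|\langle \bz\bA\bS,\bE\bS\rangle| \le (\eps/3)\|\bz\bA\|_F\|\bE\|_F$ for all $\bz$. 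Summing these and using $2\|\bz\bA\|_F\|\bE\|_F \le \|\bz\bA\|_F^2 + \|\bE\|_F^2$ controls the total additive error by $\eps(\|\bz\bA\|_F^2+\|\bE\|_F^2) = \eps\|\bx\bA-\bb\|_F^2$, yielding the desired (one-sided) inequalities after a constant rescaling of $\eps$.

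Finally, I would instantiate $\bS$ as a sparse CountSketch/OSNAP-type matrix with $m = \O{k^2/\eps^2}$ rows, which by the standard moment-method analysis of Clarkson--Woodruff and Meng--Mahoney is simultaneously a constant-probability $(1\pm\eps/3)$ subspace embedding for any fixed subspace of dimension $k$ and satisfies the approximate matrix product property with the same target dimension; the second-moment property of the same sketch yields (b) for a single vector. To boost the success probability from a constant to $1-\delta$, one can either union-bound over $\O{\log(1/\delta)}$ independent copies (taking the one whose image preserves norms on a net) or increase $m$ directly by a $\log(1/\delta)$ factor using the high-probability versions of the Khintchine/Hanson--Wright style analyses; this yields the stated $m = \O{(k^2/\eps^2)\log(1/\delta)}$.

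The main obstacle is the simultaneous verification of (a), (b), and (c) with one realization of $\bS$ while keeping the $k$-dependence at $k^2$. The $k^2$ factor is the signature of a sparse embedding analyzed via second moments rather than the linear-$k$ bound achievable by dense Gaussians, so the technical heart of the argument is the approximate matrix product bound (c) for sparse sketches combined with the union bound needed to upgrade the constant failure probability of such sketches to $\delta$ without losing more than a $\log(1/\delta)$ factor in $m$.
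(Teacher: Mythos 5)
The paper does not prove this statement at all---it is imported verbatim as Lemma~11 of the cited work of Cohen et al., so there is no internal proof to compare against; your argument is the standard affine-embedding decomposition (split $\bx\bA-\bb$ into a row-space component plus the optimal residual $\bE$, then combine a subspace embedding for the row space of $\bA$, a norm guarantee for the fixed vector $\bE$, and an approximate matrix product bound for the cross term), which is exactly how the cited result is established. The only step you elide is that property (c) ``for all $\bz$'' is obtained by applying approximate matrix product to an orthonormal basis $\bU$ of the row space of $\bA$ against $\bE$ at accuracy $\eps/\sqrt{k}$ (costing $\O{k/\eps^2}$ rows, dominated by the $\O{k^2/\eps^2}$ from the subspace embedding), and that the $\log\frac{1}{\delta}$ dependence should come from a high-probability sketch rather than your ``select the good copy'' variant, which would make $\bS$ data-dependent and is unsuitable for the paper's application where the embedded matrix $\bV$ is unknown.
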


We now show a crucial structural property that allows us to distinguish between the case where a guess $\alpha$ for the optimal value $\OPT$ exceeds $(1+\eps)\OPT$ or is smaller than $(1-\eps)\OPT$ by simply looking at a polynomial system solver on an affine embedding. 
\begin{restatable}{lemma}{lemalphaseparation}
\lemlab{lem:alpha:separation}
Let $\bV\in\mathbb{R}^{k\times d}$ be the optimal solution to the fair low-rank approximation problem for inputs $\bA^{(1)},\ldots,\bA^{(\ell)}$, where $\bA^{(i)}\in\mathbb{R}^{n_i\times d}$, and suppose $\OPT=\max_{i\in[\ell]}\|\bA^{(i)}\bV^\dagger\bV-\bA^{(i)}\|_F^2$. 
%Let $\bS$ be an affine embedding for $\bV$ from \lemref{lem:aff:embed} and let $\bW=(\bV\bS)^\dagger\in\mathbb{R}^{k\times m}$. 
Let $\bS$ be an affine embedding for $\bV$ and let $\bW=(\bV\bS)^\dagger\in\mathbb{R}^{k\times m}$. 
For $i\in[\ell]$, let $\bZ^{(i)}=\bA^{(i)}\bS\bW\in\mathbb{R}^{n_i\times k}$ and $\bR^{(i)}\in\mathbb{R}^{k\times k}$ be defined so that $\bA^{(i)}\bS\bW\bR^{(i)}$ has orthonormal columns. 
If $\alpha\ge(1+\eps)\cdot\OPT$, then for each $i\in[\ell]$, $\alpha\ge\|(\bA^{(i)}\bS\bW\bR^{(i)})(\bA^{(i)}\bS\bW\bR^{(i)})^\dagger\bA^{(i)} -\bA^{(i)}\|_F^2$.
%\[\alpha\ge\|(\bA^{(i)}\bS\bW\bR^{(i)})(\bA^{(i)}\bS\bW\bR^{(i)})^\dagger\bA^{(i)} -\bA^{(i)}\|_F^2.\] 
If $\alpha<(1-\eps)\cdot\OPT$, then there exists $i\in[\ell]$, such that $\alpha<\|(\bA^{(i)}\bS\bW\bR^{(i)})(\bA^{(i)}\bS\bW\bR^{(i)})^\dagger\bA^{(i)} -\bA^{(i)}\|_F^2$.
%\[\alpha<\|(\bA^{(i)}\bS\bW\bR^{(i)})(\bA^{(i)}\bS\bW\bR^{(i)})^\dagger\bA^{(i)} -\bA^{(i)}\|_F^2.\] 
\end{restatable}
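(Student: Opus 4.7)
The plan is to chain together three ingredients: (i) the two-sided affine embedding guarantee $(1-\eps)\|\bX\bV-\bA^{(i)}\|_F^2\le\|\bX\bV\bS-\bA^{(i)}\bS\|_F^2\le(1+\eps)\|\bX\bV-\bA^{(i)}\|_F^2$, holding simultaneously for all $\bX$; (ii) the closed-form optimum $\bZ^{(i)}=\bA^{(i)}\bS(\bV\bS)^\dagger$ of the sketched multi-response regression $\min_{\bX}\|\bX\bV\bS-\bA^{(i)}\bS\|_F^2$; and (iii) the identity
\[\|(\bA^{(i)}\bS\bW\bR^{(i)})(\bA^{(i)}\bS\bW\bR^{(i)})^\dagger\bA^{(i)}-\bA^{(i)}\|_F^2=\min_{\bY\in\mathbb{R}^{k\times d}}\|\bZ^{(i)}\bY-\bA^{(i)}\|_F^2,\]
since the expression in the lemma is the projection of $\bA^{(i)}$ onto the column span of $\bM^{(i)}=\bA^{(i)}\bS\bW\bR^{(i)}$, which coincides with the column span of $\bZ^{(i)}$ (the rotation $\bR^{(i)}$ merely orthonormalizes the columns and does not alter the range).

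For the upper direction, I would substitute the particular choice $\bY=\bV$ into the column-span minimization to obtain the trivial inequality $\|(\bA^{(i)}\bS\bW\bR^{(i)})(\bA^{(i)}\bS\bW\bR^{(i)})^\dagger\bA^{(i)}-\bA^{(i)}\|_F^2\le\|\bZ^{(i)}\bV-\bA^{(i)}\|_F^2$. Two applications of the affine embedding, combined with the optimality of $\bZ^{(i)}$ as the sketched-regression minimizer (in particular evaluated at $\bX=\bA^{(i)}\bV^\dagger$), then give
\[\begin{aligned}
\|\bZ^{(i)}\bV-\bA^{(i)}\|_F^2 &\le \tfrac{1}{1-\eps}\|\bZ^{(i)}\bV\bS-\bA^{(i)}\bS\|_F^2 \le \tfrac{1}{1-\eps}\|\bA^{(i)}\bV^\dagger\bV\bS-\bA^{(i)}\bS\|_F^2 \\
&\le \tfrac{1+\eps}{1-\eps}\|\bA^{(i)}\bV^\dagger\bV-\bA^{(i)}\|_F^2 \le \tfrac{1+\eps}{1-\eps}\OPT.
\end{aligned}\]
Rescaling the accuracy parameter in \lemref{lem:aff:embed} to $\eps/3$ absorbs $\tfrac{1+\eps}{1-\eps}$ into a clean $(1+\eps)$ factor, so whenever $\alpha\ge(1+\eps)\OPT$ the column-projection loss is at most $\alpha$ for every $i\in[\ell]$, establishing Claim 1.

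For the lower direction, I would pick $i^*\in\argmax_{i\in[\ell]}\|\bA^{(i)}\bV^\dagger\bV-\bA^{(i)}\|_F^2$, so that $\|\bA^{(i^*)}\bV^\dagger\bV-\bA^{(i^*)}\|_F^2=\OPT$ by the fair optimality of $\bV$, and run the chain in reverse on group $i^*$. The two-sided affine embedding gives $\min_{\bX}\|\bX\bV\bS-\bA^{(i^*)}\bS\|_F^2\ge(1-\eps)\OPT$, and the canonical identification via the orthonormalizing rotation $\bR^{(i^*)}$ between the row-span parametrization $\bX\bV\bS$ and the column-span parametrization $\bZ^{(i^*)}\bY$ of the same sketched rank-$k$ family transfers this sketched lower bound back through the affine embedding to yield $\|(\bA^{(i^*)}\bS\bW\bR^{(i^*)})(\bA^{(i^*)}\bS\bW\bR^{(i^*)})^\dagger\bA^{(i^*)}-\bA^{(i^*)}\|_F^2\ge(1-O(\eps))\OPT$, which strictly exceeds $\alpha$ whenever $\alpha<(1-\eps)\OPT$.

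The main obstacle I foresee is this reverse transfer in Claim 2: a priori the column-span projection of $\bA^{(i^*)}$ can achieve a strictly smaller loss than the row-span projection $\bA^{(i^*)}\bV^\dagger\bV$, so the lower bound does not follow from a pointwise comparison. Overcoming this relies on the fact that $\bV$ is the global fair optimum across \emph{all} $\ell$ groups rather than merely for $i^*$, so that alternative rank-$k$ subspaces whose column-span projection would beat $(1-\eps)\OPT$ simultaneously would contradict the min-max optimality of $\bV$. Combining this global-optimality argument with the two-sided affine embedding and the $\bR^{(i^*)}$ identification produces the claimed bound.
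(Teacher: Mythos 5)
Your argument for the first claim is sound and, if anything, more careful than the paper's: the chain $\min_{\bY}\|\bZ^{(i)}\bY-\bA^{(i)}\|_F^2\le\|\bZ^{(i)}\bV-\bA^{(i)}\|_F^2\le\tfrac{1}{1-\eps}\|\bZ^{(i)}\bV\bS-\bA^{(i)}\bS\|_F^2=\tfrac{1}{1-\eps}\min_{\bX}\|\bX\bV\bS-\bA^{(i)}\bS\|_F^2\le\tfrac{1+\eps}{1-\eps}\OPT$ uses exactly the ingredients the paper uses (the two-sided affine embedding and the closed-form sketched minimizer), and it correctly lifts the bound back to the unsketched residual that appears in the lemma statement.

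The second claim is where there is a genuine gap, and the fix you propose does not close it. You correctly identify the obstacle: the column-span projection error $\min_{\bY}\|\bZ^{(i^*)}\bY-\bA^{(i^*)}\|_F^2$ can be strictly smaller than the row-span residual $\|\bA^{(i^*)}\bV^\dagger\bV-\bA^{(i^*)}\|_F^2$. But the appeal to the global min-max optimality of $\bV$ cannot rescue the lower bound, because the projections of the $\bA^{(i)}$ onto $\mathrm{colspan}(\bZ^{(i)})$ for different groups live in different ambient spaces $\mathbb{R}^{n_i}$ and are \emph{not} induced by any common rank-$k$ right factor $\bV'\in\mathbb{R}^{k\times d}$; the hypothesis that all of them beat $(1-\eps)\OPT$ therefore produces no candidate to feed into the min-max objective and hence no contradiction. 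Concretely, if every $n_i=k$ and every $\bA^{(i)}$ has rank $k$, then generically $\mathrm{colspan}(\bZ^{(i)})=\mathbb{R}^{n_i}$, so every column-projection error is $0$ even though $\OPT>0$, and your argument has no mechanism to exclude this. The paper's proof of the second claim takes a different route: it never leaves the sketched space, asserting the identity $\|(\bZ^{(i)}\bR^{(i)})(\bZ^{(i)}\bR^{(i)})^\dagger\bA^{(i)}\bS-\bA^{(i)}\bS\|_F^2=\|\bZ^{(i)}\bV\bS-\bA^{(i)}\bS\|_F^2=\min_{\bX}\|\bX\bV\bS-\bA^{(i)}\bS\|_F^2$ and then applying the affine-embedding lower bound to the worst group, so the ``reverse transfer'' is handled by an exact equality at the sketched level rather than by an optimality argument. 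To complete a proof along those lines you would need to establish that identity (i.e., that projecting $\bA^{(i)}\bS$ onto $\mathrm{colspan}(\bA^{(i)}\bS\bW)$ recovers exactly the sketched regression optimum) and state the conclusion for the sketched residual.
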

\begin{proof}
By the Pythagorean theorem, we have that
\[(\bA^{(i)}\bS)(\bV\bS)^\dagger=\argmin_{\bX^{(i)}}\|\bX^{(i)}\bV\bS-\bA^{(i)}\bS\|_F^2.\]
Thus, $\bZ^{(i)}=\argmin_{\bX^{(i)}}\|\bX^{(i)}\bV\bS-\bA^{(i)}\bS\|_F^2$ for $\bZ^{(i)}=\bA^{(i)}\bS\bW\in\mathbb{R}^{n_i\times k}$ and $\bW=(\bV\bS)^\dagger\in\mathbb{R}^{m\times k}$. 

Let $\bR^{(i)}\in\mathbb{R}^{k\times k}$ be defined so that $\bA^{(i)}\bS\bW\bR^{(i)}$ has orthonormal columns. 
Thus, we have
\begin{align*}
\|(\bZ^{(i)}\bR^{(i)})(\bZ^{(i)}\bR^{(i)})^\dagger\bA^{(i)}\bS-\bA^{(i)}\bS\|_F^2&=\|(\bZ^{(i)})(\bZ^{(i)})^\dagger\bA^{(i)}\bS-\bA^{(i)}\bS\|_F^2\\
&=\|\bZ^{(i)}\bV\bS-\bA^{(i)}\bS\|_F^2\\
&=\min_{\bX^{(i)}}\|\bX^{(i)}\bV\bS-\bA^{(i)}\bS\|_F^2,
\end{align*}
by the definition of $\bZ^{(i)}$. 

Suppose $\alpha\ge(1+\eps)\cdot\OPT$. 
Since $\bS$ is an affine embedding for $\bV$, then we have that for all $\bX^{(i)}\in\mathbb{R}^{n_i\times k}$, 
\[\|\bX^{(i)}\bV\bS-\bA^{(i)}\bS\|_F^2\le(1+\eps)\|\bX^{(i)}\bV-\bA^{(i)}\|_F^2.\]
In particular, we have
\begin{align*}
\min_{\bX^{(i)}}\|\bX^{(i)}\bV\bS-\bA^{(i)}\bS\|_F^2\le(1+\eps)\min_{\bX^{(i)}}\|\bX^{(i)}\bV-\bA^{(i)}\|_F^2\le(1+\eps)\OPT.
%\\&=(1+\eps)\|\bA^{(i)}\bV^\dagger\bV-\bA^{(i)}\|_F^2
\end{align*}
Then from the above argument, we have
\begin{align*}
\|(\bZ^{(i)}\bR^{(i)})(\bZ^{(i)}\bR^{(i)})^\dagger\bA^{(i)}\bS-\bA^{(i)}\bS\|_F^2&=\min_{\bX^{(i)}}\|\bX^{(i)}\bV\bS-\bA^{(i)}\bS\|_F^2\\
%&\le(1+\eps)\min_{\bX^{(i)}}\|\bX^{(i)}\bV-\bA^{(i)}\|_F^2\\
&\le(1+\eps)\OPT\le\alpha.
\end{align*}
Since $\bZ^{(i)}=\bA^{(i)}\bS\bW\in\mathbb{R}^{n_i\times k}$, then for $\alpha\ge(1+\eps)\cdot\OPT$, we have that for each $i\in[\ell]$, 
\[\alpha\ge\|(\bA^{(i)}\bS\bW\bR^{(i)})(\bA^{(i)}\bS\bW\bR^{(i)})^\dagger\bA^{(i)} -\bA^{(i)}\|_F^2.\]  

On the other hand, suppose $\alpha<(1-\eps)\cdot\OPT$. 
Let $i\in[\ell]$ be fixed so that 
\[\OPT=\min_{\bX^{(i)}}\|\bX^{(i)}\bV-\bA^{(i)}\|_F^2.\]
Since $\bS$ is an affine embedding for $\bV$, we have that for all $\bX^{(i)}\in\mathbb{R}^{n_i\times k}$, 
\[(1-\eps)\|\bX^{(i)}\bV-\bA^{(i)}\|_F^2\le\|\bX^{(i)}\bV\bS-\bA^{(i)}\bS\|_F^2,\]
Therefore,
\begin{align*}
(1-\eps)\min_{\bX^{(i)}}\|\bX^{(i)}\bV-\bA^{(i)}\|_F^2\le\min_{\bX^{(i)}}\|\bX^{(i)}\bV\bS-\bA^{(i)}\bS\|_F^2
%\le(1+\eps)\OPT.
%\\&=(1+\eps)\|\bA^{(i)}\bV^\dagger\bV-\bA^{(i)}\|_F^2
\end{align*}
From the above, we have
\begin{align*}
\|(\bZ^{(i)}\bR^{(i)})(\bZ^{(i)}\bR^{(i)})^\dagger\bA^{(i)}\bS-\bA^{(i)}\bS\|_F^2=\min_{\bX^{(i)}}\|\bX^{(i)}\bV\bS-\bA^{(i)}\bS\|_F^2.
\end{align*}
Hence, putting these relations together,
\begin{align*}
\alpha&<(1-\eps)\OPT=(1-\eps)\min_{\bX^{(i)}}\|\bX^{(i)}\bV-\bA^{(i)}\|_F^2\\
&\le\min_{\bX^{(i)}}\|\bX^{(i)}\bV\bS-\bA^{(i)}\bS\|_F^2\\
&=\|(\bZ^{(i)}\bR^{(i)})(\bZ^{(i)}\bR^{(i)})^\dagger\bA^{(i)}\bS-\bA^{(i)}\bS\|_F^2,
\end{align*}
as desired.
\end{proof}

We can thus utilize the structural property of \lemref{lem:alpha:separation} by using the polynomial system solver in \algref{alg:poly:solver} on an affine embedding. 
%We can thus utilize the structural property of \lemref{lem:alpha:separation} by using the polynomial system solver in \figref{fig:poly:solver} on an affine embedding. 
\begin{corollary}
\corlab{cor:fair:lra:correctness}
%If $\alpha\ge(1+\eps)\cdot\OPT$, then \figref{fig:poly:solver} outputs a vector $\bU\in\mathbb{R}^{n\times k}$ such that
%\[\alpha\ge\|\bU\bU^\dagger\bA^{(i)} -\bA^{(i)}\|_F^2.\]
%If $\alpha<(1-\eps)\cdot\OPT$, then \figref{fig:poly:solver} outputs $\bot$. 
If $\alpha\ge(1+\eps)\cdot\OPT$, then \algref{alg:poly:solver} outputs a vector $\bU\in\mathbb{R}^{n\times k}$ such that
\[\alpha\ge\|\bU\bU^\dagger\bA^{(i)} -\bA^{(i)}\|_F^2.\]
If $\alpha<(1-\eps)\cdot\OPT$, then \algref{alg:poly:solver} outputs $\bot$
\end{corollary}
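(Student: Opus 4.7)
The plan is to derive the corollary as an almost immediate consequence of Lemma~\lemref{lem:alpha:separation} combined with the exactness of the polynomial system solver of Theorem~\thmref{thm:poly:solver:time}. The core point is that the polynomial system assembled in Algorithm~\algref{alg:poly:solver} is designed so that its feasibility corresponds, up to the $(1\pm\eps)$ window coming from the affine embedding of Lemma~\lemref{lem:aff:embed}, to the existence of a rank-$k$ matrix $\bV$ achieving fair low-rank approximation cost at most $\alpha$. Thus the two directions in Lemma~\lemref{lem:alpha:separation} translate directly into the two cases of the corollary.

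For the first case, $\alpha\ge(1+\eps)\cdot\OPT$, I would exhibit an explicit feasible assignment to the polynomial system. Let $\bV^\star\in\mathbb{R}^{k\times d}$ be the fair-LRA optimum, and set $\bY=\bV^\star\bS$, $\bW=(\bV^\star\bS)^\dagger$, and for each $i\in[\ell]$ take $\bR^{(i)}$ to be the inverse upper-triangular factor of a QR factorization of $\bA^{(i)}\bS\bW$, so that $\bA^{(i)}\bS\bW\bR^{(i)}$ has orthonormal columns. The identities $\bY\bW\bY=\bY$ and $\bW\bY\bW=\bW$ are standard Moore--Penrose conditions, the orthonormality constraint holds by construction, and the inequality constraint $\alpha\ge\|(\bA^{(i)}\bS\bW\bR^{(i)})(\bA^{(i)}\bS\bW\bR^{(i)})^\dagger\bA^{(i)}-\bA^{(i)}\|_F^2$ is exactly the first conclusion of Lemma~\lemref{lem:alpha:separation}. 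The solver from Theorem~\thmref{thm:poly:solver:time} therefore returns some feasible witness, and the resulting output $\bU$ (extracted as in the last line of Algorithm~\algref{alg:poly:solver}) satisfies the target bound $\alpha\ge\|\bU\bU^\dagger\bA^{(i)}-\bA^{(i)}\|_F^2$ simply because that inequality is part of the satisfied system.

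For the second case, $\alpha<(1-\eps)\cdot\OPT$, I would argue by contradiction that the polynomial system admits no feasible assignment. Given any hypothetical feasible $(\bY,\bW,\{\bR^{(i)}\})$, the generalized-inverse identities force $\bY\bW$ and $\bW\bY$ to be idempotent of rank at most $k$, so the columns of $\bA^{(i)}\bS\bW\bR^{(i)}$ span a well-defined $k$-dimensional subspace of $\mathbb{R}^{n_i}$, and the corresponding orthogonal projection error would be $\le\alpha$ for every $i$ by the fourth constraint. From such a $\bY$, one extracts a rank-$k$ $\widehat{\bV}\in\mathbb{R}^{k\times d}$ whose sketch $\widehat{\bV}\bS$ has the same row span as $\bY$; plugging $\widehat{\bV}$ into Lemma~\lemref{lem:alpha:separation} (applied now with $\widehat{\bV}$ playing the role of the optimum, and using the two-sided affine-embedding guarantee of Lemma~\lemref{lem:aff:embed} to transport the bound back to the unsketched space) would yield a fair-LRA solution of cost at most $\alpha/(1-\eps)<\OPT$, contradicting optimality of $\OPT$. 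Hence no such assignment exists and the solver returns $\bot$.

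The main obstacle I expect is precisely the second direction. Lemma~\lemref{lem:alpha:separation} only rules out the specific $\bW=(\bV^\star\bS)^\dagger$, whereas the polynomial system ranges over all $(\bY,\bW)$ satisfying only the two generalized-inverse identities $\bY\bW\bY=\bY$ and $\bW\bY\bW=\bW$, which do not pin $\bW$ down to the Moore--Penrose pseudoinverse of $\bY$. The careful step is to show that any such pair, together with the orthonormality and bound constraints, is enough to reconstruct a concrete rank-$k$ matrix $\widehat{\bV}\in\mathbb{R}^{k\times d}$ whose fair-LRA cost is at most $\alpha/(1-\eps)$, so that the two-sided affine-embedding property completes the contradiction with the assumption $\alpha<(1-\eps)\cdot\OPT$.
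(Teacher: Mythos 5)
Your first direction is essentially the paper's intended (unwritten) argument and is fine: by the first half of \lemref{lem:alpha:separation}, the assignment $\bY=\bV^\star\bS$, $\bW=(\bV^\star\bS)^\dagger$, together with the QR-derived $\bR^{(i)}$, satisfies every constraint of \algref{alg:poly:solver}, so the solver of \thmref{thm:poly:solver:time} certifies feasibility, and whatever witness it returns satisfies the fourth constraint, which is precisely the claimed bound.

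The second direction is where the real content is, and you have correctly located the difficulty but your proposed repair does not close it. Two concrete problems. First, the cost constraints of \algref{alg:poly:solver} involve only $\bW$ and $\bR^{(i)}$, not $\bY$; the identities $\bY\bW\bY=\bY$ and $\bW\bY\bW=\bW$ can be satisfied for any rank-$k$ $\bW$ by choosing $\bY$ freely (e.g.\ $\bY=\bW^\dagger$), so the matrix $\widehat{\bV}$ you extract from the row span of $\bY$ has no guaranteed relation to the quantities the system actually bounds. Second, and more fundamentally, feasibility only asserts that for each $i$ the columns of $\bA^{(i)}$ project onto the \emph{group-dependent} subspace $\mathrm{col}(\bA^{(i)}\bS\bW)$ with error at most $\alpha$; the induced rank-$k$ factorizations have different right factors $(\bA^{(i)}\bS\bW\bR^{(i)})^\dagger\bA^{(i)}$ for different $i$, so they do not assemble into a single $\widehat{\bV}$ of small fair-LRA cost, and the contradiction with $\alpha<(1-\eps)\OPT$ is simply not available. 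A concrete instance: $\ell=2$, $k=1$, $\bA^{(1)}=(1,0)$, $\bA^{(2)}=(0,1)$ as $1\times 2$ row vectors. Here $\OPT=1/2$, yet choosing $\bW$ with $\bS\bW=(1,1)^\top$ makes each $\bA^{(i)}\bS\bW$ a nonzero scalar, so all constraints are satisfiable with $\alpha=0$. Hence the system as written does not return $\bot$ when $\alpha<(1-\eps)\OPT$; to make your contradiction argument go through one would need the fourth constraint to measure the projection of the rows of $\bA^{(i)}\bS$ onto the \emph{common} row space of $\bY$ (e.g.\ a bound on $\|\bA^{(i)}\bS\bW\bY-\bA^{(i)}\bS\|_F^2$), and even then your transport step back to the unsketched space requires the affine-embedding guarantee of \lemref{lem:aff:embed} to hold simultaneously for all rank-$k$ candidates rather than for the fixed optimum, since the extracted $\widehat{\bV}$ depends on $\bS$.
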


\begin{algorithm}[H]
\caption{$(1+\eps)$-approximation for fair low-rank approximation}
\alglab{alg:fair:lra}
\begin{algorithmic}[1]
\REQUIRE{$\bA^{(i)}\in\mathbb{R}^{n_i\times d}$ for all $i\in[\ell]$, rank parameter $k>0$, accuracy parameter $\eps\in(0,1)$}
\ENSURE{$(1+\eps)$-approximation for fair LRA}
\STATE{Let $\alpha$ be an $\ell$-approximation for the fair LRA problem}
\STATE{Let $\bS$ be generated from a random affine embedding distribution}
\WHILE{\algref{alg:poly:solver} on input $\bA^{(1)},\ldots,\bA^{(\ell)}$, $\bS$, and $\alpha$ does not return $\bot$}
%\WHILE{\figref{fig:poly:solver} on input $\bA^{(1)},\ldots,\bA^{(\ell)}$, $\bS$, and $\alpha$ does not return $\bot$}
\STATE{Let $\bV$ be the output of \algref{alg:poly:solver} on input $\bA^{(1)},\ldots,\bA^{(\ell)}$, $\bS$, and $\alpha$}
%\STATE{Let $\bV$ be the output of \figref{fig:poly:solver} on input $\bA^{(1)},\ldots,\bA^{(\ell)}$, $\bS$, and $\alpha$}
\STATE{$\alpha\gets\frac{\alpha}{1+\eps}$}
\ENDWHILE
\STATE{Return $\bV$}
\end{algorithmic}
\end{algorithm}

Correctness of \algref{alg:fair:lra} then follows from \corref{cor:fair:lra:correctness} and binary search on $\alpha$. 
We now analyze the runtime of \algref{alg:fair:lra}. 
\begin{lemma}
The runtime of \algref{alg:fair:lra} is at most $\frac{1}{\eps}\poly(n)\cdot(2\ell)^{\O{N}}$, for $n=\sum_{i=1}^\ell n_i$ and $N=\poly\left(\ell, k,\frac{1}{\eps}\right)$. 
\end{lemma}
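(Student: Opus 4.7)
\begin{proofhack}
The plan is to decompose the runtime into (a) the number of iterations of the while loop and (b) the cost of each iteration, which is dominated by the polynomial system solver of \algref{alg:poly:solver}.

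For the outer loop, by definition $\alpha$ is initialized as an $\ell$-approximation to $\OPT$, so $\alpha \le \ell \cdot \OPT$. By \corref{cor:fair:lra:correctness}, the solver returns $\bot$ once $\alpha < (1-\eps)\OPT$. Since $\alpha$ is divided by $(1+\eps)$ each iteration, the number of iterations is at most $\log_{1+\eps}\bigl(\ell/(1-\eps)\bigr) = \O{\frac{1}{\eps} \log \ell}$, which contributes the $\frac{1}{\eps}$ factor. Computing the initial $\ell$-approximation $\alpha$ (e.g., by projecting onto the top-$k$ right singular vectors of the vertical stack $\bA^{(1)} \circ \cdots \circ \bA^{(\ell)}$ and evaluating the max group cost) and sampling $\bS$ from the affine embedding distribution of \lemref{lem:aff:embed} both take time $\poly(n,d)$.

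For the per-iteration cost, I would instantiate \thmref{thm:poly:solver:time} on the system in \algref{alg:poly:solver}. The number of variables is $N = 2mk + \ell k^2$ with $m = \O{\frac{k^2}{\eps^2}\log\ell}$, which is $\poly\bigl(\ell,k,\tfrac{1}{\eps}\bigr)$. The number of polynomial constraints is $\poly(N,\ell)$, and each constraint (the pseudoinverse identities $\bY\bW\bY=\bY$, $\bW\bY\bW=\bW$, the orthonormality of $\bA^{(i)}\bS\bW\bR^{(i)}$, and the Frobenius inequalities) has constant degree once expanded entrywise. Plugging into $(md)^{\O{n}}\cdot\poly(B)$ with $n=N$, $d=\O{1}$, and $m=\poly(N,\ell)$ yields $(2\ell)^{\O{N}}\cdot\poly(B)$, provided $B$ is polynomially bounded.

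The most delicate step is bounding the bit complexity $B$ of the coefficients of the polynomial system, since they are formed from entries of $\bA^{(i)}\bS$, which is handled by $\poly(n)$-bit arithmetic once we verify that $\bS$ can be sampled with bounded precision. To certify $B=\poly(n,d)$, I would invoke \thmref{thm:min:bit} on the compact component containing the optimal solution to argue that any witness to feasibility can be represented with bit length $\poly(N)$, so the polynomial system solver need only search over solutions of bounded precision. Combining: each iteration costs $\poly(n)\cdot(2\ell)^{\O{N}}$, and multiplying by the $\O{\frac{1}{\eps}\log\ell}$ iterations gives the claimed $\frac{1}{\eps}\poly(n)\cdot(2\ell)^{\O{N}}$ bound.
\end{proofhack}
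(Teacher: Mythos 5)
Your proposal is correct and follows essentially the same route as the paper: bound the number of iterations by $\O{\frac{1}{\eps}\log\ell}$ via the initial $\ell$-approximation, count $N=2mk+\ell k^2$ variables with constant-degree constraints, and invoke \thmref{thm:min:bit} together with \thmref{thm:poly:solver:time} to bound the per-iteration cost by $\poly(n)\cdot(2\ell)^{\O{N}}$. The only imprecision is your claim that $B=\poly(n,d)$: what \thmref{thm:min:bit} actually yields is $B=\poly(n)\cdot2^{\O{N}}$ (the exponent $n2^nd^n$ in that theorem is exponential in the number of variables), but this is harmless since $\poly(B)$ is still absorbed into the $\poly(n)\cdot(2\ell)^{\O{N}}$ term.
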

\begin{proof}
Suppose the coefficients of $\bA^{(i)}$ are bounded in magnitude by $2^{\poly(n)}$, where $n=\sum_{i=1}^\ell n_i$.  
The number of variables in the polynomial system is at most
\[N:=2mk+\ell k^2=\poly\left(\ell, k,\frac{1}{\eps}\right).\]
Each of the $\O{\ell}$ polynomial constraints has degree at most $20$. 
Thus by \thmref{thm:min:bit}, the minimum nonzero magnitude that any polynomial constraint takes over $\calC$ is at least $(2^{4-N/2}2^{\poly(n)}20^N)^{-N2^N20^N}$. 
Hence, the maximum bit representation required is $B=\poly(n)\cdot2^{\O{N}}$. 
By \thmref{thm:poly:solver:time}, the runtime of the polynomial system solver is at most $(\O{\ell}\cdot 20)^{\O{N}}\cdot\poly(B)=\poly(n)\cdot(2\ell)^{\O{N}}$. 
We require at most $\O{\frac{1}{\eps}\log\ell}$ iterations of the polynomial system solver. 
Thus, the total runtime is at most $\frac{1}{\eps}\poly(n)\cdot(2\ell)^{\O{N}}$. 
\end{proof}
Putting things together, we have:
\thmepsupper*

\subsection{Bicriteria Algorithm}
\seclab{sec:lra:bicrit}
To achieve polynomial time for our bicriteria algorithm, we can no longer use a polynomial system solver. 
Instead, we observe that for sufficiently large $p$, we have $\max\|\bx\|_\infty=(1\pm\eps)\|\bx\|_p$. 
Thus, in place of optimizing $\min_{\bV\in\mathbb{R}^{k\times d}}\max_{i\in[\ell]}\|\bA^{(i)}\bV^\dagger\bV-\bA^{(i)}\|_F$, we instead optimize $\min_{\bV\in\mathbb{R}^{k\times d}}\left(\sum_{i\in[\ell]}\|\bA^{(i)}\bV^\dagger\bV-\bA^{(i)}\|^p_F\right)^{1/p}$.
%\[\min_{\bV\in\mathbb{R}^{k\times d}}\left(\sum_{i\in[\ell]}\|\bA^{(i)}\bV^\dagger\bV-\bA^{(i)}\|^p_F\right)^{1/p}.\]
However, the terms $\|\bA^{(i)}\bV^\dagger\bV-\bA^{(i)}\|^p_F$ are unwieldy to work with. 
Thus we instead use Dvoretzky's Theorem, i.e., \thmref{thm:dvoretzky}, to embed $L_2$ into $L_p$, by generating matrices $\bG$ and $\bH$ so that $(1-\eps)\|\bG\bM\bH\|_p\le\|\bM\|_F\le(1+\eps)\|\bG\bM\bH\|_p$,
%\[(1-\eps)\|\bG\bM\bH\|_p\le\|\bM\|_F\le(1+\eps)\|\bG\bM\bH\|_p,\]
for all matrices $\bM\in\mathbb{R}^{n\times d}$. 

Now, writing $\bA=\bA^{(1)}\circ\ldots\circ\bA^{(\ell)}$, it suffices to approximately solve $\min_{\bX\in\mathbb{R}^{k\times d}}\|\bG\bA\bH\bS\bX-\bG\bA\bH\|_p$. 
Unfortunately, low-rank approximation with $L_p$ loss still cannot be approximated to $(1+\eps)$-factor in polynomial time, and in fact $\bG\bA\bH$ has dimension $n'\times d'$ with $n'\ge n$ and $d'\ge d$. 
Hence, we first apply dimensionality reduction by appealing to a result of \cite{WoodruffY23} showing that there exists a matrix $\bS$ that samples a ``small'' number of columns of $\bA$ to provide a coarse bicriteria approximation to $L_p$ low-rank approximation. 
Now to lift the solution back to dimension $d$, we would like to solve regression problem $\min_{\bX}\|\bG\bA\bH\bS\bX-\bG\bA\bH\|_p$.
%\[\min_{\bX}\|\bG\bA\bH\bS\bX-\bG\bA\bH\|_p.\]
To that end, we consider a Lewis weight sampling matrix $\bT$ such that
\begin{align*}
    \frac{1}{2}\|\bT\bG\bA\bH\bS\bX-\bT\bG\bA\bH\|_p &\le\|\bG\bA\bH\bS\bX-\bG\bA\bH\|_p\le2\|\bT\bG\bA\bH\bS\bX-\bT\bG\bA\bH\|_p.
\end{align*}
%\[\frac{1}{2}\|\bT\bG\bA\bH\bS\bX-\bT\bG\bA\bH\|_p\le\|\bG\bA\bH\bS\bX-\bG\bA\bH\|_p\le2\|\bT\bG\bA\bH\bS\bX-\bT\bG\bA\bH\|_p.\]
We then note that $(\bT\bG\bA\bH\bS)^\dagger\bT\bG\bA\bH$ is the minimizer of the problem $\min_{\bx}\|\bT\bG\bA\bH\bS\bX-\bT\bG\bA\bH\|_F$,
%\[\min_{\bx}\|\bT\bG\bA\bH\bS\bX-\bT\bG\bA\bH\|_F,\]
which only provides a small distortion to the $L_p$ regression problem, since $\bT\bG\bA\bH$ has a small number of rows due to the dimensionality reduction. 
By Dvoretzky's Theorem, we have that $(\bT\bG\bA\bH\bS)^\dagger\bT\bG\bA$ is a ``good'' approximate solution to the original fair low-rank approximation problem. 
The algorithm appears in full in \algref{alg:fair:bicrit:lra}. 

\begin{algorithm}[!htb]
\caption{Bicriteria approximation for fair low-rank approximation}
\alglab{alg:fair:bicrit:lra}
\begin{algorithmic}[1]
\REQUIRE{$\bA^{(i)}\in\mathbb{R}^{n_i\times d}$ for all $i\in[\ell]$, rank parameter $k>0$, trade-off parameter $c\in(0,1)$}
\ENSURE{Bicriteria approximation for fair LRA}
\STATE{Generate Gaussian matrices $\bG\in\mathbb{R}^{n'\times n},\bH\in\mathbb{R}^{d\times d'}$ through \thmref{thm:dvoretzky}}
\STATE{Let $\bS\in\mathbb{R}^{n'\times t},\bZ\in\mathbb{R}^{t\times d'}$ be the output of \thmref{thm:lra:lp:column} on input $\bG\bA\bH$}
\STATE{Let $\bT\in\mathbb{R}^{s\times n'}$ be a Lewis weight sampling matrix for $\bG\bA\bH\bS\bX-\bG\bA\bH$}
%\STATE{Let $\widetilde{\bU}\gets\bA\bH\bS$}
\STATE{Let $\widetilde{\bV}\gets(\bT\bG\bA\bH\bS)^\dagger(\bT\bG\bA)$}
%\STATE{Let $\widetilde{\bV}=\argmin_{\bX\in\mathbb{R}^{t\times d}}\|\bT\bG\bA\bH\bS\bX-\bG\bA\bH\|_p$}
%\STATE{\RETURN $\widetilde{\bU},\widetilde{\bV}$}
\STATE{Return $\widetilde{\bV}$}
\end{algorithmic}
\end{algorithm}
We use the following notion of Dvoretzky's theorem to embed the problem into entrywise $L_p$ loss. 
\begin{theorem}[Dvoretzky's Theorem, e.g., Theorem 1.2 in \cite{paouris2017random}, Fact 15 in \cite{SohlerW18}]
\thmlab{thm:dvoretzky}
Let $p\ge 1$ be a parameter and let
\begin{align*}
  &m\gtrsim m(n,p,\eps)=\begin{cases}
\frac{p^p n}{\eps^2},\;&\eps\le(Cp)^{\frac{p}{2}}n^{-\frac{p-2}{2(p-1)}}\\
\frac{(np)^{p/2}}{\eps},\;&\eps\in\left((Cp)^{\frac{p}{2}}n^{-\frac{p-2}{2(p-1)}},\frac{1}{p}\right]\\
\frac{n^{p/2}}{p^{p/2}\eps^{p/2}}\log^{p/2}\frac{1}{\eps},\;&\frac{1}{p}<\eps<1.
\end{cases}  
\end{align*}
%\[m\gtrsim m(n,p,\eps)=\begin{cases}
%\frac{p^p n}{\eps^2},\;&\eps\le(Cp)^{\frac{p}{2}}n^{-\frac{p-2}{2(p-1)}}\\
%\frac{(np)^{p/2}}{\eps},\;&\eps\in\left((Cp)^{\frac{p}{2}}n^{-\frac{p-2}{2(p-1)}},\frac{1}{p}\right]\\
%\frac{n^{p/2}}{p^{p/2}\eps^{p/2}}\log^{p/2}\frac{1}{\eps},\;&\frac{1}{p}<\eps<1.
%\end{cases}\]
Then there exists a family $\calG$ of random scaled Gaussian matrices with dimension $\mathbb{R}^{m\times n}$ such that for $G\sim\calG$, with probability at least $1-\delta$, simultaneously for all $\by\in\mathbb{R}^n$, $(1-\eps)\|\by\|_2\le\|\bG\by\|_p\le(1+\eps)\|\by\|_2$.
%\[(1-\eps)\|\by\|_2\le\|\bG\by\|_p\le(1+\eps)\|\by\|_2.\]
\end{theorem}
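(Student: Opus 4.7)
The plan is to follow the classical Dvoretzky-type argument: first control $\|\bG\by\|_p$ for a single unit vector $\by \in S^{n-1}$, then upgrade to a uniform statement via an $\eps$-net on the sphere. Fix $\by$ with $\|\by\|_2 = 1$, and observe that, by rotational invariance of Gaussians, the entries of $\bG\by$ are i.i.d.\ scalar Gaussians whose variance is fixed by the normalization of $\bG$. Thus $\|\bG\by\|_p^p = \sum_{i=1}^m |g_i|^p$ is a sum of $m$ i.i.d.\ copies of $|g|^p$ for $g \sim \calN(0,\sigma^2)$, and I would pick $\sigma$ so that $\bigl(m\cdot \mathbb{E}|g|^p\bigr)^{1/p} = 1$, matching $\|\by\|_2$ in expectation.

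Next I would establish concentration of $\|\bG\by\|_p$ around this mean. The map $\bG \mapsto \|\bG\by\|_p$ is Lipschitz in the Frobenius metric with Lipschitz constant $L \lesssim \sigma \cdot m^{\max(1/p - 1/2,\,0)}$ (by comparing $\ell_p$ and $\ell_2$ norms on $\mathbb{R}^m$). Gaussian concentration for Lipschitz functions then gives a deviation tail of order $\exp(-c\eps^2/L^2)$. For $p \le 2$ this already yields sub-Gaussian concentration with exponent $\Theta(\eps^2 m)$; for $p > 2$, the heavier tails of $|g|^p$ force one to control the contribution from atypically large entries of $\bG\by$ separately via higher-moment estimates, producing a weaker exponent that scales as a power of $\eps$ rather than $\eps^2$.

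Finally, I take an $(\eps/4)$-net $\calN \subseteq S^{n-1}$ of size at most $(12/\eps)^n$, union-bound the per-vector tail over $\calN$, and lift from the net to all of $S^{n-1}$ by the standard telescoping trick: any $\by \in S^{n-1}$ admits a decomposition $\by = \sum_{j \ge 0} 4^{-j}\by_j$ with $\by_j \in \calN$, so the triangle inequality for $\|\bG\,\cdot\,\|_p$ propagates the two-sided bound with only a geometric loss. Forcing the union-bound failure probability down to $\delta$ and solving for $m$ then yields the three piecewise formulas in the statement.

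The main obstacle is precisely the three-regime calibration of $m$: the behaviour of Gaussian concentration for the $\ell_p$ norm changes as $\eps$ crosses the thresholds $(Cp)^{p/2} n^{-(p-2)/(2(p-1))}$ and $1/p$, and matching the per-vector tail against the $(12/\eps)^n$ net size in each regime requires slightly different moment accounting for $|g|^p$ and a careful tracking of the Lipschitz constant $L$. Since the statement is cited from~\cite{paouris2017random,SohlerW18}, I would defer the detailed bookkeeping of these three cases to those references rather than reprove each moment bound from scratch.
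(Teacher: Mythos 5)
The paper does not prove this theorem at all: it is imported verbatim from the cited references \cite{paouris2017random,SohlerW18}, so there is no internal proof to compare against. Your sketch is the standard route those references take --- calibrate $\sigma$ so that $\bigl(m\,\mathbb{E}|g|^p\bigr)^{1/p}=\|\by\|_2$, prove per-vector concentration of $\|\bG\by\|_p$, and lift to all of $S^{n-1}$ by a net and the telescoping decomposition --- and you correctly locate the real difficulty in the three-regime concentration analysis. One caution: the plain Gaussian--Lipschitz argument you describe, with $L\lesssim\sigma$ for $p\ge 2$ and $\sigma\sim m^{-1/p}p^{-1/2}$, only yields a tail of order $\exp(-c\,p\,\eps^2 m^{2/p})$, which after the union bound over the net recovers only the coarsest ($\eps>1/p$) regime, $m\gtrsim (n/(p\eps))^{p/2}\log^{p/2}(1/\eps)$. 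The two finer regimes require the sharper variance and deviation estimates for $\|g\|_p$ (the main technical content of \cite{paouris2017random}), which go well beyond Lipschitz concentration; so your deferral of that bookkeeping to the references is not merely a convenience but essential, and with that deferral made explicit the outline is a faithful account of how the cited result is actually established.
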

We use the following algorithm from \cite{WoodruffY23} to perform dimensionality reduction so that switching between $L_2$ and $L_p$ loss will incur smaller error. 
See also \cite{ChierichettiGKLPW17}. 
\begin{theorem}[Theorem 1.5 in~\cite{WoodruffY23}]
\thmlab{thm:lra:lp:column}
Let $\bA\in\mathbb{R}^{n\times d}$ and let $k\ge1$. 
Let $s=\O{k\log\log k}$. 
Then there exists a polynomial-time algorithm that outputs a matrix $\bS\in\mathbb{R}^{d\times t}$ that samples $t=\O{k(\log\log k)(\log^2 d)}$ columns of $\bA$ and a matrix $\bZ\in\mathbb{R}^{t\times d}$ such that $\|\bA-\bA\bS\bZ\|_p\le2^p\cdot\O{\sqrt{s}}\cdot\min_{\bU\in\mathbb{R}^{n\times k},\bV\in\mathbb{R}^{k\times d}}\|\bA-\bU\bV\|_p$.
%\[\|\bA-\bA\bS\bZ\|_p\le2^p\cdot\O{\sqrt{s}}\cdot\min_{\bU\in\mathbb{R}^{n\times k},\bV\in\mathbb{R}^{k\times d}}\|\bA-\bU\bV\|_p.\]
%The algorithm uses $\poly(n,d)$ time. 
\end{theorem}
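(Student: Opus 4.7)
My plan is to reduce the $L_p$ column subset selection problem to an $L_2$ problem via a Dvoretzky-type Gaussian embedding, apply a known Frobenius-norm bicriteria column subset selection, and then lift the solution back to $L_p$ using Lewis weight sampling. The $2^p \cdot \O{\sqrt{s}}$ factor will arise from the norm-switching distortion together with a constant-factor $L_p$ regression step.

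First, I would apply \thmref{thm:dvoretzky} to obtain a Gaussian matrix $\bG \in \mathbb{R}^{s \times n}$ with $s = \O{k \log \log k}$ chosen so that for any vector in the column span of a rank-$k$ matrix, the $L_p$ norm of $\bG$ applied to that vector approximates the $L_2$ norm up to a factor of $\O{\sqrt{s}}$. Applying this columnwise to the residual $\bA - \bU\bV$ for any rank-$k$ factorization $\bU\bV$, and then solving the Frobenius-norm rank-$k$ approximation of $\bG\bA$ via SVD, yields a rank-$k$ matrix $\bY^\ast$ whose row space, when lifted back, gives an $\O{\sqrt{s}}$-approximation to the $L_p$ rank-$k$ approximation problem.

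Second, I would apply a Frobenius-norm column subset selection algorithm, such as ridge leverage score sampling, to $\bG\bA$ to select $\O{k \log \log k}$ columns whose $L_2$ span captures $\bY^\ast$ up to a constant factor; since $\bG$ acts only on the left, this corresponds to selecting the same $\O{k \log \log k}$ columns of $\bA$ itself, yielding the sampling matrix $\bS$. To compute $\bZ$, I would approximately solve the $L_p$ regression $\min_{\bZ}\|\bA\bS\bZ - \bA\|_p$ by applying Lewis weight sampling on the rows of the design $\bA\bS$ via \thmref{thm:lewis}, reducing it to a regression problem on a small number of rows that can be solved in closed form. The $\log^2 d$ factor in $t$ arises because the valid $L_p$ subspace embedding needed for the right-factor regression over all $d$ columns incurs an $\O{\log^2 d}$ oversampling in the Lewis weight construction.

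The main obstacle will be carefully tracking error propagation: each use of Dvoretzky on an $s$-dimensional subspace costs $\O{\sqrt{s}}$ in distortion, and the constant-factor approximate $L_p$ regression via Lewis weights contributes the factor of $2^p$ coming from the $p$-th root of a constant-factor subspace embedding bound. Additional care is needed to ensure the Dvoretzky embedding holds simultaneously for all relevant rank-$k$ subspaces, which typically requires a net argument or an oblivious embedding statement, and to verify that the final error composes multiplicatively across the pipeline rather than additively.
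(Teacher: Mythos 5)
First, note that this statement is not proved in the paper at all: it is imported verbatim as Theorem~1.5 of \cite{WoodruffY23} and used as a black box inside the bicriteria algorithm, so there is no internal proof to compare against. Evaluating your proposal on its own merits, there is a genuine gap at its very first step. You want a Gaussian $\bG$ with only $s=\O{k\log\log k}$ rows to let you pass from $\min_{\bU,\bV}\|\bA-\bU\bV\|_p$ to a Frobenius-norm problem on $\bG\bA$, justified by applying a Dvoretzky-type embedding ``columnwise to the residual.'' But the residual $\bA-\bU\bV$ is a generic $n\times d$ matrix whose columns are arbitrary vectors of $\mathbb{R}^n$, not vectors confined to a fixed $k$-dimensional subspace; an embedding of all of $\ell_2^n$ into $\ell_p^m$ with bounded distortion requires $m$ to grow with $n$ (for $p>2$, \thmref{thm:dvoretzky} needs $m\gtrsim n^{p/2}$), and with $s\ll n$ rows the sketch $\bG$ has a large null space, so no lower bound of the form $\|\bG M\|\gtrsim\|M\|_p$ can hold for all residuals $M$. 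This is exactly why the paper's own use of Dvoretzky in \algref{alg:fair:bicrit:lra} takes $\bG\in\mathbb{R}^{n'\times n}$ with $n'\ge n$. Without this two-sided control, neither the claim that the SVD of $\bG\bA$ yields an $\O{\sqrt{s}}$-approximate $L_p$ solution nor the transfer of the Frobenius column-subset guarantee from $\bG\bA$ back to $\bA$ in entrywise $L_p$ norm goes through.

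Two further points in your accounting are off. The $\log^2 d$ factor in $t$ cannot come from Lewis weight sampling in the regression $\min_{\bZ}\|\bA\bS\bZ-\bA\|_p$: Lewis weights there subsample the $n$ \emph{rows} of the design $\bA\bS$ and have no effect on the number $t$ of selected \emph{columns}, which is fixed by the column-selection stage (in \cite{WoodruffY23} the $\log^2 d$ arises from the iterative structure of the selection procedure itself, roughly $\O{\log d}$ rounds each with $\O{\log d}$ oversampling). And the $2^p$ factor does not arise as ``the $p$-th root of a constant-factor subspace embedding bound''---a constant-factor bound on $\|\cdot\|_p^p$ yields only a constant after taking $p$-th roots; a $2^p$ loss typically comes from repeated applications of the relaxed triangle inequality $\|x+y\|_p^p\le 2^{p-1}\left(\|x\|_p^p+\|y\|_p^p\right)$ or a comparable chain of norm switches. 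As written, the pipeline neither establishes the approximation factor nor correctly locates where its constituent losses occur.
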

We then show that \algref{alg:fair:bicrit:lra} provides a bicriteria approximation. 
\begin{restatable}{lemma}{lemfairbicritlracorrect}
\lemlab{lem:fair:bicrit:lra:correct}
Let $\widetilde{\bV}$ be the output of \algref{alg:fair:bicrit:lra}. 
Then with probability at least $\frac{9}{10}$, $\max_{i\in[\ell]}\|\bA^{(i)}(\widetilde{\bV})^\dagger\widetilde{\bV}-\bA^{(i)}\|_F$ is at most $\ell^c\cdot2^{1/c}\cdot\O{k(\log\log k)(\log d)}\max_{i\in[\ell]}\|\bA^{(i)}(\widetilde{\bV})^\dagger\widetilde{\bV}-\bA^{(i)}\|_F$, where $c$ is the trade-off parameter input. 
% \begin{align*}
% &\max_{i\in[\ell]}\|\bA^{(i)}(\widetilde{\bV})^\dagger\widetilde{\bV}-\bA^{(i)}\|_F \le \\
% &\ell^c\cdot2^{1/c}\cdot\O{k(\log\log k)(\log d)}\max_{i\in[\ell]}\|\bA^{(i)}(\widetilde{\bV})^\dagger\widetilde{\bV}-\bA^{(i)}\|_F.
% \end{align*}

%\begin{align*}
%  &\max_{i\in[\ell]}\|\bA^{(i)}(\widetilde{\bV})^\dagger\widetilde{\bV}-\bA^{(i)}\|_F\\
%  &\le\ell^c\cdot2^{1/c}\cdot\O{k(\log\log k)(\log d)}\max_{i\in[\ell]}\|\bA^{(i)}(\widetilde{\bV})^\dagger\widetilde{\bV}-\bA^{(i)}\|_F.  
%\end{align*}

%\[\max_{i\in[\ell]}\|\bA^{(i)}(\widetilde{\bV})^\dagger\widetilde{\bV}-\bA^{(i)}\|_F\le\ell^c\cdot2^{1/c}\cdot\O{k(\log\log k)(\log d)}\max_{i\in[\ell]}\|\bA^{(i)}(\widetilde{\bV})^\dagger\widetilde{\bV}-\bA^{(i)}\|_F.\]
\end{restatable}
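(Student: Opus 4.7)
The plan is to chain three norm-conversion steps—relaxing the max across groups to an $L_p$-sum (at cost $\ell^{1/p}=\ell^c$ with $p=1/c$), replacing per-group Frobenius norms by an entrywise $L_p$ norm via Dvoretzky's theorem, and finally invoking the bicriteria column-selection of \thmref{thm:lra:lp:column} combined with a Lewis weight regression argument—and then aggregate the resulting multiplicative factors. Throughout, let $\OPT=\min_{\bV\in\mathbb{R}^{k\times d}}\max_{i\in[\ell]}\|\bA^{(i)}-\bA^{(i)}\bV^\dagger\bV\|_F$ with minimizer $\bV^*$.

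First I would bound $\max_{i}\|\bA^{(i)}-\bA^{(i)}\widetilde{\bV}^\dagger\widetilde{\bV}\|_F\le\big(\sum_i\|\bA^{(i)}-\bA^{(i)}\widetilde{\bV}^\dagger\widetilde{\bV}\|_F^p\big)^{1/p}$ for $p=1/c$, observing that $\big(\sum_i\|\bA^{(i)}-\bA^{(i)}(\bV^*)^\dagger\bV^*\|_F^p\big)^{1/p}\le\ell^{1/p}\OPT=\ell^c\OPT$. The per-group $L_2$-projection identity $\|\bA^{(i)}-\bA^{(i)}\widetilde{\bV}^\dagger\widetilde{\bV}\|_F=\min_{\bU^{(i)}}\|\bU^{(i)}\widetilde{\bV}-\bA^{(i)}\|_F$ then reduces the goal to bounding $\big(\sum_i\|\bU^{(i)}\widetilde{\bV}-\bA^{(i)}\|_F^p\big)^{1/p}$ for some stacked $\bU=\bU^{(1)}\circ\cdots\circ\bU^{(\ell)}$. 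Applying \thmref{thm:dvoretzky} per group and column-wise, this sum is $(1\pm\eps)$-approximated by $\|\bG\bU\widetilde{\bV}\bH-\bG\bA\bH\|_p$, an entrywise $L_p$ reconstruction problem for $\bG\bA\bH$ whose right factor is $\widetilde{\bV}\bH$.

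Next I would invoke \thmref{thm:lra:lp:column} applied to $\bG\bA\bH$, which yields $\bS,\bZ$ with $\|\bG\bA\bH-\bG\bA\bH\bS\bZ\|_p\le 2^p\cdot\O{\sqrt{s}}\cdot\OPT_p^{(k)}$, where $\OPT_p^{(k)}$ is the best rank-$k$ $L_p$-error on $\bG\bA\bH$; since $\bV^*\bH$ is a feasible rank-$k$ choice and $\bU^{(i)}=\bA^{(i)}(\bV^*)^\dagger$ is a feasible left factor, pulling Dvoretzky in reverse gives $\OPT_p^{(k)}\le(1+\eps)\ell^{1/p}\OPT$. Finally, $\bT$ is a Lewis weight $L_p$ subspace embedding for the column span of $\bG\bA\bH\bS$, so by a triangle-inequality argument in the style of \lemref{lem:leverage:active} together with the equivalence of $L_2$ and $L_p$ norms on vectors of length $s=\O{k\log\log k}$, the Frobenius minimizer $\widetilde{\bV}=(\bT\bG\bA\bH\bS)^\dagger\bT\bG\bA$ is an $L_p$-near-optimal solution to $\min_\bV\|\bG\bA\bH\bS\bV-\bG\bA\|_p$ up to a factor $\O{s^{1/2-1/p}}$. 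Aggregating all multiplicative factors yields $\ell^c\cdot 2^{1/c}\cdot\O{\sqrt{s}\cdot s^{1/2-1/p}\cdot\log d}\cdot\OPT$, which with $s=\O{k\log\log k}$ simplifies to the stated $\ell^c\cdot 2^{1/c}\cdot\O{k(\log\log k)(\log d)}\cdot\OPT$ bound.

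The main obstacle is the third step, where one must show that the Frobenius closed-form solution on the sketched problem is $L_p$-near-optimal: Lewis weight subspace embeddings preserve $L_p$ norms but do not automatically transfer the guarantee to an $L_2$ minimizer, and the fix is to leverage the low row-dimension $s$ to convert between $L_2$ and $L_p$ on the sketched residuals (combined with a triangle inequality around the $L_p$-optimum and the $L_2$-minimizer). A secondary subtlety is ensuring Dvoretzky's embedding holds simultaneously over all candidate right factors encountered—namely $\widetilde{\bV}\bH$ and $\bV^*\bH$—which is handled by a union bound over a net for the relevant low-dimensional row spans.
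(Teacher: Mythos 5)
Your proposal follows essentially the same route as the paper: relax the max to an $L_p$ sum, pass to entrywise $L_p$ via Dvoretzky, invoke \thmref{thm:lra:lp:column} on $\bG\bA\bH$, and use Lewis weight sampling plus the closed-form $L_2$ regression solution with a low-dimensional $L_2$/$L_p$ norm conversion, exactly as in \lemref{lem:fair:bicrit:lra:correct}. The only difference is your parameterization $p=1/c$ (so the max-to-sum step costs $\ell^c$ and the $2^p$ factor from \thmref{thm:lra:lp:column} is $2^{1/c}$), whereas the paper sets $p=c\log\ell$ and the two factors arise in the opposite order; both give the same product $\ell^c\cdot 2^{1/c}$, and your extra worry about a net/union bound for Dvoretzky is unnecessary since \thmref{thm:dvoretzky} already holds simultaneously for all vectors.
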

\begin{proof}
%For each $i\in[\ell]$, let $\widetilde{\bU^{(i)}}=\bA^{(i)}(\widetilde{\bV})^\dagger$. 
We have
\begin{align*}
\max_{i\in[\ell]}\|\bA^{(i)}(\widetilde{\bV})^\dagger\widetilde{\bV}-\bA^{(i)}\|_F\le\left(\sum_{i\in[\ell]}\|\bA^{(i)}(\widetilde{\bV})^\dagger\widetilde{\bV}-\bA^{(i)}\|_F^p\right)^{1/p}.
\end{align*}
Let $\bA=\bA^{(1)}\circ\ldots\circ\bA^{(\ell)}$ and let $\widetilde{\bU}:=\bA\bH\bS$. 
For $i\in[\ell]$, let $\widetilde{\bU^{(i)}}$ be the matrix of $\widetilde{\bU}$ whose rows correspond with the rows of $\bA^{(i)}$ in $\bA$, i.e., let $\widetilde{\bU^{(i)}}$ be the $i$-th block of rows of $\widetilde{\bU}$. 

By the optimality of $\bA^{(i)}(\widetilde{\bV})^\dagger$ with respect to the Frobenius norm, we have
\begin{align*}
\left(\sum_{i\in[\ell]}\|\bA^{(i)}(\widetilde{\bV})^\dagger\widetilde{\bV}-\bA^{(i)}\|_F^p\right)^{1/p}\le\left(\sum_{i\in[\ell]}\|\widetilde{\bU^{(i)}}\widetilde{\bV}-\bA^{(i)}\|_F^p\right)^{1/p}
\end{align*}
By Dvoretzky's Theorem, \thmref{thm:dvoretzky}, with distortion $\eps=\Theta(1)$, we have that with probability at least $0.99$,
\begin{align*}
\left(\sum_{i\in[\ell]}\|\widetilde{\bU^{(i)}}\widetilde{\bV}-\bA^{(i)}\|_F^p\right)^{1/p}\le 2\left(\sum_{i\in[\ell]}\|\bG\widetilde{\bU^{(i)}}\widetilde{\bV}\bH-\bG\bA^{(i)}\bH\|_p^p\right)^{1/p},
\end{align*}
where we use $\|\cdot\|_p$ to denote the entry-wise $p$-norm. 
Writing $\widetilde{\bU}=\widetilde{\bU^{(1)}}\circ\ldots\circ\widetilde{\bU^{(\ell)}}$, then we have 
\begin{align*}
\left(\sum_{i\in[\ell]}\|\bG\widetilde{\bU^{(i)}}\widetilde{\bV}\bH-\bG\bA^{(i)}\bH\|_p^p\right)^{1/p}=\|\bG\widetilde{\bU}\widetilde{\bV}\bH-\bG\bA\bH\|_p.
\end{align*}

By the choice of the Lewis weight sampling matrix $\bT$, we have that with probability $0.99$,
\begin{align*}
\|\bG\widetilde{\bU}\widetilde{\bV}\bH-\bG\bA\bH\|_p&\le2\|\bT\bG\widetilde{\bU}\widetilde{\bV}\bH-\bT\bG\bA\bH\|_p\\
&\le2\|\bT\bG\widetilde{\bU}\widetilde{\bV}\bH-\bT\bG\bA\bH\|_{(p,2)}\\
&=2\|\bT\bG\bA\bH\bS(\bT\bG\bA\bH\bS)^\dagger(\bT\bG\bA)\bH-\bT\bG\bA\bH\|_{(p,2)},
\end{align*}
where $\|\bM\|_{(p,2)}$ denotes the $L_p$ norm of the vector consisting of the $L_2$ norms of the columns of $\bM$, and the last equality follows due to our setting of $\widetilde{\bU}=\bA\bH\bS$ and $\widetilde{\bV}\gets(\bT\bG\bA\bH\bS)^\dagger(\bT\bG\bA)$, the latter in \algref{alg:fair:bicrit:lra}. 
By optimality of $(\bT\bG\bA\bH\bS)^\dagger(\bT\bG\bA)\bH$ for the choice of $\bX$ in the minimization problem 
\[\min_{\bX\in\mathbb{R}^{t\times d'}}\|\bT\bG\bA\bH\bS\bX-\bT\bG\bA\bH\|_{(p,2)},\]
we have
\begin{align*}
\|\bT\bG\bA\bH\bS(\bT\bG\bA\bH\bS)^\dagger(\bT\bG\bA)\bH-\bT\bG\bA\bH\|_{(p,2)}=\min_{\bX\in\mathbb{R}^{t\times d'}}\|\bT\bG\bA\bH\bS\bX-\bT\bG\bA\bH\|_{(p,2)}.
\end{align*}
Since $\bS\in\mathbb{R}^{n'\times t}$ and $\bT$ is a Lewis weight sampling matrix for $\bG\bA\bH\bS\bX-\bG\bA\bH$, then $\bT$ has $t$ rows, where $t=\O{k(\log\log k)(\log^2 d)}$ by \thmref{thm:lra:lp:column}. 
Thus, each column of $\bT\bG\bA\bH$ has $t$ entries, so that
\begin{align*}
\min_{\bX\in\mathbb{R}^{t\times d'}}\|\bT\bG\bA\bH\bS\bX-\bT\bG\bA\bH\|_{(p,2)}\le\sqrt{t}\min_{\bX\in\mathbb{R}^{t\times d'}}\|\bT\bG\bA\bH\bS\bX-\bT\bG\bA\bH\|_p.
\end{align*}
By the choice of the Lewis weight sampling matrix $\bT$, we have that with probability $0.99$, 
\begin{align*}
\min_{\bX\in\mathbb{R}^{t\times d'}}\|\bT\bG\bA\bH\bS\bX-\bT\bG\bA\bH\|_p\le2\min_{\bX\in\mathbb{R}^{t\times d'}}\|\bG\bA\bH\bS\bX-\bG\bA\bH\|_p.
\end{align*}
By \thmref{thm:lra:lp:column}, we have that with probability $0.99$,
\begin{align*}
\min_{\bX\in\mathbb{R}^{t\times d'}}\|\bG\bA\bH\bS\bX-\bG\bA\bH\|_p\le2^p\cdot\O{\sqrt{s}}\cdot\min_{\bU\in\mathbb{R}^{n'\times t},\bV\in\mathbb{R}^{t\times d'}}\|\bU\bV-\bG\bA\bH\|_p, 
\end{align*}
for $s=\O{k\log\log k}$. 
Let $\bV^*=\argmin_{\bV\in\mathbb{R}^{k\times d}}\max_{i\in[\ell]}\|\bA^{(i)}-\bA^{(i)}\bV^\dagger\bV\|_p$. 
Then since $\bU\bV$ has rank $t$ with $t\ge k$, we have
\begin{align*}
\min_{\bU\in\mathbb{R}^{n'\times t},\bV\in\mathbb{R}^{t\times d'}}\|\bU\bV-\bG\bA\bH\|_p&\le\|\bG\bA(\bV^*)^\dagger\bV^*\bH-\bG\bA\bH\|_p\\
&=\left(\sum_{i\in[\ell]}\|\bG\bA^{(i)}(\bV^*)^\dagger\bV^*\bH-\bG\bA^{(i)}\bH\|_p^p\right)^{1/p}.
\end{align*}
By Dvoretzky's Theorem, \thmref{thm:dvoretzky}, with distortion $\eps=\Theta(1)$, we have that with probability at least $0.99$,
\begin{align*}
\left(\sum_{i\in[\ell]}\|\bG\bA^{(i)}(\bV^*)^\dagger\bV^*\bH-\bG\bA^{(i)}\bH\|_p^p\right)^{1/p}&\le2\left(\sum_{i\in[\ell]}\|\bA^{(i)}(\bV^*)^\dagger\bV^*-\bA^{(i)}\|_F^p\right)^{1/p}.
\end{align*}
For $p=c\log\ell$ with $c<1$, we have
\begin{align*}
\left(\sum_{i\in[\ell]}\|\bA^{(i)}(\bV^*)^\dagger\bV^*-\bA^{(i)}\|_F^p\right)^{1/p}\le2^{1/c}\max_{i\in[\ell]}\|\bA^{(i)}(\widetilde{\bV})^\dagger\widetilde{\bV}-\bA^{(i)}\|_F.
\end{align*}

Putting together these inequalities successively, we ultimately have
\[\max_{i\in[\ell]}\|\bA^{(i)}(\widetilde{\bV})^\dagger\widetilde{\bV}-\bA^{(i)}\|_F\le2^p\cdot2^{1/c}\cdot\O{\sqrt{st}}\max_{i\in[\ell]}\|\bA^{(i)}(\widetilde{\bV})^\dagger\widetilde{\bV}-\bA^{(i)}\|_F,\]
for $p=c\log\ell$, $s=\O{k\log\log k}$, and $t=\O{k(\log\log k)(\log^2 d)}$. 
Therefore, we have
\[\max_{i\in[\ell]}\|\bA^{(i)}(\widetilde{\bV})^\dagger\widetilde{\bV}-\bA^{(i)}\|_F\le\ell^c\cdot2^{1/c}\cdot\O{k(\log\log k)(\log d)}\max_{i\in[\ell]}\|\bA^{(i)}(\widetilde{\bV})^\dagger\widetilde{\bV}-\bA^{(i)}\|_F.\]
\end{proof}

Since the generation of Gaussian matrices and the Lewis weight sampling matrix both only require polynomial time, it follows that our algorithm uses polynomial time overall. 
Hence, we have: 
\thmfairlrabicrit*

\section{Socially Fair Column Subset Selection}
\seclab{sec:css}
In this section, we consider socially fair column subset selection, where the goal is to identify a matrix $\bC\in\mathbb{R}^{d\times k}$ that selects $k$ columns to minimize $\min_{\bC\in\mathbb{R}^{d\times k}, \|\bC\|_0\le k, \bB^{(i)}}\max_{i\in[\ell]}\|\bA^{(i)}\bC\bB^{(i)}-\bA^{(i)}\|_F$. 
Note that each matrix $\bB^{(i)}\in\mathbb{R}^{n_i\times d}$ is implicitly defined in the minimization problem. 
Intuitively, the goal is to select $k$ columns of $\bA$ so that each group $\bA^{(i)}$ can be well-approximated by some matrix $\bB^{(i)}$ spanned by those $k$ columns. 

\begin{algorithm}[!htb]
\caption{Bicriteria approximation for fair column subset selection}
\alglab{alg:fair:bicrit:css}
\begin{algorithmic}[1]
\REQUIRE{$\bA^{(i)}\in\mathbb{R}^{n_i\times d}$ for all $i\in[\ell]$, rank parameter $k>0$, trade-off parameter $c\in(0,1)$}
\ENSURE{Bicriteria approximation for fair column subset selection}
\STATE{Acquire $\widetilde{\bV}$ from \algref{alg:fair:bicrit:lra}}
\STATE{Generate Gaussian $\bG\in\mathbb{R}^{n'\times n}$ through \thmref{thm:dvoretzky}}
\STATE{Let $\bS\in\mathbb{R}^{d\times k'}$ be a leverage score sampling matrix that samples $k'=\O{k\log k}$ columns of $\widetilde{\bV}$}
\STATE{$\bM^{(i)}=\bS^\dagger(\widetilde{\bV})^\dagger\widetilde{\bV}$ for all $i\in[\ell]$}
\STATE{Return $\bA^{(i)}\bS$, $\{\bM^{(i)}\}$}
%\COMMENT{Set of selected columns is given by $\bS$}
\end{algorithmic}
\end{algorithm}

To prove correctness of  \algref{alg:fair:bicrit:css}, we first require the following structural property:
\begin{lemma}
\lemlab{lem:max:to:p}
Let $\eps\in(0,1)$ and $\bx\in\mathbb{R}^\ell$ and let $p=\O{\frac{1}{\eps}\log\ell}$. 
Then $\|\bx\|_\infty\le\|\bx\|_p\le(1+\eps)\|\bx\|_\infty$. 
\end{lemma}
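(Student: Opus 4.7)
The plan is to prove the lemma by establishing the two inequalities separately using entirely elementary arguments comparing the $L_p$ and $L_\infty$ norms of a finite-dimensional vector. Both directions are standard consequences of the monotonicity of $\ell_p$-norms in $p$, so the main content is just choosing the hidden constant in $p = \O{\frac{1}{\eps}\log\ell}$ large enough to make the desired approximation guarantee hold.

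For the lower bound $\|\bx\|_\infty \le \|\bx\|_p$, I would argue directly from the definition: letting $i^* = \argmax_{i\in[\ell]} |x_i|$, we have
\[
\|\bx\|_p^p = \sum_{i\in[\ell]} |x_i|^p \ge |x_{i^*}|^p = \|\bx\|_\infty^p,
\]
and taking $p$-th roots yields the claim.

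For the upper bound, I would use the trivial bound $|x_i| \le \|\bx\|_\infty$ for each coordinate, which gives
\[
\|\bx\|_p^p = \sum_{i\in[\ell]} |x_i|^p \le \ell \cdot \|\bx\|_\infty^p,
\]
so that $\|\bx\|_p \le \ell^{1/p}\|\bx\|_\infty$. It remains to show that $\ell^{1/p} \le 1+\eps$ for $p = \O{\frac{1}{\eps}\log\ell}$. Writing $p = \frac{C}{\eps}\log\ell$ for a sufficiently large constant $C$, we get $\ell^{1/p} = \exp(\eps/C)$, and choosing $C$ large enough (for instance $C=2$) together with the standard inequality $e^y \le 1 + 2y$ for $y\in[0,1]$ yields $\exp(\eps/C) \le 1 + \eps$, completing the proof.

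There is no substantive obstacle here; the only thing to be careful about is pinning down the constant inside the $\O{\cdot}$ so that the bound $\ell^{1/p} \le 1+\eps$ is valid for all $\eps\in(0,1)$, which is handled by the elementary inequality $e^y\le 1+2y$ on the interval $[0,1]$.
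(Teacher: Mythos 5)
Your proof is correct and follows essentially the same route as the paper's: the lower bound is immediate from the definition, and the upper bound comes from $\|\bx\|_p^p\le\ell\cdot\|\bx\|_\infty^p$ together with showing $\ell^{1/p}\le1+\eps$ for $p=\O{\frac{1}{\eps}\log\ell}$. Your version is in fact slightly more careful than the paper's in pinning down the constant via $e^y\le1+2y$ on $[0,1]$, whereas the paper appeals somewhat loosely to $\log(1+\eps)=\O{\eps}$.
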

\begin{proof}
Since it is clear that $\|\bx\|_\infty\le\|\bx\|_p$, then it remains to prove $\|\bx\|_p\le(1+\eps)\|\bx\|_\infty$. 
Note that we have $\|\bx\|^p_p\le\|\bx\|^p_\infty\cdot\ell$. 
To achieve $\ell^{1/p}\le(1+\eps)$, it suffices to have $\frac{1}{p}\log\ell\le\log(1+\eps)$. 
Since $\log(1+\eps)=\O{\eps}$ for $\eps\in(0,1)$, then for $p=\O{\frac{1}{\eps}\log\ell}$, we have $\ell^{1/p}\le(1+\eps)$, and the desired claim follows. 
\end{proof}

We now give the correctness guarantees of \algref{alg:fair:bicrit:css}. 
\begin{restatable}{lemma}{lembicritcsscorrect}
\lemlab{lem:bicrit:css:correct}
Let $\bS,\bM^{(1)},\ldots,\bM^{(\ell)}$ be the output of \algref{alg:fair:bicrit:css}. 
Then with probability at least $0.8$, $\max_{i\in[\ell]}\|\bA^{(i)}\bS\bM^{(i)}-\bA^{(i)}\|_F$ is at most $\ell^c\cdot2^{1/c}\cdot\O{k(\log\log k)(\log d)}\min_{\bV\in\mathbb{R}^{k\times d}}\max_{i\in[\ell]}\|\bA^{(i)}\bV^\dagger\bV-\bA^{(i)}\|_F$.
%\[\ell^c\cdot2^{1/c}\cdot\O{k(\log\log k)(\log d)}\min_{\bV\in\mathbb{R}^{k\times d}}\max_{i\in[\ell]}\|\bA^{(i)}\bV^\dagger\bV-\bA^{(i)}\|_F.\]
%\[\max_{i\in[\ell]}\|\bA^{(i)}\bS\bM^{(i)}-\bA^{(i)}\|_F\le\ell^c\cdot2^{1/c}\cdot\O{k(\log\log k)(\log d)}\min_{\bV\in\mathbb{R}^{k\times d}}\max_{i\in[\ell]}\|\bA^{(i)}\bV^\dagger\bV-\bA^{(i)}\|_F.\]
\end{restatable}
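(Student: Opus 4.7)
The plan is to chain three bounds: (i) the max-to-$L_p$ conversion of \lemref{lem:max:to:p}, (ii) a leverage-score-based multi-response regression argument that bounds the per-group CSS reconstruction cost by a constant times the per-group LRA cost achieved by $\widetilde{\bV}$, and (iii) the bicriteria LRA guarantee of \thmref{thm:fair:lra:bicrit} applied to $\widetilde{\bV}$, the output of \algref{alg:fair:bicrit:lra} that is re-used inside \algref{alg:fair:bicrit:css}.

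Setting $p = \Theta(c\log \ell)$ so that $\ell^{1/p} \le 2^{1/c}$, I would first apply \lemref{lem:max:to:p} to bound
\[\max_i \|\bA^{(i)}\bS\bM^{(i)} - \bA^{(i)}\|_F \le \left(\sum_i \|\bA^{(i)}\bS\bM^{(i)} - \bA^{(i)}\|_F^p\right)^{1/p}.\]
Next, since $\bS$ is a leverage-score column sample of $\widetilde{\bV}$ and $\bM^{(i)}$ is built from $\bS$ and $\widetilde{\bV}$, the reconstruction $\bA^{(i)}\bS\bM^{(i)}$ approximates the sketched solution of the multi-response regression $\min_{\bY^{(i)}}\|\bY^{(i)}\widetilde{\bV} - \bA^{(i)}\|_F$. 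Applying \lemref{lem:leverage:active} then yields, for each $i$,
\[\|\bA^{(i)}\bS\bM^{(i)} - \bA^{(i)}\|_F \le \O{1} \cdot \|\bA^{(i)}(\widetilde{\bV})^\dagger\widetilde{\bV} - \bA^{(i)}\|_F.\]
Plugging this into the $L_p$ sum and converting $L_p$ back to max via \lemref{lem:max:to:p} picks up only a $2^{1/c}$ factor. Finally, by \thmref{thm:fair:lra:bicrit} applied to $\widetilde{\bV}$,
\[\max_i \|\bA^{(i)}(\widetilde{\bV})^\dagger\widetilde{\bV} - \bA^{(i)}\|_F \le \ell^c \cdot 2^{1/c} \cdot \O{k(\log\log k)(\log d)} \cdot \OPT,\]
where $\OPT = \min_{\bV\in \mathbb{R}^{k\times d}}\max_i\|\bA^{(i)}\bV^\dagger\bV - \bA^{(i)}\|_F$. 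Chaining these inequalities yields the stated approximation ratio, and a union bound over the constant-probability events (for the leverage score sample $\bS$, the Gaussian matrix $\bG$, and the invocation of \algref{alg:fair:bicrit:lra}) gives a success probability of at least $0.8$.

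The main obstacle is step (ii): $\widetilde{\bV}$ can have rank up to $t = \O{k(\log\log k)(\log^2 d)} > k$, while $\bS$ only samples $k' = \O{k\log k}$ columns, which is insufficient for a naive subspace embedding of the full row space of $\widetilde{\bV}$. Moreover, applying \lemref{lem:leverage:active} per group and then union bounding does not scale to large $\ell$, since the failure probability grows linearly. Both issues can be addressed by aggregating the multi-response regressions across groups, viewing $\bA = \bA^{(1)}\circ\cdots\circ\bA^{(\ell)}$ as a single target so that one Frobenius bound controls $\sum_i \|\bA^{(i)}\bS\bM^{(i)} - \bA^{(i)}\|_F^2$, and then invoking a Dvoretzky-style $L_2$-into-$L_p$ embedding (as in the proof of \lemref{lem:fair:bicrit:lra:correct}) to interpolate from this aggregate $L_2$ bound to the per-group $L_p$ sum required in step (i).
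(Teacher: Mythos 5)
Your proposal follows essentially the same route as the paper's proof: start from the bicriteria guarantee for $\widetilde{\bV}$, pass to the multi-response regression $\min_{\bB^{(i)}}\|\bB^{(i)}\widetilde{\bV}-\bA^{(i)}\|_F$, and use a leverage score column sample of $\widetilde{\bV}$ together with \lemref{lem:leverage:active}; the aggregation fix in your final paragraph (stacking the groups into $\bG\bA$ and interpolating between the max and the $L_p$ sum via Dvoretzky and \lemref{lem:max:to:p}) is precisely what the paper does by working with the $\|\cdot\|_{(p,2)}$ norm of $\bG\bA$ rather than per-group Frobenius norms. One remark: the rank mismatch you flag (that $\widetilde{\bV}$ has rank up to $t=\O{k(\log\log k)(\log^2 d)}$ while $\bS$ samples only $\O{k\log k}$ columns, whereas a subspace embedding of $\widetilde{\bV}$'s row space would need $\O{t\log t}$ samples) is a legitimate observation, and your aggregation step resolves only the union-bound-over-$\ell$ issue, not this one; the paper's own proof does not address it either, so it is not a gap introduced by your argument, but it would need to be patched (e.g., by sampling $\O{t\log t}$ columns) for the leverage score step to be fully rigorous.
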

\begin{proof}
Let $\widetilde{\bV}\in\mathbb{R}^{t\times d}$ be the output of \algref{alg:fair:bicrit:lra}, where $t=\O{k(\log\log k)(\log^2 d)}$. 
Then with probability at least $\frac{2}{3}$, we have
\[\max_{i\in[\ell]}\|\bA^{(i)}(\widetilde{\bV})^\dagger\widetilde{\bV}-\bA^{(i)}\|_F\le\ell^c\cdot2^{1/c}\cdot\O{k(\log\log k)(\log d)}\min_{\bV\in\mathbb{R}^{k\times d}}\max_{i\in[\ell]}\|\bA^{(i)}\bV^\dagger\bV-\bA^{(i)}\|_F.\]
Therefore, we have
\[\max_{i\in[\ell]}\min_{\bB^{(i)}}\|\bB^{(i)}\widetilde{\bV}-\bA^{(i)}\|_F\le\ell^c\cdot2^{1/c}\cdot\O{k(\log\log k)(\log d)}\min_{\bV\in\mathbb{R}^{k\times d}}\max_{i\in[\ell]}\|\bA^{(i)}\bV^\dagger\bV-\bA^{(i)}\|_F.\]
Let $p$ be a sufficiently large parameter to be fixed. 
By Dvoretzky's theorem, i.e., \thmref{thm:dvoretzky}, with $\eps=\O{1}$, we have
\[\max_{i\in[\ell]}\min_{\bB^{(i)}}\|\bG^{(i)}\bB^{(i)}\widetilde{\bV}-\bG^{(i)}\bA^{(i)}\|_{p,2}\le\O{1}\cdot\max_{i\in[\ell]}\min_{\bB^{(i)}}\|\bB^{(i)}\widetilde{\bV}-\bA^{(i)}\|_F.\]
For sufficiently large $p=\O{\log\ell}$, we have by \lemref{lem:max:to:p}, 
\[\max_{i\in[\ell]}\min_{\bB^{(i)}}\|\bB^{(i)}\widetilde{\bV}-\bA^{(i)}\|_F.\le\O{1}\cdot\min_{\bB^{(i)}}\|\bG^{(i)}\bB^{(i)}\widetilde{\bV}-\bG^{(i)}\bA^{(i)}\|_{p,2}.\]
%For sufficiently large $p=\O{\log\ell}$, we have by \lemref{lem:max:to:p}, 
%\[\min_{\bB}\|\bB\widetilde{\bV}-\bA\|_{p,2}\le(1+\eps)\cdot\min_{\bB^{(i)}}\max_{i\in[\ell]}\|\bB^{(i)}\widetilde{\bV}-\bA^{(i)}\|_F.\]
%By Dvoretzky's theorem, we have
%\[\min_{\bB}\|\bG\bB\widetilde{\bV}-\bG\bA\|_{p,2}\le(1+\eps)\cdot\min_{\bB}\|\bB\widetilde{\bV}-\bA\|_{p,2}.\]
By a change of variables, we have
\[\min_{\bX}\|\bX\widetilde{\bV}-\bG\bA\|_{p,2}\le\min_{\bB}\|\bG\bB\widetilde{\bV}-\bG\bA\|_{p,2}.\]
Note that minimizing $\|\bX\widetilde{\bV}-\bG\bA\|_{p,2}$ over all $\bX$ corresponds to minimizing $\|\bX_i\widetilde{\bV}-\bG_i\bA\|_2$ over all $i\in[n']$. 
However, an arbitrary choice of $\bX_i$ may not correspond to selecting columns of $\bA$. 
Thus we apply a leverage score sampling matrix $\bS$ to sample columns of $\widetilde{\bV}$ which will correspondingly sample columns of $\bG\bA$. 
Then by \lemref{lem:leverage:active}, we have
\[\min_{\bX}\|\bX\widetilde{\bV}\bS-\bG\bA\bS\|_{p,2}\le\O{1}\cdot\min_{\bX}\|\bX\widetilde{\bV}-\bG\bA\|_{p,2}.\]
%As before, we have that minimizing $\|\bX\widetilde{\bV}\bS-\bG\bA\bS\|_{p,2}$ over all $\bX$ corresponds to minimizing $\|\bX_i\widetilde{\bV}\bS-\bG_i\bA\bS\|_2$ over all $i\in[n']$.
%Due to the closed-form solution of $L_2$ regression, we have that
%\[\bG_i\bA\bS\bS^\dagger(\widetilde{\bV})^\dagger=\argmin_{\bX_i}\|\bX_i\widetilde{\bV}\bS-\bG_i\bA\bS\|_2.\]
%Thus we have
%\[\bG\bA\bS\bS^\dagger(\widetilde{\bV})^\dagger=\argmin_{\bX}\|\bX\widetilde{\bV}\bS-\bG\bA\bS\|_{p,2}.\]
%
%Therefore by setting $\widetilde{\bX}$ as in the algorithm, we have
%\[\|\widetilde{\bX}\widetilde{\bV}-\bG\bA\|_{p,2}\le\O{1}\cdot\min_{\bB}\|\bG\bB\widetilde{\bV}-\bG\bA\|_{p,2}.\]
Putting these together, we have
\begin{equation}
\eqnlab{eqn:p2:fair:css:first}
\min_{\bX}\|\bX\widetilde{\bV}\bS-\bG\bA\bS\|_{p,2}\le\ell^c\cdot2^{1/c}\cdot\O{k(\log\log k)(\log d)}\min_{\bV\in\mathbb{R}^{k\times d}}\max_{i\in[\ell]}\|\bA^{(i)}\bV^\dagger\bV-\bA^{(i)}\|_F.
\end{equation}

Let $S$ be the selected columns from \algref{alg:fair:bicrit:css} by $\bS$ and note that $\bA^{(i)}\bS\bS^\dagger(\widetilde{\bV})^\dagger\widetilde{\bV}$ is in the column span of $\bA^{(i)}\bS$ for each $i\in[\ell]$. 
By Dvoretzky's theorem, i.e., \thmref{thm:dvoretzky}, with $\eps=\O{1}$ and a fixed parameter $p$, we have
\[\max_{i\in[\ell]}\|\bA^{(i)}\bS\bS^\dagger(\widetilde{\bV})^\dagger\widetilde{\bV}-\bA^{(i)}\|_F\le\O{1}\cdot\max_{i\in[\ell]}\|\bG^{(i)}\bA^{(i)}\bS\bS^\dagger(\widetilde{\bV})^\dagger\widetilde{\bV}-\bG^{(i)}\bA^{(i)}\|_{p,2}.\]
For sufficiently large $p$, we have
\[\max_{i\in[\ell]}\|\bG^{(i)}\bA^{(i)}\bS\bS^\dagger(\widetilde{\bV})^\dagger\widetilde{\bV}-\bG^{(i)}\bA^{(i)}\|_{p,2}\le\O{1}\cdot\|\bG\bA\bS\bS^\dagger(\widetilde{\bV})^\dagger\widetilde{\bV}-\bG\bA\|_{p,2}.\]
By the correctness of the leverage score sampling matrix, we have
\[\|\bG\bA\bS\bS^\dagger(\widetilde{\bV})^\dagger\widetilde{\bV}-\bG\bA\|_{p,2}\le\O{1}\cdot\|\bG\bA\bS\bS^\dagger(\widetilde{\bV})^\dagger\widetilde{\bV}\bS-\bG\bA\bS\|_{p,2}.\]
Observe that minimizing $\|\bX\widetilde{\bV}\bS-\bG\bA\bS\|_{p,2}$ over all $\bX$ corresponds to minimizing $\|\bX_i\widetilde{\bV}\bS-\bG_i\bA\bS\|_2$ over all $i\in[n']$.
Moreover, the closed-form solution of the $L_2$ minization problem is
\[\bG_i\bA\bS\bS^\dagger(\widetilde{\bV})^\dagger=\argmin_{\bX_i}\|\bX_i\widetilde{\bV}\bS-\bG_i\bA\bS\|_2.\]
Therefore, we have
\[\|\bG\bA\bS\bS^\dagger(\widetilde{\bV})^\dagger\widetilde{\bV}\bS-\bG\bA\bS\|_{p,2}\le\min_{\bX}\|\bX\widetilde{\bV}\bS-\bG\bA\bS\|_{p,2}.\]
Putting things together, we have
\begin{equation}
\eqnlab{eqn:p2:fair:css:second}
\max_{i\in[\ell]}\|\bA^{(i)}\bS\bS^\dagger(\widetilde{\bV})^\dagger\widetilde{\bV}-\bA^{(i)}\|_F\le\O{1}\cdot\min_{\bX}\|\bX\widetilde{\bV}\bS-\bG\bA\bS\|_{p,2}.
\end{equation}
By \eqnref{eqn:p2:fair:css:first} and \eqnref{eqn:p2:fair:css:second} and a rescaling of the constant hidden inside the big Oh notation, we have
\[\max_{i\in[\ell]}\|\bA^{(i)}\bS\bS^\dagger(\widetilde{\bV})^\dagger\widetilde{\bV}-\bA^{(i)}\|_F\le\ell^c\cdot2^{1/c}\cdot\O{k(\log\log k)(\log d)}\min_{\bV\in\mathbb{R}^{k\times d}}\max_{i\in[\ell]}\|\bA^{(i)}\bV^\dagger\bV-\bA^{(i)}\|_F.\]
The desired claim then follows from the setting of  $\bM^{(i)}=\bS^\dagger(\widetilde{\bV})^\dagger\widetilde{\bV}$ for $i\in[\ell]$ by \algref{alg:fair:bicrit:css}. 
\end{proof}

We also clearly have the following runtime guarantees on \algref{alg:fair:bicrit:css}. 
\begin{lemma}
\lemlab{lem:bicrit:css:time}
The runtime of \algref{alg:fair:bicrit:css} is $\mathrm{poly}(n,d)$. %polynomial in $n$ and $d$.
\end{lemma}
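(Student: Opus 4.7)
The plan is to walk through each line of \algref{alg:fair:bicrit:css} and certify that each takes time polynomial in $n$ and $d$, after which composition yields the claim. The runtime analysis is straightforward because every subroutine invoked is either a polynomial-time sketching/sampling primitive or a standard numerical linear algebra routine.

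First, line 1 invokes \algref{alg:fair:bicrit:lra}, whose runtime was established in \thmref{thm:fair:lra:bicrit} to be polynomial in $n$ and $d$, producing $\widetilde{\bV} \in \mathbb{R}^{t \times d}$ with $t = \O{k (\log\log k)(\log^2 d)}$. Line 2 generates a Gaussian matrix $\bG \in \mathbb{R}^{n' \times n}$ per \thmref{thm:dvoretzky}; since $p = \O{\log \ell}$ and the target distortion is constant, the dimension $n'$ is $\poly(n)$, so generating and storing $\bG$ takes $\poly(n)$ time.

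For line 3, computing leverage scores of $\widetilde{\bV}$ reduces to a singular value decomposition of a $t \times d$ matrix, which takes $\poly(t, d) = \poly(k, d)$ time by standard linear algebra (e.g., \thmref{thm:leverage:scores} and the discussion following it). Sampling $k' = \O{k \log k}$ columns proportional to these scores then takes time linear in $d$. Line 4 forms $\bM^{(i)} = \bS^\dagger (\widetilde{\bV})^\dagger \widetilde{\bV}$, which only requires pseudoinverses of matrices of size $d \times k'$ and $t \times d$, both of which can be computed in $\poly(d, k)$ time. Note that $\bM^{(i)}$ is the same for every $i \in [\ell]$, so no per-group repetition is needed here. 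Line 5 returns $\bA^{(i)} \bS$ for every $i \in [\ell]$, which amounts to $\sum_i n_i \cdot k' \cdot d = \poly(n, d)$ arithmetic operations in total.

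There is no real obstacle here: each primitive has a well-known polynomial bound and the algorithm makes only a constant number of sequential calls to these primitives, so combining the bounds yields an overall runtime of $\poly(n, d)$, as desired.
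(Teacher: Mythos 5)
Your line-by-line accounting is correct and is precisely the argument the paper intends: the paper states this lemma without proof, treating it as clear that each step (the call to \algref{alg:fair:bicrit:lra} covered by \thmref{thm:fair:lra:bicrit}, Gaussian matrix generation, an SVD of the $t\times d$ matrix $\widetilde{\bV}$ for leverage scores, and a few pseudoinverses and matrix products) is a standard polynomial-time primitive. The only assertion warranting care is that the Dvoretzky embedding dimension $n'$ is $\poly(n)$; here you are simply matching the paper's own (unargued) assumption in the runtime discussion for \thmref{thm:fair:lra:bicrit}, so your proof is on the same footing as the paper's.
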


By combining \lemref{lem:bicrit:css:correct} and \lemref{lem:bicrit:css:time}, we have: 
\thmfaircssbicrit*

\section{Empirical Evaluations for Socially Fair Low-Rank Approximation}
\seclab{sec:experiments}
We conduct a number of experiments for socially fair low-rank approximation. 
\subsection{Real-World Datasets}
In this section, we describe our empirical evaluations for socially fair low-rank approximation on real-world datasets. 

\paragraph{Credit card dataset.} 
We used the Default of Credit Card Clients dataset~\cite{yeh2009comparisons}, which has 30,000 observations across 23 features, including 17 numeric features. 
The dataset is a common human-centric data for experiments on fairness and was previously used as a benchmark by \cite{samadi2018price} for studies on fair PCA. 
The study collected information from various customers including multiple previous payment statements, previous payment delays, and upcoming bill statements, as well as if the customer was able to pay the upcoming bill statement or not, i.e., defaulted on the bill statement. 
The dataset was accessed through the UCI repository~\cite{misc_default_of_credit_card_clients_350}. 

\paragraph{Experimental setup.} 
For the purposes of reproducibility, our empirical evaluations were conducted using Python 3.10 using a 64-bit operating system on an AMD Ryzen 7 5700U CPU, with 8GB RAM and 8 cores with base clock 1.80 GHz. 
Our code is available at \url{https://github.com/samsonzhou/SVWZ24}. 
We compare our bicriteria algorithm from \algref{alg:fair:bicrit:lra} against the standard non-fair low-rank approximation algorithm that outputs the top $k$ right singular vectors from the singular value decomposition. 
Gender was used as the sensitive attribute, so that all observations with one gender formed the matrix $\bA^{(1)}$ and the other observations formed the matrix $\bA^{(2)}$. 
As in \algref{alg:fair:bicrit:lra}, we generate normalized Gaussian matrices $\bG$ and $\bH$ and then use $L_p$ Lewis weight sampling to generate a matrix $\bT$. 
We generate matrices $\bT$, $\bG$, and $\bH$ with a small number of dimensions and thus do not compute the sampling matrix $\bS$ but instead use the full matrix. 
We first sampled a small number $s$ of rows from $\bA^{(1)}$ and $\bA^{(2)}$ and compared our bicriteria algorithm to the standard non-fair low-rank approximation algorithm baseline. 
We plot the minimum ratio for $s\in\{2,3,\ldots,21\}$, $k=1$, and $p=1$ over $10,000$ iterations for each setup in \figref{fig:credit:s}. 
Similarly, we plot both the minimum and average ratios for $s=1000$, $k\in\{1,2,\ldots,7,8\}$, and $p=1$ over $200$ iterations in \figref{fig:credit:k:ratios}, where we permit the bicriteria solution to have rank $2k$. 
Finally, we compare the runtimes of the algorithms in \figref{fig:synth:time}, separating runtimes for our bicriteria into the total runtime \texttt{bicrit1} that includes the process of generating the Gaussian matrices and performing the Lewis weight sampling, as well as the runtime \texttt{bicrit2} for the step for extracting the factors similar to \texttt{SVD}, which measures the runtime for extracting the factors. 
Across all experiments in \figref{fig:credit}, we used Gaussian matrices $\bG$ with $30$ rows and $\bH$ with $30$ columns. 

\begin{figure}[!htb]
\centering
\begin{subfigure}[b]{0.32\textwidth}
\includegraphics[scale=0.25]{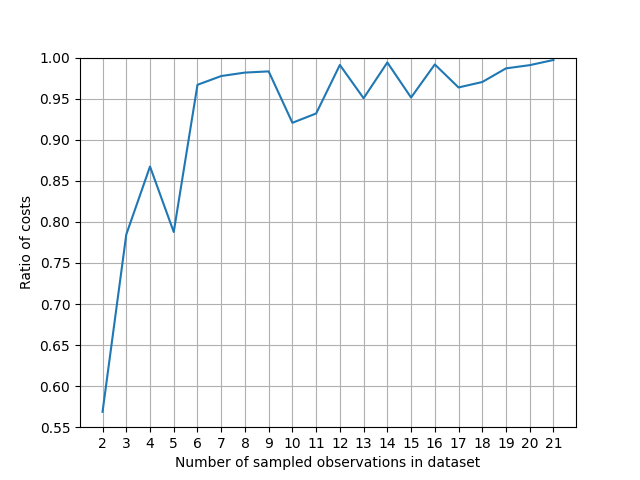}
\caption{Costs over subsampled data}
\figlab{fig:credit:s}
\end{subfigure}
\begin{subfigure}[b]{0.32\textwidth}
\includegraphics[scale=0.25]{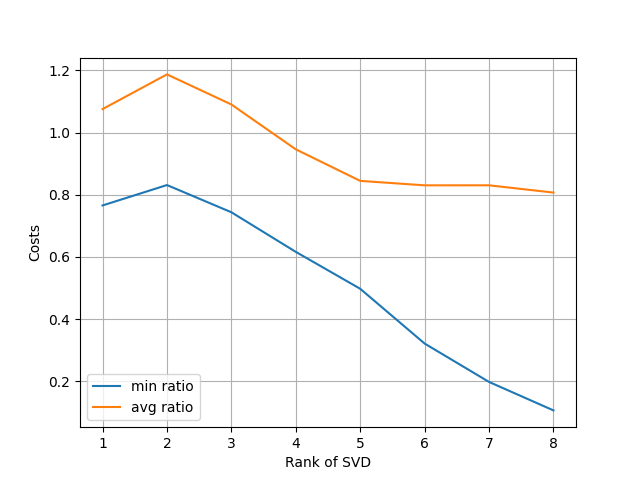}
\caption{Costs over different ranks $k$}
\figlab{fig:credit:k:ratios}
\end{subfigure}
\begin{subfigure}[b]{0.32\textwidth}
\includegraphics[scale=0.25]{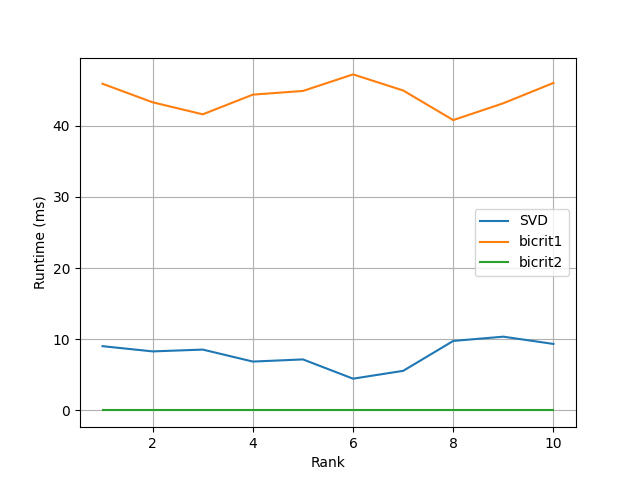}
\caption{Runtime evaluations}
\figlab{fig:synth:time}
\end{subfigure}
\caption{Empirical evaluations on the Default Credit dataset.}
\figlab{fig:credit}
\end{figure}

\paragraph{Results and discussion.}
Our empirical evaluations serve as a simple proof-of-concept demonstrating that our bicriteria algorithm can perform significantly better for socially fair low-rank approximation, even without allowing for more than $k$ factors. 
In particular, all ratios in \figref{fig:credit:s} and most ratios in \figlab{fig:credit:k} are less than $1$, indicating that the cost of our algorithm is less than the cost of the baseline, so that smaller ratios represent better performance. 
In fact, in some cases the ratio was less than $0.6$, demonstrating the superiority of our algorithm. 
Finally, we show in \figref{fig:synth:time} that due to the small number of rows acquired by Lewis weight sampling, the final extraction of the factors is significantly faster by our algorithm. 
However, the main runtime bottleneck is the process of generating the Gaussian matrices and performing the subsequent Lewis weight sampling. 
As we use an iterative procedure to approximate the $L_p$ Lewis weights with $10$ iterations, this translates to computing $10$ matrix inversions. 
Hence it is likely this gap can be improved with further optimizations to Lewis weight approximation algorithms. 

\subsection{Synthetic Datasets}
In this section, we remark on a number of additional empirical evaluations for socially fair low-rank approximation. 
We first compare our bicriteria algorithm to the standard low-rank approximation baseline over a simple synthetic example. 
We then describe a simple fixed input matrix for socially fair low-rank approximation that serves as a proof-of-concept similarly demonstrating the improvement of fair low-rank approximation optimal solutions over the optimal solutions for standard low-rank approximation. 

\paragraph{Synthetic dataset.}
Next, we show that for a simple dataset with two groups, each with two observations across four features, the performance of the fair low-rank approximation algorithm can be much better than standard low-rank approximation algorithm on the socially fair low-rank objective, even without allowing for bicriteria rank.  
For our synthetic dataset, we generate simple $\bA^{(1)}=\begin{bmatrix} 2 & 0 & 0 & 0 \\ 0 & 2 & 0 & 0 \end{bmatrix}$ and $\bA^{(2)}=\begin{bmatrix} 0 & 0 & 1.99 & 0 \\ 0 & 0 & 0 & 1.99\end{bmatrix}$. 
An optimal fair low-rank solution for $k=2$ is $\begin{bmatrix} 1 & 0 & 0 & 0 \\ 0 & 0 & 1 & 0 \end{bmatrix}$, which gives cost $4$. 
However, due to the weight of the items in $\bA^{(1)}$, the standard low-rank algorithm will output low-rank factors with cost $2\cdot1.99^2\approx7.92$, such as the factors $\begin{bmatrix} 1 & 0 & 0 & 0 \\ 0 & 1 & 0 & 0 \end{bmatrix}$. 
Thus the ratio between the objectives is roughly $\frac{1}{2}$, i.e., the improvement is roughly a factor of two. 
We show in \figref{fig:synth:lra} that our bicriteria algorithm achieves similar improvement. 

\paragraph{Experimental setup.}
Again, we compare our bicriteria algorithm from \algref{alg:fair:bicrit:lra} against the standard non-fair low-rank approximation algorithm that outputs the top $k$ right singular vectors from the singular value decomposition. 
Per Dvoretzky's Theorem, c.f., \thmref{thm:dvoretzky}, we generate normalized Gaussian matrices $\bG$ and $\bH$ of varying size and then use $L_p$ Lewis weight sampling to generate a matrix $\bT$ with varying values of $p$. 
We generate matrices $\bT$, $\bG$, and $\bH$ with a small number of dimensions and thus do not compute the sampling matrix $\bS$ but instead use the full matrix. 
We first fix $p=1$ and iterate over the number of rows/columns in the Gaussian sketch in the range $\{1,2,\ldots,19,20\}$ in \figref{fig:synth:birows}. 
We then fix the Gaussian sketch to have three rows/columns and iterate over $p\in\{1,2,\ldots,9,10\}$ in \figref{fig:synth:ps}. 
For each of the variables, we compare the outputs of the two algorithms across $100$ iterations and plot the ratio of their fair costs. 
In particular, we set the same rank parameter of $k=2$ for both algorithms; we remark that the theoretical guarantees for our bicriteria algorithm are even stronger when we permit the solution to have rank $k'$ for $k'>k$. 

\begin{figure}[!htb]
\centering
\begin{subfigure}[b]{0.49\textwidth}
\includegraphics[scale=0.42]{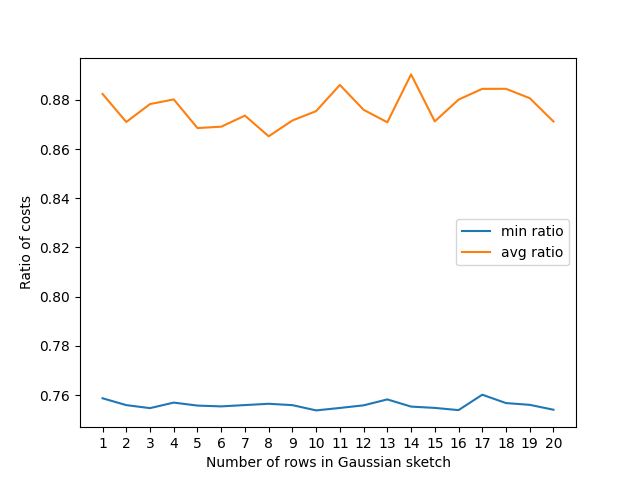}
\caption{Ratio of costs over different number of rows/columns in the Gaussian sketch matrix.}
\figlab{fig:synth:birows}
\end{subfigure}
\begin{subfigure}[b]{0.49\textwidth}
\includegraphics[scale=0.42]{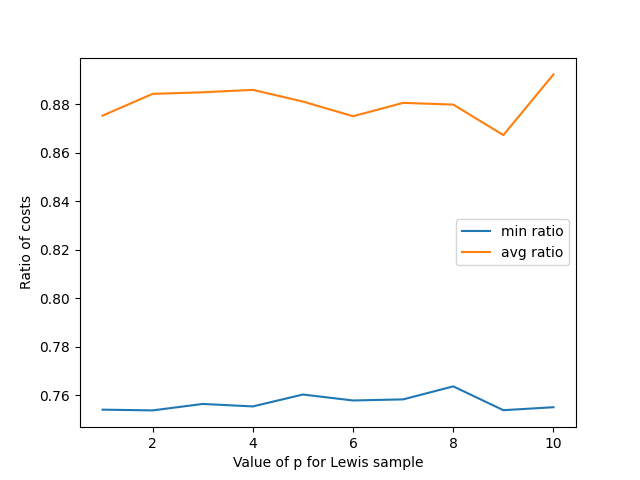}
\caption{Ratio of costs over different values of $p$ in the Lewis sampling process.}
\figlab{fig:synth:ps}
\end{subfigure}
\caption{Ratio of the cost of our bicriteria algorithm to the cost of the standard low-rank approximation solution for $k=2$, across 100 iterations.}
\figlab{fig:synth:lra}
\end{figure}

\paragraph{Results and discussion.}
Our empirical evaluations in \figref{fig:synth:lra} show that our algorithms can perform significantly better for socially fair low-rank approximation. 
We note that in both across all values of the number of rows/columns in the Gaussian sketch in \figref{fig:synth:birows} and across all values of $p$ in the Lewis weight sampling parameter in \figref{fig:synth:ps}, the average ratio between our bicriteria solution and the standard low-rank approximation is less than $0.9$. 
We remark that any ratio less than $1$ demonstrates the superiority of our algorithm, with smaller ratios indicating better performance. 
In fact, the minimum ratio is as low as $0.76$. 

\paragraph{Proof-of-concept.}
Finally, we give a toy example using a synthetic dataset showing the importance of considering fairness in low-rank approximation. 
Namely, we show that even for a simple dataset with four groups, each with a single observation across two features, the performance of the fair low-rank approximation algorithm can be much better than standard low-rank approximation algorithm on the socially fair low-rank objective. 
In this setup, we repeatedly generate matrices $\bA^{(1)},\bA^{(2)},\bA^{(3)},\bA^{(4)}\in\{0,1\}^2$, with $\bA^{(1)}=(1,0)$ and $\bA^{(2)}=\bA^{(3)}=\bA^{(4)}=(0,1)$. 
The optimal fair low-rank solution is $\left(\frac{\sqrt{2}}{2},\frac{\sqrt{2}}{2}\right)$ but due to the extra instances of $(0,1)$, the standard low-rank algorithm will output the factor $(0,1)$. 
Thus the optimal fair solution achieves value $\frac{1}{4}$ on the socially fair low-rank approximation objective while the standard low-rank approximation solution achieves value $\frac{1}{2}$, so that the ratio is a $50\%$ improvement. 

\section{Conclusion}
In this paper, we study algorithms for socially fair low-rank approximation and column subset selection. 
Although we show that even a constant-factor approximation to fair low-rank approximation requires exponential time under certain standard complexity hypotheses, we give an algorithm that is a substantial improvement over the na\"{i}ve approach for any constant number of groups. 
We also give a bicriteria approximation algorithms for fair low-rank approximation and fair column subset selection that runs in polynomial time. 
Finally, we perform a number of empirical evaluations serving as a simple proof-of-concept demonstrating the practical nature of our theoretical findings. 
It is our hope that our work is an important step toward better understanding of fairness in numerical linear algebra. 

Our work also leads to a number of interesting future directions. 
For example, there are various notions of fairness, including (1) \emph{disparate impact}, which requires that the output of an algorithm respects a desired distribution that protects various subpopulations in the decision-making process~\cite{Chierichetti0LV17,Rosner018,backurs2019scalable,bera2019fair,bercea2018cost,AhmadianEK0MMPV20,EsmaeiliBT020,dai2022fair,ChenXZC24}, (2) \emph{individual fairness}, where each element should incur a cost that is within a reasonable amount with respect to the overall dataset~\cite{JungKL20,MahabadiV20,NegahbaniC21,vakilian2022improved}, or (3) \emph{representative fairness}, where the number of representative elements of each subpopulation should be relatively balanced~\cite{KleindessnerAM19,jones2020fair,AngelidakisKSZ22,nguyen2022fair,thejaswi2022clustering,hotegni2023approximation}. 
A natural direction is the capabilities and limitations of randomized linear algebra for other notions of fairness. 
Another question is the efficient construction of $\eps$-coresets with minimal size for socially fair subset selection. 
Finally, one can ask whether similar guarantees are possible in settings whether the input matrix is not centralized, but rather for example, distributed or arrives over a data stream. 
Indeed, there are known analogs~\cite{CormodeDW18,BravermanDMMUWZ20,JiangLLMW21,WoodruffY23} for a number of the central tools used in the main algorithms of this paper.

\def\shortbib{0}
\bibliographystyle{alpha}
\bibliography{references}

%%%%%%%%%%%%%%%%%%%%%%%%%%%%%%%%%%%%%%%%%%%%%%%%%%%%%%%%%%%%

\appendix

\section{Socially Fair Regression}
\applab{app:regression}
In this section, we present simple and intuitive algorithms for fair regression. 
Recall that in the classical regression problem, the input is a data matrix $\bA\in\mathbb{R}^{n\times d}$ and a label matrix $\bb\in\mathbb{R}^n$, and the task is to determine the hidden vector $\bx\in\mathbb{R}^d$ that minimizes $\calL(\bA\bx-\bb)$ for some loss function $\calL$. 
The vector $\bx$ can then be subsequently used to label future observations $\bv$ through the operation $\langle\bv,\bx\rangle$. 
Hence, regression analysis is frequently used in machine learning to infer causal relationships between the independent and dependent variables, and thus also used for prediction and forecasting.

Regression and low-rank approximation share key connections and thus are often studied together, e.g., a number of sketching techniques can often be applied to both problems, resulting in significant runtime improvements due to dimensionality reduction. 
See \secref{sec:prelims:shared} for more information. 
Surprisingly, we show that socially fair regression and socially fair low-rank approximation exhibit drastically different complexities. 
Specifically, we show that while fair regression can be solved up to arbitrary accuracy in polynomial time for a wide variety of loss functions, even constant-factor approximation to fair low-rank approximation requires exponential time under certain standard complexity hypotheses. 
%Our algorithms are simple to understand and implement and indeed we choose to perform our experiments in \secref{sec:experiments} using these algorithms. 
%In this section, we present our algorithm that solves fair regression up to additive error $\eps$ in polynomial time for any convex loss function whose subgradient can also be computed in polynomial time. 
Let $\ell$ be the number of groups, $n_1,\ldots,n_\ell$ be positive integers and for each $i\in[\ell]$, let $\bA^{(i)}\in\mathbb{R}^{n_i\times d}$ and $\bb^{(i)}\in\mathbb{R}^{n_i}$. 
Then for a norm $\|\cdot\|$, we define the fair regression problem to be 
\[\min_{\bx\in\mathbb{R}^d}\max_{i\in[\ell]}\|\bA^{(i)}\bx-\bb^{(i)}\|.\]
We first show that the optimal solution to the standard regression minimization problem also admits a $\ell$-approximation to the fair regression problem. 
\begin{restatable}{theorem}{thmellapprox}
\thmlab{thm:ell:approx}
The optimal solution to the standard regression problem that computes a vector $\widehat{\bx}\in\mathbb{R}^d$ also satisfies 
\[\max_{i\in[\ell]}\|\bA^{(i)}\widehat{\bx}-\bb^{(i)}\|_2\le\ell\cdot\min_{\bx\in\mathbb{R}^d}\max_{i\in[\ell]}\|\bA^{(i)}\bx-\bb^{(i)}\|_2,\]
i.e., the algorithm outputs a $\ell$-approximation to the fair regression problem. 
For $L_2$ loss, the algorithm uses $\O{nd^{\omega-1}}$ runtime, where $n=n_1+\ldots+n_\ell$ and $\omega$ is the matrix multiplication exponent. 
\end{restatable}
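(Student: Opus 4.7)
The plan is to reduce the fair regression problem to an ordinary $L_2$ regression problem by vertically stacking the per-group systems. Set $\bA = \bA^{(1)}\circ\cdots\circ\bA^{(\ell)} \in \mathbb{R}^{n\times d}$ and $\bb = \bb^{(1)}\circ\cdots\circ\bb^{(\ell)} \in \mathbb{R}^n$, and let $\widehat{\bx}$ be the minimizer of $\min_\bx \|\bA\bx-\bb\|_2$, given in closed form by $\widehat{\bx} = \bA^\dagger \bb$ as recalled in \secref{sec:prelims:shared}. The approximation guarantee will then follow from a single two-sided comparison between the stacked residual norm and the maximum per-group residual norm: since the squared stacked loss decomposes as $\|\bA\bx-\bb\|_2^2 = \sum_{i\in[\ell]}\|\bA^{(i)}\bx-\bb^{(i)}\|_2^2$, one has
\[\max_{i\in[\ell]}\|\bA^{(i)}\bx-\bb^{(i)}\|_2 \;\le\; \|\bA\bx-\bb\|_2 \;\le\; \sqrt{\ell}\cdot\max_{i\in[\ell]}\|\bA^{(i)}\bx-\bb^{(i)}\|_2\]
simultaneously for every $\bx\in\mathbb{R}^d$.

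With this sandwich in hand, let $\bx^* = \argmin_{\bx}\max_{i\in[\ell]}\|\bA^{(i)}\bx-\bb^{(i)}\|_2$ and $\OPT = \max_i\|\bA^{(i)}\bx^*-\bb^{(i)}\|_2$. Applying the left half of the inequality to $\widehat{\bx}$, the optimality of $\widehat{\bx}$ for the stacked problem, and the right half applied to $\bx^*$ in sequence yields
\[\max_{i\in[\ell]}\|\bA^{(i)}\widehat{\bx}-\bb^{(i)}\|_2 \;\le\; \|\bA\widehat{\bx}-\bb\|_2 \;\le\; \|\bA\bx^*-\bb\|_2 \;\le\; \sqrt{\ell}\cdot\OPT \;\le\; \ell\cdot\OPT,\]
which is exactly the claimed $\ell$-approximation (and in fact the argument gives the stronger factor $\sqrt{\ell}$, which I would record as a remark). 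For the runtime in the $L_2$ case, I would compute $\widehat{\bx} = (\bA^\top\bA)^{-1}\bA^\top\bb$ by forming the $d\times d$ Gram matrix $\bA^\top\bA$ in time $\O{nd^{\omega-1}}$ via a standard blocking of $\bA$ into $d\times d$ blocks and fast matrix multiplication, inverting it in $\O{d^\omega}$, and applying the inverse to $\bA^\top\bb$ in $\O{nd+d^2}$; under $n\ge d$ this totals $\O{nd^{\omega-1}}$.

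There is no serious obstacle here: the entire content of the proof is the elementary $L_\infty$-to-$L_2$ norm comparison on an $\ell$-dimensional vector (whose coordinates are the per-group residual norms), together with the classical closed-form $L_2$ regression solution on the stacked instance. The only detail worth being careful about is the rank-deficient case, which is handled cleanly by interpreting $\bA^\dagger$ as the Moore--Penrose pseudoinverse, so that $\widehat{\bx}$ remains the minimum-norm minimizer and the chain of inequalities above is unchanged.
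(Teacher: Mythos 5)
Your proof is correct and follows the same overall route as the paper's: stack the groups into $\bA=\bA^{(1)}\circ\cdots\circ\bA^{(\ell)}$, $\bb=\bb^{(1)}\circ\cdots\circ\bb^{(\ell)}$, use optimality of $\widehat{\bx}$ for the stacked problem, and then compare the stacked norm to the maximum per-group norm. The one step where you genuinely diverge is the norm comparison: the paper bounds $\|\bA\bx^*-\bb\|_2\le\sum_{i=1}^\ell\|\bA^{(i)}\bx^*-\bb^{(i)}\|_2\le\ell\cdot\max_{i\in[\ell]}\|\bA^{(i)}\bx^*-\bb^{(i)}\|_2$ by the triangle inequality, which is what lets the paper immediately remark afterward that the same $\ell$-approximation holds for \emph{any} norm admitting an efficient regression solver. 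You instead exploit the Pythagorean decomposition $\|\bA\bx-\bb\|_2^2=\sum_{i\in[\ell]}\|\bA^{(i)}\bx-\bb^{(i)}\|_2^2$, which is specific to $L_2$ but yields the sharper factor $\sqrt{\ell}$ (and trivially implies the claimed $\ell$). Both arguments are valid for the theorem as stated; yours buys a quantitatively better constant in the $L_2$ case, while the paper's buys generality across norms. Your runtime accounting via the Gram matrix, and your remark on handling rank deficiency through the Moore--Penrose pseudoinverse, are both fine and consistent with the closed-form solution the paper invokes.
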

\begin{proof}
Let $\widehat{\bx}=\argmin_{\bx\in\mathbb{R}^d}\|\bA\bx-\bb\|$, where $\bA=\bA^{(1)}\circ\ldots\circ\bA^{(\ell)}$ and $\bb=\bb^{(1)}\circ\ldots\circ\bb^{(\ell)}$. 
Let $\bx^*=\argmin_{\bx\in\mathbb{R}^d}\max_{i\in[\ell]}\|\bA^{(i)}\bx-\bb^{(i)}\|$. 
Then by the optimality of $\widehat{\bx}$, we have
\[\|\bA\widehat{\bx}-\bb\|\le\|\bA\bx^*-\bb\|.\]
By triangle inequality,
\[\|\bA\bx^*-\bb\|_2\le\sum_{i=1}^\ell\|\bA^{(i)}\bx^*-\bb^{(i)}\|\le\ell\cdot\max_{i\in[\ell]}\|\bA^{(i)}\bx^*-\bb^{(i)}\|.\]
Therefore,
\[\|\bA\widehat{\bx}-\bb\|\le\ell\cdot\max_{i\in[\ell]}\|\bA^{(i)}\bx^*-\bb^{(i)}\|,\]
so that $\widehat{\bx}$ produces an $\ell$-approximation to the fair regression problem. 

Finally, note that for $L_2$ loss, the closed form solution of $\widehat{\bx}$ is $\widehat{\bx}=(\bA^{(i)})^\dagger\bb^{(i)}$ and can computed in runtime $\O{nd^{\omega-1}}$.
\end{proof}
More generally, we can apply the same principles to observe that for any norm $\|\cdot\|$, $\|\bA\bx^*-\bb\|\le\sum_{i=1}^\ell\|\bA^{(i)}\bx^*-\bb^{(i)}\|\le\ell\cdot\max_{i\in[\ell]}\|\bA^{(i)}\bx^*-\bb^{(i)}\|$.
%\[\|\bA\bx^*-\bb\|\le\sum_{i=1}^\ell\|\bA^{(i)}\bx^*-\bb^{(i)}\|\le\ell\cdot\max_{i\in[\ell]}\|\bA^{(i)}\bx^*-\bb^{(i)}\|.\]
Thus if $\|\cdot\|$ admits an efficient algorithm for regression, then $\|\cdot\|$ also admits an efficient $\ell$-approximation algorithm for fair regression. 
See \algref{alg:fair:regression:ell} for reference. 

We now give a general algorithm for achieving additive $\eps$ approximation to fair regression. 
\cite{AbernethyAKM0Z22} observed that every norm is convex, since $\|\lambda\bu+(1-\lambda)\bv\|\le\lambda\|\bu\|+(1-\lambda)\|\bv\|$ by triangle inequality. 
Therefore, the function $g(\bx):=\max_{i\in[\ell]}\{\|\bA^{(i)}\bx-\bb^{(i)}\|\}$ is convex because the maximum of convex functions is also a convex function. 
Hence, the objective $\min_{\bx}g(\bx)$ is the minimization of a convex function and can be solved using standard tools in convex optimization. 
\cite{AbernethyAKM0Z22} leveraged this observation by applying projected stochastic gradient descent. 
Here we use a convex solvers based on separating hyperplanes. 
The resulting algorithm is quite simple and appears in \algref{alg:fair:regression}.

\begin{algorithm}[H]
\caption{$\ell$-approximation for Socially Fair Regression}
\alglab{alg:fair:regression:ell}
\begin{algorithmic}[1]
\REQUIRE{$\bA^{(i)}\in\mathbb{R}^{n_i\times d}$, $\bb^{(i)}\in\mathbb{R}^{n_i}$ for all $i\in[\ell]$}
\ENSURE{$\ell$-approximation for fair regression}
\STATE{$\bA\gets\bA^{(1)}\circ\ldots\circ\bA^{(\ell)}$}
\STATE{$\bb\gets\bA^{(1)}\circ\ldots\circ\bb^{(\ell)}$}
\STATE{Return $\argmin_{\bx\in\mathbb{R}^d}\|\bA\bx-\bb\|$}
\end{algorithmic}
\end{algorithm}

\begin{algorithm}[H]
\caption{Algorithm for Socially Fair Regression}
\alglab{alg:fair:regression}
\begin{algorithmic}[1]
\REQUIRE{$\bA^{(i)}\in\mathbb{R}^{n_i\times d}$, $\bb^{(i)}\in\mathbb{R}^{n_i}$ for all $i\in[\ell]$}
\ENSURE{Optimal for socially fair regression}
\STATE{Use a convex solver to return $\argmin_{\bx\in\mathbb{R}^d}\max_{i\in[\ell]}\|\bA^{(i)}\bx-\bb^{(i)}\|$}
\end{algorithmic}
\end{algorithm}

We first require the following definition of a separation oracle. 
\begin{definition}[Separation oracle]
Given a set $K\subset\mathbb{R}^d$ and $\eps>0$, a separation oracle for $K$ is a function that takes an input $x\in\mathbb{R}^d$, either outputs that $x$ is in $K$ or outputs a separating hyperplane, i.e., a half-space of the form $H:=\{z\,\mid\,c^\top z\le c^\top x+b\}\supseteq K$ with $b\le\eps\|c\|_2$ and $c\neq 0^d$.
\end{definition}
We next recall the following statement on the runtime of convex solvers given access to a separation oracle. 
\begin{theorem}
[\cite{LeeSW15,JiangLSW20}]
\thmlab{thm:convex:solver}
Suppose there exists a set $K$ that is contained in a box of radius $R$ and a separation oracle that, given a point $x$ and using time $T$, either outputs that $x$ is in $K$ or outputs a separating hyperplane. 
Then there exists an algorithm that either finds a point in $K$ or proves that $K$ does not contain a ball of radius $\eps$. 
The algorithm uses $\O{dT\log\kappa}+d^3\cdot\polylog\kappa$ runtime, for $\kappa=\frac{dR}{\eps}$. 
\end{theorem}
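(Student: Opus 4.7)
The plan is to prove the statement by invoking a cutting-plane method, viewing the problem as a feasibility question for the convex set $K$. The high-level strategy is standard: maintain a convex region $E_t \supseteq K$ (initially the bounding box of radius $R$), at each step query a carefully chosen center $x_t \in E_t$ of the current region, call the separation oracle on $x_t$, and either return $x_t$ if it lies in $K$ or else use the returned half-space $H_t$ to update $E_{t+1} = E_t \cap H_t$. One iterates until either a feasible point is found or the current $E_t$ is certified to contain no ball of radius $\eps$, in which case neither does $K$.

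The main technical work is to choose the centers so that the method converges in $\O{d \log \kappa}$ oracle calls rather than the $\O{d^2 \log \kappa}$ of the textbook ellipsoid method, since the theorem's per-oracle runtime is only multiplied by $d$. First I would recall the hybrid center (mixture of analytic center, volumetric center, or $\ell_p$-Lewis weight based center) used in Lee--Sidford--Wong and refined in Jiang--Lee--Song--Wong; these choices guarantee that the relevant potential function (a regularized volume or log-determinant of an approximate Hessian) decreases geometrically per step, yielding the $\O{d \log \kappa}$ iteration bound via a standard potential argument. The $\eps$-tolerance for the separation oracle (the $b \le \eps\|c\|_2$ slack in the definition) is absorbed by the stopping criterion: once the current $E_t$ fits inside a ball of radius $\eps$, we declare infeasibility.

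For the runtime, each iteration spends $T$ to query the oracle and the remainder on center computation and cut updates. The nontrivial piece is amortizing the center computation: naively, recomputing a leverage-score-based analytic center after each cut costs $\poly(d)$ linear-system solves, giving an extra $d^\omega$ factor per step. The key insight from Jiang--Lee--Song--Wong, which I would cite and invoke, is a data-structural maintenance of an approximate inverse Hessian across cuts so that the amortized per-iteration cost of center computation is $d^2 \polylog \kappa$, summing to the stated $d^3 \polylog \kappa$ across all $\O{d \log \kappa}$ iterations. Combined with the $\O{d T \log \kappa}$ cost of oracle calls, this gives the claimed bound.

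The hard part will be the cutting-plane center maintenance: bounding the amortized cost of updating the approximate Hessian (or Lewis weights) under a rank-one perturbation per cut, while preserving the potential-decrease guarantee that drives the iteration count. This is the heart of the Lee--Sidford--Wong/Jiang--Lee--Song--Wong contribution, and a full proof would require both the potential-function analysis and the low-rank-update data structure; since the statement is cited from these works, I would present it as a black box rather than reprove it, while sketching why both components are necessary to meet the stated bound.
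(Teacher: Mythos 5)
The paper gives no proof of this theorem: it is imported verbatim as a black-box result from Lee--Sidford--Wong and Jiang--Lee--Song--Wong, which is exactly how you ultimately treat it. Your sketch of the underlying machinery (cutting-plane method with a hybrid/volumetric-type center achieving $\O{d\log\kappa}$ oracle calls, plus amortized maintenance of the center to get the additive $d^3\polylog\kappa$ arithmetic cost) accurately describes what those works establish, so your proposal is correct and consistent with the paper's treatment.
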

We use the following reduction from the computation of subgradients to separation oracles, given by Lemma 38 in \cite{LeeSW15}. 
\begin{lemma}[\cite{LeeSW15}]
\lemlab{lem:oracle:subgrad}
Given $\alpha>0$, suppose $K$ is a set defined by $\{x\in[-\Delta,\Delta]^d\,\mid\,\|Ax\|\le\alpha\}$, where $\Delta=\poly(n)$ and the subgradient of $\|Ax\|$ can be computed in $\poly(n,d)$ time. 
Then there exists a separation oracle for $K$ that uses $\poly\left(\frac{nd}{\eps}\right)$ time. 
\end{lemma}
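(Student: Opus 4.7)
The plan is to construct the separation oracle directly from the subgradient oracle for $f(y) := \|Ay\|$, which is convex as the composition of a norm with a linear map. Given a query point $x$, the oracle will run three tests in sequence: a box membership check, a norm magnitude check, and, if neither succeeds, a subgradient-based hyperplane.

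First, I would check coordinate-by-coordinate whether $x \in [-\Delta,\Delta]^d$. If some coordinate $x_i$ violates (say $x_i > \Delta$), return the axis-aligned hyperplane $\{z : e_i^\top z \le \Delta\}$, which contains $K$ because $K \subseteq [-\Delta,\Delta]^d$. In the form $c^\top z \le c^\top x + b$ this gives $c = e_i$ and $b = \Delta - x_i \le 0$, so the condition $b \le \eps\|c\|_2$ is automatic; an analogous axis-aligned hyperplane handles $x_i < -\Delta$. This step uses $O(d)$ time.

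Second, if $x$ lies in the box, compute $f(x) = \|Ax\|$ in $\poly(n,d)$ time. If $f(x) \le \alpha$, report $x \in K$. Otherwise, invoke the assumed subgradient oracle to obtain $g \in \partial f(x)$ in $\poly(n,d)$ time. By the subgradient inequality, every $y$ with $f(y) \le \alpha$ satisfies $g^\top(y-x) \le f(y) - f(x) \le \alpha - f(x) < 0$, so the halfspace $\{z : g^\top z \le g^\top x - (f(x)-\alpha)\}$ contains $K$. Setting $c = g$ and $b = \alpha - f(x) < 0$, the condition $b \le \eps\|c\|_2$ is immediate. Nonzeroness of $c$ follows because $0 \in \partial f(x)$ would make $x$ a minimizer of $f$, forcing $f(y) \ge f(x) > \alpha$ for every $y$ and hence $K = \emptyset$, contradicting the setup in which the oracle is being queried.

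The main obstacle is tracking bit complexity and confirming $c \neq 0$ in the separating case; both are handled above by exploiting the structure of the problem, so the approximation slack $\eps$ is never actually consumed in the hyperplane bound. All operations above are $\poly(n,d)$, yielding the claimed $\poly(nd/\eps)$ runtime; the explicit $1/\eps$ factor accommodates potential approximate evaluation of $\|Ax\|$ and of the subgradient (for example, for $L_p$ norms with non-integer $p$), contributing only $\polylog(1/\eps)$ overhead through standard bit-precision analysis of the arithmetic performed on $A$ and $x$.
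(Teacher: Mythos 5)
The paper does not prove this statement; it is imported verbatim as Lemma~38 of \cite{LeeSW15}, so there is no internal proof to compare against. Your construction is the standard reduction from a subgradient oracle to a separation oracle (box check, evaluate $\|Ax\|$, cut with a subgradient when the constraint is violated), and it is correct: the subgradient inequality gives a valid halfspace with $b=\alpha-f(x)<0\le\eps\|c\|_2$, and all work is $\poly(n,d)$ plus the oracle calls. One small remark on your $c\neq 0$ argument: the cleanest justification is that $\alpha>0$ forces $0\in K$, so $K$ is never empty and the case $0\in\partial f(x)$ with $f(x)>\alpha$ genuinely cannot arise (equivalently, $f\ge 0$ attains its minimum value $0<\alpha$, so a point with $f(x)>\alpha$ is not a minimizer); your phrasing ``contradicting the setup in which the oracle is being queried'' leaves it slightly ambiguous whether you are assuming nonemptiness or deriving it.
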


Putting things together, we now have our main algorithm for fair regression:
\begin{restatable}{theorem}{thmfairreg}
\thmlab{thm:fair:reg}
Let $n_1,\ldots,n_\ell$ be positive integers and for each $i\in[\ell]$, let $\bA^{(i)}\in\mathbb{R}^{n_i\times d}$ and $\bb^{(i)}\in\mathbb{R}^{n_i}$. 
Let $\Delta=\poly(n)$ for $n=n_1+\ldots+n_\ell$ and let $\eps\in(0,1)$. 
For any norm $\|\cdot\|$ whose subgradient can be computed in $\poly(n,d)$ time, there exists an algorithm that outputs $\bx^*\in[-\Delta,\Delta]^d$ such that $\max_{i\in[\ell]}\|\bA^{(i)}\bx^*-\bb^{(i)}\|\le\eps+\min_{\bx\in[-\Delta,\Delta]^d}\max_{i\in[\ell]}\|\bA^{(i)}\bx-\bb^{(i)}\|$.
%\[\max_{i\in[\ell]}\|\bA^{(i)}\bx^*-\bb^{(i)}\|\le\eps+\min_{\bx\in[-\Delta,\Delta]^d}\max_{i\in[\ell]}\|\bA^{(i)}\bx-\bb^{(i)}\|.\]
The algorithm uses $\poly\left(n,d,\frac{1}{\eps}\right)$ runtime. 
\end{restatable}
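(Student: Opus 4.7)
The plan is to exploit the convexity of the objective $g(\bx) := \max_{i\in[\ell]} \|\bA^{(i)}\bx - \bb^{(i)}\|$. Each constituent $\|\bA^{(i)}\bx - \bb^{(i)}\|$ is the composition of a norm with an affine map, hence convex, so $g$ is convex as a pointwise maximum of convex functions. Consequently, for every threshold $\alpha$, the sublevel set
\[K_\alpha := \{\bx \in [-\Delta,\Delta]^d : g(\bx) \le \alpha\}\]
is convex, and the optimization problem reduces to a sequence of feasibility problems on $K_\alpha$ via binary search.

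First, I would establish a trivial upper bound $M = \poly(n,d)$ on $\max_\bx g(\bx)$ over the box (using $\|\bx\|_\infty \le \Delta = \poly(n)$ and the polynomial magnitudes of the inputs), so that binary search on $\alpha \in [0,M]$ with resolution $\eps/2$ uses $\poly(\log n, \log d, \log(1/\eps))$ iterations. Each iteration invokes \thmref{thm:convex:solver} on $K_\alpha$, which requires a separation oracle. Given a query $\bx$, the oracle evaluates $g(\bx)$ by computing each $\|\bA^{(i)}\bx - \bb^{(i)}\|$ in $\poly(n,d)$ time. If $g(\bx) \le \alpha$, declare $\bx \in K_\alpha$; otherwise pick a maximizer $i^*$ and let $\bc$ be a subgradient of $\bz \mapsto \|\bA^{(i^*)}\bz - \bb^{(i^*)}\|$ at $\bx$, which exists and is computable in $\poly(n,d)$ time by hypothesis. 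Convexity then gives $g(\bz) \ge g(\bx) + \bc^\top(\bz-\bx)$ for all $\bz$, so the halfspace $\{\bz : \bc^\top \bz \le \bc^\top \bx + (\alpha - g(\bx))\}$ contains $K_\alpha$; since $\alpha - g(\bx) < 0$, the separation requirement $b \le \eps\|\bc\|_2$ is automatic.

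With a separation oracle running in $T = \poly(n,d)$ time and radius $R = \Delta\sqrt{d} = \poly(n,d)$, \thmref{thm:convex:solver} gives a feasibility test running in $\poly(n,d,1/\eps)$ time that either returns a point in $K_\alpha$ or certifies that $K_\alpha$ contains no ball of radius $\eps'$ for an appropriate $\eps' = \poly(\eps/(nd))$. Combined with the binary search, this produces the desired $\bx^*$ with $g(\bx^*) \le \OPT + \eps$.

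The main subtlety — and the only place the proof is not entirely mechanical — is ensuring that when the binary search reaches $\alpha = \OPT + \eps/2$, the set $K_\alpha$ actually contains a ball of the radius demanded by \thmref{thm:convex:solver}, so that the convex solver is guaranteed to return a feasible point rather than certify emptiness. This follows from a standard Lipschitz argument: $g$ is $L$-Lipschitz with $L \le \max_i \|\bA^{(i)}\|_{\mathrm{op},*} = \poly(n,d)$ (where the relevant operator norm depends on the dual of $\|\cdot\|$), so the $(\eps/(2L))$-ball around any optimal $\bx^{\mathrm{opt}}$ lies in $K_{\OPT+\eps/2}$. Choosing the solver's radius parameter as $\eps' = \eps/(2L) = 1/\poly(n,d,1/\eps)$ and noting $\log(R/\eps')$ is polylogarithmic preserves the overall $\poly(n,d,1/\eps)$ runtime.
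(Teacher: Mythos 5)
Your proposal is correct and follows essentially the same route as the paper: observe that $g(\bx)=\max_{i\in[\ell]}\|\bA^{(i)}\bx-\bb^{(i)}\|$ is convex and minimize it with the separation-oracle-based convex solver of \thmref{thm:convex:solver}, using the computable subgradient to build the oracle as in \lemref{lem:oracle:subgrad}. In fact you supply considerably more detail than the paper's three-sentence proof (the binary search over $\alpha$, the explicit halfspace, and the Lipschitz argument guaranteeing $K_{\OPT+\eps/2}$ contains a ball of the required radius), all of which is sound.
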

\begin{proof}
Recall that every norm is convex, since $\|\lambda\bu+(1-\lambda)\bv\|\le\lambda\|\bu\|+(1-\lambda)\|\bv\|$ by triangle inequality. 
Therefore, the function $g(\bx):=\max_{i\in[\ell]}\{\|\bA^{(i)}\bx-\bb^{(i)}\|\}$ is convex because the maximum of convex functions is also a convex function. 
Hence, the objective $\min_{\bx}g(\bx)$ is the minimization of a convex function and can be solved using a convex program. 
\end{proof}

\subsection{Algorithms for \texorpdfstring{$L_1$}{L1} and \texorpdfstring{$L_2$}{L2} Regression}
We now give an efficient algorithm for $(1+\eps)$-approximation for fair $L_1$ regression using linear programs. 

\iffalse
\begin{figure}
Given $A \in \mathbb{R}^{n \times d}$
\begin{equation*}
\begin{array}{llll}
& \text{minimize:} & x_1+\ldots+x_d & \\
& \text{subject to:} & \left(\sum_{j=1}^d A_{i,j}x_j\right)-b_j\le t_i & \qquad\text{for all } i\in[n]\\
& & \left(\sum_{j=1}^d A_{i,j}x_j\right)-b_j\ge -t_i & \qquad\text{for all } i\in[n]\\
& & t_i\ge 0 & \qquad\text{for all } i\in[n]\\
& & \sum_{i=1}^\ell t_i\le L & \\
\end{array}
\end{equation*}
\caption{Linear program}
\figlab{fig:lp}
\end{figure}
\fi

\begin{definition}[Linear program formulation for min-max $L_1$ formulation]\label{def:lp}
Let $n = \sum_{i=1}^{\ell} n_i$. 
Given ${\bf A}^{(i)} \in \mathbb{R}^{n_i \times d}$, ${\bf b}^{(i)} \in \mathbb{R}^{n_i}$ and a parameter $L >0$, the linear program formulation (with $d+n$ variables and $O( n )$ constraints) can be written as follows
\begin{align*}
    \min_{x \in \mathbb{R}^d,t \in \mathbb{R}^{\sum_{i=1}^{\ell} n_i }} & ~ \langle x, {\bf 1}_d \rangle \\
   \mathrm{subject~to} & ~  ( {\bf A}^{(i)} x - {\bf b}^{(i)} )_j \leq t_{i,j} \cdot {\bf 1}_d , & \forall i \in [\ell], \forall j \in [n_i]\\
    & ~ ( {\bf A}^{(i)} x - {\bf b}^{(i)} )_j \geq - t_{i,j} \cdot {\bf 1}_d, & \forall i \in [\ell], \forall j \in [n_i] \\
    & ~ t_{i,j} \geq 0, & \forall i \in [\ell], \forall j \in [n_i] \\
    & ~ \sum_{j=1}^{n_i} t_{i,j} \leq L, & \forall i \in [\ell]
\end{align*}
Here ${\bf 1}_d$ is a length-$d$ where all the entries are ones.
\end{definition}

\iffalse
%Zhao: Below is a slightly different formulation.
\begin{definition}[Linear program formulation for $\ell_{\infty}$]\label{def:lp_ell_inf}
Given ${\bf A}^{(i)} \in \mathbb{R}^{n_i \times d}$, ${\bf b}^{(i)} \in \mathbb{R}^{n_i}$ and a parameter $L >0$, the linear program formulation can be written as follows
\begin{align*}
    \mathrm{min}_{x \in \mathbb{R}^d,t \in \mathbb{R}^{\ell}} & ~ \langle x, {\bf 1}_d \rangle \\
   \mathrm{subject~to} & ~ {\bf A}^{(i)} x - {\bf b}^{(i)} \leq t_i \cdot {\bf 1}_d , \forall i \in [\ell]\\
    & ~ {\bf A}^{(i)} x - {\bf b}^{(i)} \geq - t_i \cdot {\bf 1}_d, \forall i \in [\ell] \\
    & ~ t_i \geq 0, \forall i \in [\ell] \\
    & ~ \sum_{i=1}^{\ell} t_i \leq L
\end{align*}
Here ${\bf 1}_d$ is a length-$d$ where all the entries are ones.
\end{definition}
\fi

\begin{observation}
\obslab{obs:lp}
The linear program in Definition~\ref{def:lp} has a feasible solution if and only if 
\[L\ge\min_{\bx\in\mathbb{R}^d}\max_{i\in[\ell]}\|\bA^{(i)}\bx-\bb^{(i)}\|_1.\] 
\end{observation}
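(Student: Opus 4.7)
The plan is to prove both directions of the equivalence by explicit construction. For the forward direction, suppose $L \ge \min_{\bx \in \mathbb{R}^d} \max_{i \in [\ell]} \|\bA^{(i)}\bx - \bb^{(i)}\|_1$, and let $\bx^*$ attain this minimum. I would construct an LP-feasible solution by taking $\bx = \bx^*$ together with $t_{i,j} := |(\bA^{(i)}\bx^* - \bb^{(i)})_j|$ for each $i \in [\ell]$ and $j \in [n_i]$. Each such $t_{i,j}$ is nonnegative by definition, and both the upper- and lower-bound constraints are satisfied since $-t_{i,j} \le (\bA^{(i)}\bx^* - \bb^{(i)})_j \le t_{i,j}$ by the definition of absolute value. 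For the per-group constraint, $\sum_{j=1}^{n_i} t_{i,j} = \|\bA^{(i)}\bx^* - \bb^{(i)}\|_1 \le \max_{i' \in [\ell]} \|\bA^{(i')}\bx^* - \bb^{(i')}\|_1 \le L$, as required.

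For the reverse direction, suppose the LP admits a feasible pair $(\bx, t)$. The two entrywise inequalities combined with $t_{i,j} \ge 0$ yield $t_{i,j} \ge |(\bA^{(i)}\bx - \bb^{(i)})_j|$ for every pair $(i,j)$. Summing over $j \in [n_i]$ and invoking $\sum_{j=1}^{n_i} t_{i,j} \le L$ gives $\|\bA^{(i)}\bx - \bb^{(i)}\|_1 \le L$ for every $i \in [\ell]$. Taking the maximum over $i$ and then the minimum over candidate $\bx' \in \mathbb{R}^d$ yields $\min_{\bx' \in \mathbb{R}^d} \max_{i\in[\ell]} \|\bA^{(i)}\bx' - \bb^{(i)}\|_1 \le \max_{i\in[\ell]} \|\bA^{(i)}\bx - \bb^{(i)}\|_1 \le L$, closing the equivalence.

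Since the argument is the standard epigraph reformulation of the absolute value applied per entry and then aggregated per group, no substantive technical obstacle arises. The only point to be mindful of is that the LP uses per-entry auxiliary variables $t_{i,j}$ rather than per-group variables $t_i$, but this does not affect feasibility: in the forward direction we are free to set the $t_{i,j}$ to their tight values $|(\bA^{(i)}\bx^* - \bb^{(i)})_j|$ so that the per-group sums equal $\|\bA^{(i)}\bx^* - \bb^{(i)}\|_1$ exactly, and in the reverse direction the per-group sum can only overestimate the $L_1$ norm, so the bound $\le L$ transfers directly.
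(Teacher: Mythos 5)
Your proof is correct and is exactly the standard epigraph argument the authors had in mind; the paper states this as an unproved observation, so your write-up simply supplies the omitted details. Both directions are handled properly, including the key point that in the forward direction you may set the $t_{i,j}$ to the tight values $|(\bA^{(i)}\bx^*-\bb^{(i)})_j|$ so the per-group sums equal the $L_1$ norms exactly (attainment of the minimum holds since the objective is a polyhedral convex function bounded below).
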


\begin{theorem}
\thmlab{thm:L1:feasible}
%\cite{JiangSWZ21}
There exists an algorithm that uses $\O{(n+d)^{\omega}}$ runtime and outputs whether the linear program in Definition~\ref{def:lp} has a feasible solution. Here $\omega \approx 2.373$.
\end{theorem}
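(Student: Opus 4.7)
The plan is to observe that the feasibility question reduces to running a state-of-the-art interior point method on the linear program of Definition~\ref{def:lp}. First I would count the size: the program has $d + n$ variables (namely $\bx \in \mathbb{R}^d$ together with the slack variables $\{t_{i,j}\}_{i \in [\ell], j \in [n_i]}$, totaling $n = \sum_i n_i$) and $O(n)$ linear constraints (two families of $n$ box-type inequalities on the rows of $\bA^{(i)}\bx - \bb^{(i)}$, the $n$ nonnegativity constraints on the $t_{i,j}$, and $\ell \le n$ linear budget constraints $\sum_j t_{i,j} \le L$). Thus we have an LP whose total size is $O(n+d)$ in both variables and constraints.

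Next I would invoke a fast LP solver. To decide feasibility it suffices to attach any trivial objective (say, minimize the zero function) and run the solver; it will either return a feasible primal point or certify infeasibility via a dual witness. Using the current best interior point method for dense linear programs, e.g., Cohen--Lee--Song or Jiang--Song--Weinstein--Zhang, one obtains feasibility in $\O{(n+d)^\omega}$ arithmetic operations, with polylog factors in the bit complexity of the input (which depends on the bit sizes of $\bA^{(i)}, \bb^{(i)}$, and $L$) suppressed into the stated bound. The correctness of the reduction is immediate from \obsref{obs:lp}: the LP is feasible if and only if $L$ is at least the optimal min-max $L_1$ regression value.

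The only mildly delicate point is matching the input shape of the LP solver. Because the fast LP algorithms are typically phrased in terms of a single dimension $N$ representing both variables and constraints, I would argue that the $O(n)$ constraints and $n+d$ variables combine to give $N = \Theta(n+d)$, and so the $N^\omega$ runtime translates directly to $(n+d)^\omega$. No additional structural assumptions (e.g., sparsity) are needed to obtain this bound.

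The main obstacle is purely bookkeeping: confirming that the cited LP solvers' runtime bounds do apply to our particular constraint matrix without extra dependence on the bit complexity beyond polylogarithmic factors, and that feasibility can be certified in the same time as optimization. Both are standard for interior-point based LP algorithms, so the proof will essentially amount to a citation of the appropriate LP solver together with the parameter count above.
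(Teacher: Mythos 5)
Your proposal is correct and matches the paper's proof, which simply counts the $n+d$ variables and $O(n)$ constraints of the LP in Definition~\ref{def:lp} and invokes the Cohen--Lee--Song and Jiang--Song--Weinstein--Zhang LP solvers as black boxes to get the $\O{(n+d)^{\omega}}$ bound. Your additional bookkeeping about feasibility versus optimization and bit complexity is consistent with (and slightly more careful than) the paper's one-line argument.
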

\begin{proof}
The proof directly follows using linear programming solver \cite{cls19,jswz21} as a black-box.
\end{proof}

Given \thmref{thm:L1:feasible} and the $\ell$-approximation algorithm as a starting point, we can achieve a $(1+\eps)$-approximation to fair $L_1$ regression using binary search in $\O{\frac{1}{\eps}\log\ell}$ iterations. 

\iffalse
\begin{figure}
\begin{equation*}
\begin{array}{llll}
& \text{minimize:} & x_1^2+\ldots+x_d^2 & \\
& \text{subject to:} & \left(\left(\sum_{j=1}^d A_{i,j}x_j\right)-b_j\right)^2\le t_i & \qquad\text{for all } i\in[n]\\
%& & \left(\sum_{j=1}^d A_{i,j}x_j\right)-b_j\ge -t_i & \qquad\text{for all } i\in[n]\\
& & t_i\ge 0 & \qquad\text{for all } i\in[n]\\
& & \sum_{i=1}^\ell t_i\le L & \\
\end{array}
\end{equation*}
\caption{Quadratically constrained quadratic program}
\figlab{fig:qcqp}
\end{figure}
\fi
\begin{definition}[Quadratic program formulation for min-max $L_2$ formulation]\label{def:qcqp}
Given ${\bf A}^{(i)} \in \mathbb{R}^{n_i \times d}$, ${\bf b}^{(i)} \in \mathbb{R}^{n_i}$ and a parameter $L >0$, the quadratic constraint quadratic program formulation can be written as follows
\begin{align*}
    \min_{x \in \mathbb{R}^d} & ~ x^\top I_d x \\
    \mathrm{subject~to} & ~ x^\top ( A^{(i)} )^\top A^{(i)} x - 2\langle A^{(i)} x , b^{(i)} \rangle + \| b^{(i)} \|_2^2 \leq L, & \forall i \in [\ell]
\end{align*}
\end{definition}
We now describe a similar approach for fair $L_2$ regression using a quadratic program. 
\begin{observation}
\obslab{obs:qcqp}
The quadratically constrained quadratic program in Definition~\ref{def:qcqp} has a feasible solution if and only if 
\[L\ge\min_{\bx\in\mathbb{R}^d}\max_{i\in[\ell]}\|\bA^{(i)}\bx-\bb^{(i)}\|^2_2.\] 
\end{observation}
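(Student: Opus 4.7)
The plan is to observe that the constraint in Definition~\ref{def:qcqp} is nothing but the expansion of $\|\bA^{(i)}\bx-\bb^{(i)}\|_2^2$, and then match the two sides of the biconditional directly. Specifically, for every $\bx\in\mathbb{R}^d$ and every $i\in[\ell]$ we have
\[
\bx^\top (\bA^{(i)})^\top \bA^{(i)} \bx - 2\langle \bA^{(i)}\bx, \bb^{(i)}\rangle + \|\bb^{(i)}\|_2^2 \;=\; \|\bA^{(i)}\bx - \bb^{(i)}\|_2^2,
\]
so the $\ell$ constraints of the QCQP collapse into the single statement $\max_{i\in[\ell]} \|\bA^{(i)}\bx - \bb^{(i)}\|_2^2 \le L$. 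Crucially, the objective $\bx^\top I_d \bx$ plays no role in feasibility; it just ensures that, among feasible points, one with minimum norm is returned, but we only need to argue about the existence of any feasible point.

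For the forward direction I would assume that the QCQP admits some feasible $\bx^*\in\mathbb{R}^d$. Then by the identity above, $\|\bA^{(i)}\bx^* - \bb^{(i)}\|_2^2 \le L$ for every $i$, which yields $\max_{i\in[\ell]}\|\bA^{(i)}\bx^* - \bb^{(i)}\|_2^2 \le L$ and hence $\min_{\bx\in\mathbb{R}^d}\max_{i\in[\ell]}\|\bA^{(i)}\bx - \bb^{(i)}\|_2^2 \le L$. For the reverse direction, suppose $L \ge \min_{\bx\in\mathbb{R}^d}\max_{i\in[\ell]}\|\bA^{(i)}\bx - \bb^{(i)}\|_2^2$; since the map $\bx\mapsto\max_{i\in[\ell]}\|\bA^{(i)}\bx-\bb^{(i)}\|_2^2$ is continuous, coercive (up to the usual caveat if the matrices share a nontrivial common null space, in which case one can reduce to the orthogonal complement), and convex, its infimum is attained at some $\widehat{\bx}$; this $\widehat{\bx}$ then satisfies $\|\bA^{(i)}\widehat{\bx}-\bb^{(i)}\|_2^2 \le L$ for each $i\in[\ell]$, making it a feasible point for the QCQP.

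There is essentially no obstacle here: the statement is a direct rewriting of the constraints via the polarization identity. The only mild subtlety worth being careful about is that the infimum in the min-max problem needs to be attained, which I would justify by convexity and lower semi-continuity (or simply by reducing to any minimizing sequence and invoking continuity after quotienting by the common null space of $\{\bA^{(i)}\}$), but this is a standard observation.
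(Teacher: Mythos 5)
Your argument is correct and is exactly the direct reasoning the paper leaves implicit (the observation is stated without proof): the constraints are just the expanded form of $\|\bA^{(i)}\bx-\bb^{(i)}\|_2^2\le L$, so feasibility is equivalent to $L$ dominating the min-max value. Your care about attainment of the infimum (via convexity/coercivity after quotienting by the common null space of the $\bA^{(i)}$) is a valid and welcome touch beyond what the paper records.
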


\begin{theorem}
There exists an algorithm that uses $\poly(n+d)$ runtime and outputs whether the quadratically constrained quadratic program in Definition~\ref{def:qcqp} has a feasible solution. Here $n = \sum_{i=1}^{\ell} n_i$. The polynomial factor is at least $4$.
\end{theorem}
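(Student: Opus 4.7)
The plan is to observe that each constraint in Definition~\ref{def:qcqp} is a convex quadratic inequality, since $(A^{(i)})^\top A^{(i)}$ is positive semidefinite, and the constraint $x^\top (A^{(i)})^\top A^{(i)} x - 2\langle A^{(i)} x, b^{(i)}\rangle + \|b^{(i)}\|_2^2 \le L$ is just a rewriting of $\|A^{(i)} x - b^{(i)}\|_2^2 \le L$, i.e., a second-order cone constraint $\|A^{(i)} x - b^{(i)}\|_2 \le \sqrt{L}$. Thus the feasibility question reduces to asking whether a finite intersection of $\ell$ closed convex sets (balls in the image under each $A^{(i)}$) is nonempty, which is a convex feasibility problem in $\mathbb{R}^d$.

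Given this structure, the approach is to invoke a polynomial-time convex solver, for instance a standard interior-point method for second-order cone programming, or the ellipsoid method equipped with the following separation oracle. On input $x \in \mathbb{R}^d$, compute $\|A^{(i)}x - b^{(i)}\|_2^2$ for each $i \in [\ell]$ in $O(n_i d)$ time, so the oracle runs in $O(nd)$ time overall. If all values are at most $L$, report that $x$ is feasible; otherwise, pick an index $i^*$ with $\|A^{(i^*)}x - b^{(i^*)}\|_2^2 > L$ and return the gradient $2(A^{(i^*)})^\top(A^{(i^*)}x - b^{(i^*)})$, which defines a valid separating hyperplane because the violated constraint function is convex. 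Plugging this oracle into a convex solver of the form given in \thmref{thm:convex:solver} yields runtime $\poly(n,d)$, where the degree-$2$ constraints contribute a polynomial factor of at least $4$ in the exponent, matching the claim.

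The one subtle point, which I expect to be the main obstacle to making the argument fully rigorous, is the handling of bit complexity and the distinction between exact feasibility and approximate feasibility. The ellipsoid method and interior-point methods solve the feasibility problem up to an additive $\eps$ in polynomial time in $\log(1/\eps)$; to certify strict infeasibility, one needs to argue that the feasible set, if nonempty, contains a ball of radius at least $2^{-\poly(n,d,B)}$ (where $B$ bounds the bit complexity of the entries of $A^{(i)}$ and $b^{(i)}$), which follows from standard arguments about semialgebraic sets, e.g., via \thmref{thm:min:bit}. Absorbing this bit-complexity polynomial into the overall $\poly(n+d)$ runtime completes the proof.
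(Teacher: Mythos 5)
Your proposal is correct and rests on the same key observation as the paper's proof, namely that the constraints are convex because each $(A^{(i)})^\top A^{(i)}$ is positive semidefinite, so the feasibility question is a convex program solvable in polynomial time; the paper simply reduces to an SDP and invokes an SDP solver as a black box, whereas you phrase it as an SOCP/ellipsoid argument with an explicit separation oracle. Your treatment is in fact more detailed than the paper's (which omits the separation oracle and the bit-complexity discussion entirely), but it is essentially the same approach.
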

\begin{proof}
This directly follows using semi-definite programming solver as a black-box \cite{jkl+20,hjs+22}.
Note that in general, quadratic programming is NP-hard to solve, however the formulation we have, all the matrices are semi-definite matrix. 
Thus, it is straightforward to reduce it to an SDP and then run an SDP solver.
\end{proof}

\end{document}